\documentclass{article}

\usepackage[preprint]{main}

\usepackage[utf8]{inputenc} % allow utf-8 input
\usepackage[T1]{fontenc}    % use 8-bit T1 fonts
\usepackage{hyperref}       % hyperlinks
\usepackage{url}            % simple URL typesetting
\usepackage{booktabs}       % professional-quality tables
\usepackage{amsfonts}       % blackboard math symbols
\usepackage{nicefrac}       % compact symbols for 1/2, etc.
\usepackage{microtype}      % microtypography
\usepackage{xcolor}         % colors

\usepackage{graphicx}
\usepackage{subcaption}

\usepackage{wrapfig}

\usepackage{amsmath} 
\usepackage{algorithm}
\usepackage{algorithmic}

\usepackage{amsmath}
\usepackage{amssymb}
\usepackage{mathtools}
\usepackage{amsthm}

\theoremstyle{plain}
\newtheorem{theorem}{Theorem}[section]
\newtheorem{proposition}[theorem]{Proposition}
\newtheorem{lemma}[theorem]{Lemma}
\newtheorem{corollary}[theorem]{Corollary}
\theoremstyle{definition}
\newtheorem{definition}[theorem]{Definition}
\newtheorem{assumption}[theorem]{Assumption}
\theoremstyle{remark}
\newtheorem{remark}[theorem]{Remark}

\newtheorem{example}[theorem]{Example}

\newcommand{\R}{\mathbb{R}}
\newcommand{\E}{\mathbb{E}}

\newcommand{\cS}{\mathcal{S}}
\newcommand{\cA}{\mathcal{A}}
\newcommand{\cX}{\mathcal{X}}
\newcommand{\cG}{\mathcal{G}}

\newcommand{\cV}{\mathcal{V}}
\newcommand{\cE}{\mathcal{E}}

\newcommand{\cN}{\mathcal{N}}
\newcommand{\cL}{\mathcal{L}}

\newcommand{\ip}[2]{\left\langle #1,#2\right\rangle}
\newcommand{\norm}[1]{\left\|#1\right\|}

\newcommand{\Circ}{\mathbin{\circ}}

\newcommand{\diag}{\operatorname{diag}}
\newcommand{\Proj}{\operatorname{Proj}}
\newcommand{\Retr}{\operatorname{Retr}}
\newcommand{\dist}{\operatorname{Dist}}

\newcommand{\Gap}{\operatorname{Gap}}

\usepackage{graphics}

\title{
Metric-Gradient Projection for Stable Multi-Agent Policy Learning}

\author{%
    Zuyuan Zhang\\
    The George Washington University\\
    \texttt{zuyuan.zhang@gwu.edu}\\
    \And
    Sizhe Tang\\
    The George Washington University\\
    \texttt{s.tang1@gwu.edu}\\
    \And
    Mahdi Imani\\
    Northeastern University\\
    \texttt{m.imani@northeastern.edu}
    \And
    Tian Lan\\
    The George Washington University\\
    \texttt{tlan@gwu.edu}
}

\begin{document}

\maketitle

\begin{abstract}
General-sum multi-agent learning is often governed by a stacked update field in which each agent's policy update changes the optimization landscape faced by the others. This coupling can entangle an integrable component of collective improvement with cyclic interaction dynamics, leading to slow or unstable multi-agent learning. Existing approaches, such as regularization, credit assignment, and consensus methods, stabilize MARL through local or algorithmic modifications; HPML complements them by projecting the joint update field onto a metric-gradient component. We introduce \textbf{HPML} (\textbf{H}odge-\textbf{P}rojected \textbf{M}ulti-agent \textbf{L}earning), which views the joint update field of a multi-agent system as an element of an $L^2$ space of vector fields and computes a Hodge-type projection onto the closest metric-gradient potential flow. HPML follows the projected component as the update direction, yielding the closest metric-gradient field under the chosen metric and sampling measure. The projection is defined variationally, characterized by a Poisson-type equation, and implemented through graph-based and amortized neural realizations that recover projected directions from samples. We show that the projected dynamics admit a Lyapunov potential and yield equilibrium-gap bounds with an explicit additive non-potentiality term. Controlled experiments validate the geometric mechanism, and CTDE benchmarks show improved stability and normalized return when HPML is used as a plug-in projection layer in MARL pipelines.
\end{abstract}

\section{Introduction}
\label{sec:intro}

Reinforcement learning (RL) is a standard framework for sequential decision-making, and its multi-agent extension addresses networked control, robotic coordination, and strategic interaction \citep{puterman2014markov,sutton1998reinforcement,shapley1953stochastic,littman1994markov,bucsoniu2010multi,zhang2025learning,qiao2024br,zhang2025network,zhang2024modeling}. 
In single-agent RL, optimization is typically organized around a scalar objective, which supports Lyapunov-style analysis under standard smoothness conditions \citep{puterman2014markov,bertsekas1997nonlinear}. 
General-sum multi-agent learning is driven by a stacked update field that may contain non-potential components and cyclic interaction dynamics even near stationary points \citep{balduzzi2018mechanics,monderer1996potential,zhang2025lipschitz,zhang2026geometry}.

This issue is particularly visible in the centralized-training/decentralized-execution (CTDE) paradigm. PPO-style MARL methods produce decentralized actor updates from centralized critics or advantages~\citep{schulman2015trust,schulman2017proximal,schulman2015high,yu2022surprising}, and actor-critic or value-factorization baselines such as MADDPG, COMA, QMIX, and VDN induce coupled evaluations and updates across agents~\citep{lowe2017multi,foerster2018counterfactual,rashid2020monotonic,sunehag2017value}. Since each agent experiences non-stationary dynamics due to the concurrent learning and
policy updates of other agents, the stacked actor-update field can contain asymmetric cross-agent response terms. Even when each local update is well estimated, the stacked field can contain a non-potential component that carries closed-loop circulation. Local Jacobian antisymmetry diagnoses this effect pointwise; HPML instead targets a global projected update over the region visited by learning ~\citep{balduzzi2018mechanics,mescheder2017numerics,gidel2018variational}. Such cyclic components can slow learning, destabilize discrete-time updates, and drive recurrent policy profiles or failed coordination patterns~\citep{DBLP:conf/nips/ZinkevichGL05,balduzzi2018mechanics,monderer1996potential}.

We propose \textbf{HPML}, a Hodge-inspired projection layer for multi-agent update fields. HPML views the supplied joint update field as an element of an $L^2(M,\mu)$ space of vector fields and projects it onto the closest metric-gradient potential flow. The orthogonal residual is the non-potential component left unexplained by any metric-gradient flow, and it carries closed-loop circulation~\citep{hodge1989theory,eckmann1944harmonische,zhang2024modeling}. The projected field can be used as a plug-in update direction in MARL actor-update pipelines. This gives a geometric intervention complementary to regularization, credit assignment, consensus, extra-gradient, Mirror-Prox, optimistic, and momentum-based methods \citep{korpelevich1977extragradient,nemirovski2004prox,popov1980modification,mertikopoulos2018optimistic,mescheder2017numerics,gidel2018variational,daskalakis2017training}.

The projection admits a Poisson-type characterization, a graph-Laplacian realization from samples~\citep{chung1997spectral,von2007tutorial,spielman2010algorithms,hawrylycz2012discrete,lim2020hodge,zhang2026cochain,zhang2026structuring,zhang2026operator}, and an amortized neural realization that recovers projected directions by automatic differentiation. 
The projected dynamics admit a Lyapunov potential, and the residual non-potential component contributes an explicit additive term to the Variational Inequality (VI) gap \citep{facchinei2003finite,nemirovski2004prox}. In the CTDE paradigm, this residual measures the obstruction to a joint greedy-improvement interpretation. The main contributions are a global potential-flow projection for MARL, continuous/discrete/neural realizations, VI-gap guarantees with explicit residual terms, and a CTDE integration compatible with PPO/MAPPO-style actor-update pipelines. Controlled and Melting Pot experiments show improved stability and final normalized return.

\section{Preliminaries}
\label{sec:prelim}

This section provides the policy-space operator notation used throughout the paper.
We define the feasible policy set, the VI-based equilibrium metric, and a local Jacobian-based integrability diagnostic that motivates the global projection developed later.
Additional remarks on operator choice, unconstrained parameterizations, VI variants, and the canonical bilinear example are deferred to Appendix~\ref{app:prelim_notes}; see in particular Remarks~\ref{rem:operator_app}, \ref{rem:param_app}, \ref{rem:minty_app}, and Example~\ref{ex:bilinear_app}.

\subsection{Markov Games and Policy Space}
\label{subsec:mg}

We consider a discounted finite Markov game with $n$ agents
$
\cG = \big(\cS,\{\cA_i\}_{i=1}^n,P,\{r_i\}_{i=1}^n,\gamma\big),
\qquad \gamma\in(0,1),
$
where $\cS$ is a finite state space, $\cA_i$ is agent $i$'s finite action set, $P(\cdot\mid s,a)$ is the transition kernel, and $r_i(s,a)$ is agent $i$'s per-stage reward.
A stationary policy for agent $i$ is a mapping $\pi_i(\cdot\mid s)\in\Delta(\cA_i)$, and we write $\pi=(\pi_1,\ldots,\pi_n)$ for a joint policy.
Let
$
\Pi := \prod_{i=1}^n \prod_{s\in\cS}\Delta(\cA_i)
$
denote the stationary-policy set, viewed through the standard coordinate embedding as a compact convex set $\cX\subset\R^d$.
Each agent's discounted return under $\pi\in\Pi$ is
$
J_i(\pi)
=
\E_{\pi}\Big[\sum_{t\ge 0}\gamma^t r_i(s_t,a_t)\Big].
$

We write $F:\cX\to\R^d$ for a continuous joint field on policy space.
A canonical game-theoretic choice is the negative pseudo-gradient operator
$
F(\pi)
:=
\big(-\nabla_{\pi_1}J_1(\pi),\ldots,-\nabla_{\pi_n}J_n(\pi)\big),
$
which is the operator used in the VI formulation below.
More generally, HPML can be applied to any continuous joint update field defined on $\cX$.
To avoid sign ambiguity, we distinguish the VI operator $F_{\mathrm{VI}}$ from an ascent/update field $U$.
For the canonical game operator above, $U=-F_{\mathrm{VI}}=(\nabla_{\pi_i}J_i)_i$.
The projection layer is applied to the supplied update field $U$; if one instead starts from a VI operator, the projected update direction is obtained by applying HPML to $-F_{\mathrm{VI}}$.
In general-sum games, such a field need not be the gradient of any single scalar potential on $\cX$; this lack of global integrability is the source of the cyclic behavior that HPML targets.
See Remark~\ref{rem:operator_app} and Remark~\ref{rem:param_app} for the distinction between the equilibrium operator and implementation-level parameter updates.

\subsection{Variational Inequality and Duality Gap}
\label{subsec:vi}

Many first-order equilibrium notions for differentiable games can be expressed as a variational inequality (VI).
Let $\cX\subset\R^d$ be nonempty, closed, and convex, and let $F:\cX\to\R^d$.
Unless otherwise stated, $\ip{u}{v}:=u^\top v$ denotes the Euclidean inner product.

\begin{definition}[Stampacchia VI]
\label{def:vi}
The variational inequality $\mathrm{VI}(\cX,F)$ is to find $x^\star\in\cX$ such that
$
\ip{F(x^\star)}{x-x^\star}\ge 0,
\qquad \forall x\in\cX.
$
\end{definition}

When $x$ parametrizes a joint policy $\pi$ and
$
F(\pi)=(-\nabla_{\pi_i}J_i(\pi))_{i=1}^n,
$
a solution of $\mathrm{VI}(\cX,F)$ satisfies the standard first-order Nash condition for a differentiable game.
It is generally necessary for Nash equilibrium, and it becomes sufficient under additional structure such as playerwise concavity.

\begin{definition}[Duality gap (Stampacchia gap)]
\label{def:gap}
For $\bar x\in\cX$, define
$
\Gap(\bar x)
:=
\max_{x\in\cX}\ip{F(\bar x)}{\bar x-x}.
$
\end{definition}

We use $\Gap(\bar x)$ as the equilibrium-quality metric throughout.
By construction $\Gap(\bar x)\ge 0$, and $\Gap(\bar x)\le \varepsilon$ is equivalent to
$
\ip{F(\bar x)}{x-\bar x}\ge -\varepsilon,
\qquad \forall x\in\cX.
$

\begin{lemma}[Zero gap implies a VI solution]
\label{lem:gap_zero_vi}
If $\Gap(\bar x)=0$, then $\bar x$ solves $\mathrm{VI}(\cX,F)$ in the sense of Definition~\ref{def:vi}.
\end{lemma}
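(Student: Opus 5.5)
The plan is to unfold Definition~\ref{def:gap} and read off the variational inequality of Definition~\ref{def:vi} directly; no earlier result beyond these two definitions is needed. Fix $\bar x\in\cX$ with $\Gap(\bar x)=0$. By definition,
\[
0=\Gap(\bar x)=\max_{x\in\cX}\ip{F(\bar x)}{\bar x-x},
\]
so every $x\in\cX$ satisfies $\ip{F(\bar x)}{\bar x-x}\le 0$.

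Negating this inequality gives $\ip{F(\bar x)}{x-\bar x}\ge 0$ for all $x\in\cX$. Together with $\bar x\in\cX$, this is exactly the condition defining $x^\star=\bar x$ as a solution of $\mathrm{VI}(\cX,F)$. Equivalently, one can invoke the stated equivalence $\Gap(\bar x)\le\varepsilon \iff \ip{F(\bar x)}{x-\bar x}\ge-\varepsilon$ for all $x\in\cX$, applied with $\varepsilon=0$.

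The only point needing care is that the maximum in the gap is well defined. We need not rely on compactness here: all that is used is that the value $0$ bounds every term in the family from above. If $\cX$ is merely closed rather than compact, we read the $\max$ as a $\sup$, and the argument is unchanged. There is no substantive obstacle.
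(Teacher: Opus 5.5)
Your proposal is correct and follows the same route as the paper: unfold the definition of $\Gap(\bar x)$, bound each term $\ip{F(\bar x)}{\bar x-x}$ above by $\Gap(\bar x)=0$, and negate to obtain the Stampacchia condition $\ip{F(\bar x)}{x-\bar x}\ge 0$ for all $x\in\cX$. The paper additionally checks $\Gap(\bar x)\ge 0$ by taking $x=\bar x$, which the implication does not need; your remark that the argument works equally with $\sup$ in place of $\max$ is a harmless extra.
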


See Remark~\ref{rem:minty_app} and Remark~\ref{rem:monotone_gap_app} for the relation to the Minty gap and to monotonicity-based merit-function interpretations.

\subsection{Jacobian Symmetry and Local Integrability}
\label{subsec:jac_split}
Assume $F$ is differentiable on $\mathrm{ri}(\cX)$, and let $J(x):=\nabla F(x)\in\R^{d\times d}$.
We decompose
\[
J(x)=S(x)+A(x),
\qquad
S(x)=\tfrac12\big(J(x)+J(x)^\top\big),
\qquad
A(x)=\tfrac12\big(J(x)-J(x)^\top\big).
\]
The symmetric part $S(x)$ is the locally potential-like component of the linearization, while the antisymmetric part $A(x)$ is a local obstruction to integrability and a diagnostic for cyclic interaction dynamics.

\begin{definition}[Antisymmetric energy]
\label{def:rot_energy}
The antisymmetric energy at $x$ is
$
\mathcal R(x):=\norm{A(x)}_F^2,
$
where $\|\cdot\|_F$ denotes the Frobenius norm.
\end{definition}

\begin{proposition}[Gradient fields have symmetric Jacobians; a converse on open sets]
\label{prop:grad_symm}
(i) If there exists a twice-differentiable potential $\phi$ such that $F=\nabla\phi$ on an open set containing $\cX$, then $J(x)=\nabla^2\phi(x)$ is symmetric for all $x$ in that open set; hence $A(x)\equiv 0$ and $\mathcal R(x)\equiv 0$ on $\cX$.

(ii) Conversely, if $U\subset\R^d$ is open and simply connected, $F\in C^1(U)$, and $A(x)\equiv 0$ for all $x\in U$, then there exists a scalar potential $\phi$ on $U$ such that $F=\nabla\phi$ (unique up to an additive constant).
\end{proposition}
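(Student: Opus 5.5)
For part (i), the plan is to compute directly. If $F=\nabla\phi$ on an open set $O\supseteq\cX$ with $\phi$ twice differentiable, then $J(x)=\nabla F(x)=\nabla^2\phi(x)$ for every $x\in O$. The Hessian of a twice-differentiable function is symmetric: this is Schwarz's theorem when $\phi\in C^2$. For mere twice (Fréchet) differentiability, symmetry of the second derivative still holds by the standard argument via the second-order difference quotient
\[
\Delta(s,t)=\phi(x+se_i+te_j)-\phi(x+se_i)-\phi(x+te_j)+\phi(x),
\]
which is symmetric in the roles of $i,j$ and whose limit $\Delta(t,t)/t^2$ equals both mixed partials. Hence $J=J^\top$ on $O$, so $A(x)=\tfrac12(J-J^\top)=0$ and $\mathcal R(x)=\norm{A(x)}_F^2=0$, in particular on $\cX$.

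For part (ii), the approach is the classical Poincaré lemma for closed $1$-forms on simply connected domains. Let $\omega=\sum_k F_k\,dx_k$. The condition $A\equiv0$ says $\partial_j F_k=\partial_k F_j$ on $U$, that is, $d\omega=0$. The steps are as follows.

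\textbf{Step 1.} Fix $x_0\in U$. Show that for any two piecewise-$C^1$ paths $\gamma_0,\gamma_1$ in $U$ from $x_0$ to $x$, the line integrals of $F$ agree. Simple connectivity gives a homotopy $H:[0,1]^2\to U$ with fixed endpoints between them. Since the homotopy is only continuous, I would first replace it by a piecewise-linear (or smooth) homotopy. To do this, subdivide $[0,1]^2$ finely enough that each small square maps into a ball contained in $U$, using uniform continuity and a Lebesgue number argument. On each ball, $F$ has a potential by the star-shaped Poincaré lemma, namely $\phi_B(y)=\int_0^1\ip{F(c+t(y-c))}{y-c}\,dt$, which is differentiated under the integral using $\partial_jF_k=\partial_kF_j$. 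Consequently the integral around each small loop vanishes, and summing over squares shows that the integrals along $\gamma_0$ and $\gamma_1$ coincide.

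\textbf{Step 2.} Define $\phi(x)=\int_{\gamma}\ip{F}{d\gamma}$ along any path from $x_0$ to $x$; this is well defined by Step 1, and $U$ is path-connected since it is simply connected. To compute $\nabla\phi(x)$, take a ball $B(x,r)\subset U$ and append the segment $[x,x+he_k]$. Then
\[
\phi(x+he_k)-\phi(x)=\int_0^1 F_k(x+the_k)\,h\,dt,
\]
and continuity of $F$ gives $\partial_k\phi(x)=F_k(x)$. So $F=\nabla\phi$, and $\phi\in C^2$ because $F\in C^1$.

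\textbf{Step 3.} For uniqueness, if $\nabla\phi=\nabla\psi$ on $U$, then $\nabla(\phi-\psi)=0$ on the connected open set $U$. Hence $\phi-\psi$ is constant, by the mean value theorem along polygonal paths.

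The main obstacle is Step 1, the homotopy-invariance of line integrals under a merely continuous homotopy. The local ball potentials plus the Lebesgue-number subdivision handle it. Alternatively, one can cite the standard theorem that closed forms on simply connected domains are exact.
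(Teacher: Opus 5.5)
Your proof is correct, and its overall architecture is the paper's. You define $\phi$ by line integrals from a base point, obtain path independence from simple connectivity, compute $\partial_k\phi$ by appending a coordinate segment, and get uniqueness from $\nabla(\phi-\psi)=0$ on a connected set.

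The genuine difference is in the path-independence step. The paper contracts a loop through a homotopy $H$, differentiates $I(s)=\int_0^1\ip{F(H(s,t))}{\partial_tH(s,t)}\,dt$ in $s$, and integrates by parts to get $I'(s)=\int_0^1\ip{(J-J^\top)\partial_sH}{\partial_tH}\,dt=0$. This is short, and it shows that the antisymmetric part $A$ is exactly what governs how circulation changes under deformation, which suits the paper's narrative. However, the paper simply asserts that simple connectivity provides a piecewise $C^1$ homotopy. The computation actually uses $\partial_{st}H$ and differentiation under the integral sign, which needs more regularity than the merely continuous homotopy one is given.

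Your local-potential argument (star-shaped Poincar\'e lemma on balls plus a Lebesgue-number subdivision of $[0,1]^2$) works with a continuous homotopy. It therefore avoids that unproved smoothing step, at the cost of some bookkeeping. When you write it out, note one detail. After the piecewise-linear replacement, the bottom and top edges map to polygons inscribed in $\gamma_0$ and $\gamma_1$, not to the curves themselves. You need the one-line observation that each arc of $\gamma_i$ between consecutive subdivision points lies in a ball where the local potential makes the arc integral equal to the chord integral.

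For (i), the paper silently strengthens ``twice-differentiable'' to $C^2$ and cites Schwarz--Clairaut. Your difference-quotient argument instead covers twice Fr\'echet differentiable $\phi$, which is the statement as written.
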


Proposition~\ref{prop:grad_symm} shows that antisymmetry in $J(x)$ is a local obstruction to integrability.
HPML will later replace this local diagnostic by a global $L^2$ projection at the vector-field level.
See Example~\ref{ex:bilinear_app} for the canonical bilinear cyclic-interaction example and Remark~\ref{rem:localglobal_app} for the local/global distinction.

\section{Hodge-Projected Multi-Agent Learning}
\label{sec:hpml}

This section defines HPML as a metric-aware least-squares projection of the joint update field onto the class of potential flows.
The main objects used later are the projected field, its orthogonality property, and the induced non-potential residual.
A Poisson-type characterization, circulation interpretation, and implementation-level details are deferred to Appendix~\ref{app:hpml_variational}; see Lemma~\ref{lem:adjoint}, Proposition~\ref{prop:poisson}, Proposition~\ref{prop:circulation_residual}, and Remark~\ref{rem:variable_metric}.

\subsection{Metric-Gradient Projection}
\label{subsec:closest_grad}
Let $\cX\subset\R^d$ be a nonempty closed convex set.
Throughout Sections~\ref{sec:hpml}--\ref{sec:algs}, $F$ denotes the supplied update field to be projected.
All continuous projection statements are understood in coordinates on the affine hull of $\cX$; for product-simplex policy spaces this can be implemented by dropping redundant simplex coordinates or by working in an unconstrained parameterization such as logits.
Fix a symmetric positive definite matrix $M\succ 0$ and a reference probability measure $\mu$ on $\cX$.
For vectors $u,v\in\R^d$, define
$
\langle u,v\rangle_M := u^\top M v,
\qquad
\|u\|_M := \sqrt{u^\top M u}.
$
For vector fields $U,V\in L^2(\cX,\mu;\R^d)$, define
$
\langle U,V\rangle_{L^2(M,\mu)}
:=
\E_{x\sim\mu}\big[\langle U(x),V(x)\rangle_M\big].
$

For a scalar potential $\Phi:\cX\to\R$, the metric gradient is
$
\nabla_M \Phi(x) := M^{-1}\nabla \Phi(x).
$

\begin{definition}[Metric-gradient least-squares projection]
\label{def:proj}
Let $F:\cX\to\R^d$ be square-integrable under $\mu$.
Define
$
\cG_{\mathrm{grad}}^M
:=
\{\nabla_M\Phi:\ \Phi\in H^1(\cX)\},
$
and define $\Pi_{\mathrm{grad}}^{M}F$ as the minimizer-induced field of
$
\inf_{\Phi\in H^1(\cX)}
\E_{x\sim\mu}\big[\|F(x)-\nabla_M\Phi(x)\|_M^2\big].
$
If $\Phi^\star$ is any minimizer, the projected field is
$
\Pi_{\mathrm{grad}}^{M}F := \nabla_M\Phi^\star,
$
which is unique $\mu$-a.e., while $\Phi^\star$ itself is defined up to an additive constant.
\end{definition}

Definition~\ref{def:proj} is the core object of HPML: it replaces the original joint field by its closest potential-flow approximation under the geometry induced by $M$ and the sampling distribution $\mu$.
We assume the continuous variational problem admits a minimizer; otherwise, $\Pi_{\mathrm{grad}}^M F$ can be interpreted as the projection onto the $L^2(M,\mu)$-closure of metric-gradient fields.
The finite graph and neural implementations used later are finite-dimensional approximations of this same objective.
A weak Poisson-type characterization of the same projection is given in Proposition~\ref{prop:poisson}.

\subsection{Orthogonality and Residual Energy}
\label{subsec:residual}

Let
$
G^\star := \Pi_{\mathrm{grad}}^{M}F,
\qquad
R := F-G^\star.
$
The projected field and residual satisfy the following variational orthogonality property.

\begin{lemma}[Normal equation / orthogonality]
\label{lem:orth}
For every test potential $\Phi\in H^1(\cX)$,
$
\E_{x\sim\mu}\big[\langle R(x),\nabla_M\Phi(x)\rangle_M\big]=0.
$
Equivalently, $R$ is $L^2(M,\mu)$-orthogonal to all metric-gradient fields.
\end{lemma}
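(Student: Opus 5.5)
The plan is the standard first-variation argument for a least-squares minimizer. Let $\Phi^\star$ be a minimizer in Definition~\ref{def:proj}, so that $G^\star=\nabla_M\Phi^\star$ and $R=F-\nabla_M\Phi^\star$. Write
\[
\mathcal J(\Phi):=\E_{x\sim\mu}\big[\|F(x)-\nabla_M\Phi(x)\|_M^2\big].
\]
Fix an arbitrary test potential $\Phi\in H^1(\cX)$ and consider the one-parameter family $\Phi^\star+t\Phi$, $t\in\R$. Since $H^1(\cX)$ is a vector space, every member of this family is admissible. Because $\nabla_M$ is linear, $\nabla_M(\Phi^\star+t\Phi)=G^\star+t\nabla_M\Phi$.

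Next expand the objective as a quadratic in $t$ using bilinearity of $\langle\cdot,\cdot\rangle_M$:
\[
g(t):=\mathcal J(\Phi^\star+t\Phi)=\E_\mu\|R\|_M^2-2t\,\E_\mu\langle R,\nabla_M\Phi\rangle_M+t^2\,\E_\mu\|\nabla_M\Phi\|_M^2 .
\]
All three expectations must be finite. This follows from square-integrability of $F$ under $\mu$ and from $\nabla\Phi,\nabla\Phi^\star\in L^2(\mu)$, using $\|M^{-1}v\|_M^2=v^\top M^{-1}v\le\lambda_{\min}(M)^{-1}|v|^2$ and Cauchy--Schwarz in $L^2(M,\mu)$.

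The one genuine subtlety, and the step I expect to need care, is that $H^1(\cX)$ controls gradients in Lebesgue measure rather than in $\mu$. I will handle it by reading the admissible class as those $\Phi\in H^1(\cX)$ with $\nabla\Phi\in L^2(\mu)$, which is implicit in the finiteness of the objective in Definition~\ref{def:proj}. Equivalently, if $\mu$ has a bounded density with respect to Lebesgue measure, the $H^1$ bound transfers directly.

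Minimality of $\Phi^\star$ gives $g(t)\ge g(0)$ for all $t$. A real quadratic $a-2bt+ct^2$ with $c\ge0$ that is minimized at $t=0$ must have $g'(0)=-2b=0$. Concretely, if $b\neq0$, then taking $t$ small with the sign of $b$ makes $g(t)<g(0)$. Hence $\E_\mu\langle R,\nabla_M\Phi\rangle_M=0$ for every admissible $\Phi$, which is the claim.

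The equivalent formulation follows at once from the definition of $\langle\cdot,\cdot\rangle_{L^2(M,\mu)}$: the left-hand side is exactly $\langle R,\nabla_M\Phi\rangle_{L^2(M,\mu)}$, and $\cG^M_{\mathrm{grad}}$ consists precisely of the fields $\nabla_M\Phi$. Orthogonality also extends to the $L^2(M,\mu)$-closure of $\cG^M_{\mathrm{grad}}$ by continuity of the inner product. This covers the closure interpretation mentioned after Definition~\ref{def:proj}, where the result is just the Hilbert-space projection theorem.
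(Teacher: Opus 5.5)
Your proposal is correct and follows essentially the same first-variation argument as the paper: you perturb the minimizer $\Phi^\star$ along an arbitrary test potential, expand the objective as a quadratic in the perturbation parameter, and read off the vanishing linear coefficient. Two points go beyond what the paper's proof states explicitly: requiring $\nabla\Phi\in L^2(\mu)$ so that the expectations are finite, and extending orthogonality to the $L^2(M,\mu)$-closure.
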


\begin{corollary}[Pythagorean identity]
\label{cor:pythag}
With $G^\star$ and $R$ defined above,
$
\E_{x\sim\mu}\big[\|F(x)\|_M^2\big]
=
\E_{x\sim\mu}\big[\|G^\star(x)\|_M^2\big]
+
\E_{x\sim\mu}\big[\|R(x)\|_M^2\big].
$
\end{corollary}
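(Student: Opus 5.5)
The plan is to expand $F = G^\star + R$ inside the weighted norm and use Lemma~\ref{lem:orth} to kill the cross term. Pointwise, bilinearity and symmetry of $\langle\cdot,\cdot\rangle_M$ (which hold because $M$ is symmetric) give
\[
\|F(x)\|_M^2 = \|G^\star(x)\|_M^2 + 2\langle R(x),G^\star(x)\rangle_M + \|R(x)\|_M^2 .
\]
Taking expectations under $\mu$ then reduces the identity to showing that $\E_{x\sim\mu}[\langle R(x),G^\star(x)\rangle_M]=0$.

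For the cross term, I use that $G^\star=\nabla_M\Phi^\star$ for a minimizer $\Phi^\star\in H^1(\cX)$ from Definition~\ref{def:proj}. Hence $\Phi^\star$ is an admissible test potential in Lemma~\ref{lem:orth}, and the lemma gives exactly $\E_{x\sim\mu}[\langle R(x),\nabla_M\Phi^\star(x)\rangle_M]=0$. The choice of minimizer $\Phi^\star$ does not matter, since $G^\star$ is unique $\mu$-a.e.

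The only point needing care is integrability, so that splitting the expectation of the sum into three separate expectations is legitimate. By assumption $F\in L^2(\cX,\mu;\R^d)$, and $G^\star$ has finite $L^2(M,\mu)$ norm because it is the minimizer-induced field. Then $R=F-G^\star$ is also in $L^2$, and Cauchy--Schwarz in $L^2(M,\mu)$ makes the cross term integrable. If the minimizer exists only in the closure sense (the interpretation allowed after Definition~\ref{def:proj}), I would run the same argument with $G^\star$ taken as a limit of metric gradients $\nabla_M\Phi_k$. Lemma~\ref{lem:orth} applies to each $\Phi_k$, and continuity of the inner product passes orthogonality to the limit.

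This is a routine step; the proof is essentially the Pythagorean theorem in the Hilbert space $L^2(M,\mu)$ applied to an orthogonal decomposition.
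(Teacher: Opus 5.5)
Your proof is correct and is the same argument the paper gives: expand $\|G^\star+R\|_M^2$ pointwise using symmetry of $\langle\cdot,\cdot\rangle_M$, take expectations under $\mu$, and kill the cross term by applying Lemma~\ref{lem:orth} with the admissible test potential $\Phi=\Phi^\star$. Your integrability check is a harmless addition the paper leaves implicit. In the closure-sense aside, the orthogonality of $R$ to each $\nabla_M\Phi_k$ strictly comes from $G^\star$ minimizing over the closed subspace (the same first-variation argument, or the Hilbert projection theorem), not from Lemma~\ref{lem:orth} as stated, which presupposes a minimizer in $H^1(\cX)$.
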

Corollary~\ref{cor:pythag} separates the field into a potential component and an orthogonal non-potential residual.
This is the Hodge-type structure used by HPML: an $L^2(M,\mu)$-orthogonal split into a metric-gradient component and a residual. On domains or complexes with richer topology, the residual can be further refined into curl-like and harmonic components; on the sample graph, the same idea appears as a cut-space/cycle-space decomposition of edge flows. HPML uses the coarser projected-gradient/residual split because the update only requires the integrable direction. Both components are relative to the chosen metric $M$ and sampling measure $\mu$, so $E$ measures non-potentiality over the region represented by the data.

\subsection{Non-Potentiality}
\label{subsec:nonpot}

\begin{definition}[Non-potentiality]
\label{def:nonpot}
Define the residual field
$
R(x):=F(x)-\Pi_{\mathrm{grad}}^{M}F(x)
$
and its $L^2(M,\mu)$ magnitude
$
E
:=
\Big(
\E_{x\sim\mu}\big[\|R(x)\|_M^2\big]
\Big)^{1/2}.
$
We call $E$ the \emph{non-potentiality} of $F$ under metric $M$ and sampling measure $\mu$.
\end{definition}

The quantity $E$ is a global, geometry-aware measure of how far the joint update field deviates from a potential flow over the region emphasized by $\mu$.
If $E=0$, then $F$ agrees $\mu$-a.e.\ with a metric-gradient field; if $E>0$, the residual $R$ captures the irreducible non-potential component that can sustain cyclic interaction dynamics.
Closed-loop circulation is carried entirely by this residual; see Proposition~\ref{prop:circulation_residual} and Remark~\ref{rem:failed_loops}.

\subsection{HPML Update}
\label{subsec:update}

Let $\Retr:\R^d\to\cX$ be a feasibility operator.
HPML updates by following only the projected component:
\begin{equation}
x_{t+1}
=
\Retr\Big(x_t+\eta_t\,\Pi_{\mathrm{grad}}^{M}F(x_t)\Big).
\label{eq:hpml-update}
\end{equation}
In practice, the projection is computed approximately from samples by solving the same variational problem on a discrete structure or by fitting an amortized potential model.
The concrete procedures are given in Algorithm~\ref{alg:hodge-proj} and Algorithm~\ref{alg:neural-hpml}, with additional implementation remarks in Remark~\ref{rem:discrete_impl_app}.

\section{Sample-Graph Projection}
\label{sec:discrete}

Section~\ref{sec:hpml} defines the continuous projection variationally.
In practice, however, the field is observed only at sampled iterates, so we solve the same least-squares problem on a finite sample graph.
The main outputs of the discrete construction are a projected potential edge flow, an orthogonal residual, and a computable proxy for non-potentiality.
Detailed graph construction, lifting formulas, and the full procedure are deferred to Appendix~\ref{app:discrete_details}; see Proposition~\ref{prop:metric_discrete}, Remark~\ref{rem:graph_cycle_space}, Remark~\ref{rem:convention_graph}, and Algorithm~\ref{alg:hodge-proj}.

\subsection{Graph Samples and Edge Flow}
\label{subsec:sampling_graph}

Let $\{x^1,\dots,x^N\}\subset\cX$ be sampled joint-policy points.
We build a weighted undirected $k$-NN graph $G=(\cV,\cE)$ on these samples, with node set $\cV=\{1,\dots,N\}$ and symmetric edge weights $w_{ij}>0$.
Choose an orientation for each edge and let $\cE^\rightarrow$ denote the oriented edge set.

We represent scalar potentials as node values $\phi\in\R^N$.
Let $B\in\R^{|\cE^\rightarrow|\times N}$ be the oriented incidence matrix, so that the discrete gradient is the edge flow $B\phi$.

Given field estimates $F_i:=F(x^i)\in\R^d$, define the edge $1$-form
\begin{equation}
\omega_e
:=
\Big\langle
M\,\tfrac12(F_i+F_j),
\,x^j-x^i
\Big\rangle,
\qquad
e=(i\to j)\in\cE^\rightarrow.
\label{eq:edge_flow}
\end{equation}
This is the discrete analogue of the continuous $1$-form $\langle MF,dx\rangle$.
Its metric consistency with potential differences is formalized in Proposition~\ref{prop:metric_discrete}.

\subsection{Discrete Projection and Non-Potentiality}
\label{subsec:discrete_projection}

Let $W=\diag(w_e)$ be the diagonal matrix of edge weights and define $\|u\|_W^2:=u^\top W u$.

\begin{definition}[Discrete potential-flow projection]
\label{def:disc_proj}
Given an edge flow $\omega$, define
$
\phi^\star
\in
\arg\min_{\phi\in\R^N}\|\omega-B\phi\|_W^2.
$
The projected potential edge flow is
$
\omega_{\mathrm{pot}}:=B\phi^\star,
$
and the residual component is
$
\omega_{\mathrm{cyc}}:=\omega-\omega_{\mathrm{pot}}.
$
\end{definition}

The optimality condition yields the graph Poisson system
\begin{equation}
L\phi^\star=B^\top W\omega,
\qquad
L:=B^\top W B,
\label{eq:graph_poisson}
\end{equation}
with $\phi^\star$ unique up to an additive constant on each connected component.

\begin{lemma}[Orthogonality of the discrete residual]
\label{lem:disc_orth}
Let $\omega_{\mathrm{cyc}}:=\omega-B\phi^\star$. Then
$
(B\psi)^\top W\,\omega_{\mathrm{cyc}}=0,
\qquad
\forall \psi\in\R^N,
$
i.e., the residual is orthogonal to all discrete gradient flows.
\end{lemma}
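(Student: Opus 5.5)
The plan is to read the claim directly off the first-order optimality condition of the weighted least-squares problem in Definition~\ref{def:disc_proj}, i.e.\ the graph Poisson system \eqref{eq:graph_poisson}.

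First I will record that $\phi^\star$ exists. The objective $f(\phi):=\|\omega-B\phi\|_W^2$ is a convex quadratic, and it is bounded below by zero. Since $W\succ 0$ because every $w_e>0$, we can write
\[
f(\phi)=\big\|W^{1/2}\omega-W^{1/2}B\phi\big\|_2^2 .
\]
This is an ordinary least-squares problem, so a minimizer always exists. If the minimizer is not unique, the value of $B\phi^\star$ is nonetheless unique: it is the orthogonal projection of $W^{1/2}\omega$ onto $\operatorname{range}(W^{1/2}B)$, rescaled by $W^{-1/2}$.

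Next I will compute the gradient of the objective:
\[
\nabla f(\phi)=-2B^\top W(\omega-B\phi).
\]
Any minimizer of a differentiable convex function on $\R^N$ has zero gradient. This gives $B^\top W(\omega-B\phi^\star)=0$, which is exactly the identity $L\phi^\star=B^\top W\omega$ in \eqref{eq:graph_poisson}. Written in terms of the residual, the condition reads $B^\top W\omega_{\mathrm{cyc}}=0$.

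Finally, I fix an arbitrary $\psi\in\R^N$. Using the symmetry of $W$,
\[
(B\psi)^\top W\omega_{\mathrm{cyc}}=\psi^\top\big(B^\top W\omega_{\mathrm{cyc}}\big)=\psi^\top 0=0 .
\]
Since $\psi$ was arbitrary, this is the stated orthogonality. Equivalently, $\omega_{\mathrm{cyc}}$ lies in the $W$-orthogonal complement of $\operatorname{range}(B)$, which is the weighted cycle space.

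There is no real obstacle here. The only point needing care is that $\phi^\star$ is not unique: it is determined only up to constants on each connected component, because $\ker B$ consists of componentwise-constant vectors. The argument above applies to every minimizer. Moreover, $\omega_{\mathrm{cyc}}$ does not depend on which minimizer is chosen, since adding any $c\in\ker B$ to $\phi^\star$ leaves $B\phi^\star$ unchanged. So the lemma holds regardless of that choice.
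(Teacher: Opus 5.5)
Your proof is correct and follows essentially the same route as the paper: both read the orthogonality off the first-order optimality condition of the weighted least-squares problem. The paper differentiates along $\phi^\star+\varepsilon\psi$ to get the directional condition and then notes the equivalent normal equation $B^\top W\omega_{\mathrm{cyc}}=0$, whereas you set the full gradient $-2B^\top W(\omega-B\phi)$ to zero and then pair it with $\psi$ (one small point: symmetry of $W$ is used in that gradient formula, not in the identity $(B\psi)^\top W\omega_{\mathrm{cyc}}=\psi^\top B^\top W\omega_{\mathrm{cyc}}$).
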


\begin{corollary}[Discrete energy decomposition]
\label{cor:disc_pythag}
With $\omega_{\mathrm{pot}}=B\phi^\star$ and $\omega_{\mathrm{cyc}}=\omega-\omega_{\mathrm{pot}}$,
$
\|\omega\|_W^2
=
\|\omega_{\mathrm{pot}}\|_W^2
+
\|\omega_{\mathrm{cyc}}\|_W^2.
$
\end{corollary}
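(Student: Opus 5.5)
The plan is to expand the squared $W$-norm of the sum $\omega=\omega_{\mathrm{pot}}+\omega_{\mathrm{cyc}}$ and kill the cross term with Lemma~\ref{lem:disc_orth}. Since $\|\cdot\|_W^2$ is the quadratic form of the symmetric matrix $W$,
\[
\|\omega\|_W^2
=
\|\omega_{\mathrm{pot}}\|_W^2
+2\,\omega_{\mathrm{pot}}^\top W\,\omega_{\mathrm{cyc}}
+\|\omega_{\mathrm{cyc}}\|_W^2 .
\]
Here the symmetry of $W=\diag(w_e)$ is what lets the two mixed terms combine into a single factor of $2$.

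The cross term vanishes because $\omega_{\mathrm{pot}}=B\phi^\star$ is itself a discrete gradient flow. Taking $\psi=\phi^\star$ in Lemma~\ref{lem:disc_orth} gives $(B\phi^\star)^\top W\,\omega_{\mathrm{cyc}}=0$, and the identity follows at once.

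For completeness, Lemma~\ref{lem:disc_orth} can be checked directly from the graph Poisson system \eqref{eq:graph_poisson}. For any $\psi\in\R^N$ we have $(B\psi)^\top W(\omega-B\phi^\star)=\psi^\top\big(B^\top W\omega-L\phi^\star\big)=0$. This holds for every minimizer $\phi^\star$, since any minimizer of the convex quadratic satisfies the normal equations.

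The only point that needs care is that $\phi^\star$ is not unique: it is determined only up to constants on connected components. This does not matter, because constants lie in $\ker B$, so $\omega_{\mathrm{pot}}=B\phi^\star$, and hence both sides of the identity, do not depend on which minimizer is chosen. The argument therefore has no substantive obstacle; it is the finite-dimensional Pythagorean theorem for the $W$-orthogonal projection onto $\mathrm{range}(B)$.
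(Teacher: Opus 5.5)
Your proof is correct and is the same argument as the paper's. You expand $\|\omega_{\mathrm{pot}}+\omega_{\mathrm{cyc}}\|_W^2$ using the symmetry of $W$, then remove the cross term by applying Lemma~\ref{lem:disc_orth} with $\psi=\phi^\star$. Your side remarks are also correct, though not needed: you verify the orthogonality directly from the normal equations $L\phi^\star=B^\top W\omega$, and you note that $\omega_{\mathrm{pot}}$ does not depend on which minimizer is chosen because constants lie in $\ker B$.
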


The graph interpretation of $\omega_{\mathrm{cyc}}$ as a cycle-space component is recorded in Remark~\ref{rem:graph_cycle_space}.

\subsection{Discrete Non-Potentiality}
\label{subsec:disc_nonpot}

\begin{definition}[Discrete non-potentiality proxy]
\label{def:disc_nonpot}
Define
$
\mathrm{NonPot}(\omega)
:=
\frac{\|\omega_{\mathrm{cyc}}\|_W^2}{\|\omega\|_W^2+\varepsilon},
\qquad
\varepsilon>0.
$
This quantity measures the fraction of edge-flow energy unexplained by any potential flow, and serves as a computable proxy for the continuous residual energy $\E_\mu[\|R(x)\|_M^2]$.
\end{definition}

\begin{lemma}[Basic properties of $\mathrm{NonPot}$]
\label{lem:nonpot_props}
For any $\omega$ and $\varepsilon>0$, one has
$
0\le \mathrm{NonPot}(\omega)\le 1.
$
Moreover, $\mathrm{NonPot}(\omega)\approx 0$ iff $\omega$ is approximately potential on the graph, and it increases with the energy of circulation components supported by cycles.
\end{lemma}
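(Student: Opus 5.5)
The bound $0\le \mathrm{NonPot}(\omega)\le 1$ follows from the discrete Pythagorean identity. Nonnegativity is immediate: $W=\diag(w_e)$ has $w_e>0$, so $\|\omega_{\mathrm{cyc}}\|_W^2\ge 0$, and the denominator $\|\omega\|_W^2+\varepsilon$ is strictly positive because $\varepsilon>0$. For the upper bound I will use Corollary~\ref{cor:disc_pythag}. It gives $\|\omega_{\mathrm{cyc}}\|_W^2=\|\omega\|_W^2-\|\omega_{\mathrm{pot}}\|_W^2\le \|\omega\|_W^2$, and $\|\omega\|_W^2 < \|\omega\|_W^2+\varepsilon$. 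Hence the ratio is at most $\|\omega\|_W^2/(\|\omega\|_W^2+\varepsilon)<1$. The degenerate case $\omega=0$ gives $\mathrm{NonPot}=0$ and needs no special treatment.

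The ``iff'' statement is qualitative, so I would turn it into a quantitative identity. By definition of $\phi^\star$ as the weighted least-squares minimizer, $\|\omega_{\mathrm{cyc}}\|_W^2=\min_{\phi}\|\omega-B\phi\|_W^2=\dist_W(\omega,\mathrm{range}(B))^2$. Therefore
\[
\mathrm{NonPot}(\omega)=\frac{\dist_W(\omega,\mathrm{range}(B))^2}{\|\omega\|_W^2+\varepsilon}.
\]
Read this way, $\mathrm{NonPot}(\omega)\le\delta$ holds exactly when the squared distance from $\omega$ to the space of discrete gradient flows is at most $\delta(\|\omega\|_W^2+\varepsilon)$. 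This is the precise meaning of ``approximately potential'', and it holds in both directions. In particular, $\mathrm{NonPot}(\omega)=0$ iff $\omega\in\mathrm{range}(B)$, i.e.\ $\omega=B\phi$ for some node potential.

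For monotonicity in circulation energy, I will use Lemma~\ref{lem:disc_orth}: $\omega_{\mathrm{cyc}}$ lies in the $W$-orthogonal complement of $\mathrm{range}(B)$, which is $\ker(B^\top W)$, the weighted cycle space (Remark~\ref{rem:graph_cycle_space}). Write $\omega=B\phi^\star+c$ with $c$ in this space, fix the potential part, and scale the cycle part to $tc$. Then $\mathrm{NonPot}=t^2\|c\|_W^2/(\|B\phi^\star\|_W^2+t^2\|c\|_W^2+\varepsilon)$. This has the form $a/(a+b)$ with $b>0$ fixed and $a=t^2\|c\|_W^2$, so it is increasing in $a$, and therefore increasing in the circulation energy.

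The only real obstacle is interpretive. ``$\approx$'' and ``increases with'' need to be made precise; I resolve this with the distance formula and the $a/(a+b)$ parametrization above.
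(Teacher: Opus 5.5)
Your proposal is correct and follows essentially the same route as the paper. Nonnegativity comes from the strictly positive denominator, and the upper bound from the discrete Pythagorean identity (Corollary~\ref{cor:disc_pythag}). The ``approximately potential'' clause is made precise via $\mathrm{NonPot}(\omega)=\mathrm{dist}_W(\omega,\mathrm{Range}(B))^2/(\|\omega\|_W^2+\varepsilon)$, and monotonicity via the form $s/(p+s+\varepsilon)$ with the potential energy held fixed.

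Your explicit family $B\phi^\star+tc$ with $c\in\ker(B^\top W)$ is a slightly more careful version of the paper's ``fix $p$, vary $s$'' step. Your strict bound $\mathrm{NonPot}(\omega)<1$ is a minor sharpening of the paper's $\le 1$.
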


When samples cluster around recurring joint behaviors, $\mathrm{NonPot}(\omega)$ can be read as an empirical certificate of convention-level cyclic interaction dynamics; see Remark~\ref{rem:convention_graph}.
% When samples cluster around recurring joint behaviors, $\mathrm{NonPot}(\omega)$ can be read as an empirical certificate of convention cycling; see Remark~\ref{rem:convention_graph}.

\subsection{Node-Wise Direction Recovery}
\label{subsec:lifting}

The graph solve returns a node potential $\phi^\star$ or, equivalently, a potential edge flow $B\phi^\star$.
To update a particular iterate, we recover a node-wise direction by locally fitting potential differences and mapping the result back through $M^{-1}$.
The exact least-squares formulas are given in \eqref{eq:lifting_ls_h}--\eqref{eq:lifting_metric}, and the full graph-based procedure appears in Algorithm~\ref{alg:hodge-proj}.

\section{Amortized and CTDE Realizations of HPML}
\label{sec:algs}
This section records the amortized potential model and the CTDE parameter-space instantiation; full procedures appear in Algorithms~\ref{alg:hodge-proj}--\ref{alg:ctde-hpml}.

\subsection{Amortized Potential Projection}
\label{sec:neural-projection}
To avoid repeated large linear solves, we also use an amortized alternative that parameterizes the scalar potential and recovers the projected direction by automatic differentiation.

Let $\Phi_\theta:\cX\to\R$ be a neural potential model.
Its induced metric-gradient field is
$
g_\theta(x)
:=
\nabla_M \Phi_\theta(x)
=
M^{-1}\nabla_x \Phi_\theta(x).
$
Given samples $\{x_k\}_{k=1}^B$ and field estimates $\{\widehat F(x_k)\}$, we fit $\Phi_\theta$ by minimizing
\begin{equation}
\label{eq:nn-proj-loss}
\min_\theta\;
\widehat{\mathcal L}(\theta)
:=
\frac{1}{B}\sum_{k=1}^B
\bigl\|
\widehat F(x_k)-g_\theta(x_k)
\bigr\|_M^2
+
\lambda_{\mathrm g}
\Bigl(
\frac{1}{B}\sum_{k=1}^B \Phi_\theta(x_k)
\Bigr)^2
+
\lambda_{\mathrm{wd}}\|\theta\|_2^2.
\end{equation}
The gauge term removes the additive-constant ambiguity of the potential.
Once trained, the amortized projected update is
$
x_{t+1}
=
\Retr\bigl(x_t+\eta_t\,g_\theta(x_t)\bigr).
$
The full amortized procedure is given in Algorithm~\ref{alg:neural-hpml}; implementation schedules and metric variants are summarized in Remark~\ref{rem:neural_practical_app}.

\subsection{CTDE Parameter-Space Realization}
\label{sec:ctde-hpml}
Let
$
\theta := (\theta_1,\dots,\theta_n)\in\R^d
$
denote the concatenated actor parameters, and let
$
\widehat F(\theta)
:=
\bigl(\widehat g_1(\theta),\dots,\widehat g_n(\theta)\bigr)
$
be the stacked CTDE actor-update field.
A canonical example is a centralized-advantage estimator of the form
$
\widehat g_i(\theta)
:=
\E\Bigl[
\nabla_{\theta_i}\log\pi_{\theta_i}(a_i\mid o_i)\,
\widehat A(s,a)
\Bigr].
$
HPML projects the stacked field before the actor update:
$
g_t \approx \Pi_{\mathrm{grad}}^{M}\widehat F(\theta_t),
\qquad
\theta_{i,t+1}
=
\Retr\bigl(\theta_{i,t}+\eta_t[g_t]_i\bigr),
\quad i=1,\dots,n.
$
This realizes HPML as a plug-in projection layer on top of standard CTDE updates without changing decentralized execution.
The full training loop appears in Algorithm~\ref{alg:ctde-hpml}; the relation between policy-space theory and parameter-space implementation is discussed in Remark~\ref{rem:ctde_param_space_app}.
In implementation, HPML changes only the actor-update direction. Rollout collection, critic training, observation preprocessing, reward normalization, and decentralized execution remain the same as in the underlying MAPPO-style pipeline. The graph variant periodically builds a local sample graph from recent stacked update-field estimates, solves the graph Poisson system, and lifts node potentials to actor directions. The neural variant amortizes the same projection objective through a scalar potential network. Thus the additional cost comes from projection refresh and graph-lifting or neural inner optimization, while the critic, reward signal, policy architecture, and execution interface remain unchanged.

\subsection{CTDE Integrability Interpretation}
\label{subsec:bellman_bridge}

The decomposition
$
F
=
\Pi_{\mathrm{grad}}^{M}F + R
$
gives a direct CTDE interpretation.
The projected term is the component of the joint update field compatible with ascent of a single scalar potential, and the residual $R$ is the non-potential part outside that integrable structure.

In CTDE, let $z$ denote the chosen joint coordinates, either policy coordinates or actor-parameter coordinates, and let
$
F(z)=(g_1(z),\dots,g_n(z))
$
denote the stacked actor-update field induced by critic-based local improvement directions.
The next proposition states that the residual is exactly the obstruction to joint integrability. The key distinction below is between \emph{centralized evaluation} and \emph{exact centralized optimization}: a centralized critic or a single shared reward does not by itself eliminate the residual unless the realized joint actor-update field equals the exact gradient of one common scalar objective in the full joint parameter space.

\begin{proposition}[Residual as an obstruction to joint greedy integrability]
\label{prop:residual_bellman_obstruction}
Let $U\subset \mathrm{ri}(\cX)$ be a connected open set on which $F$ is continuously differentiable.

(i) If $R\equiv 0$ on $U$, then there exists a scalar potential $\Phi$ on $U$ such that
$
F=\Pi_{\mathrm{grad}}^{M}F=\nabla_M\Phi
\qquad\text{on }U.
$
Hence the joint actor-update field is integrable on $U$, and sufficiently small projected updates admit $\Phi$ as a Lyapunov function.
(ii) If there exists a closed piecewise-$C^1$ loop $C\subset U$ such that
$
\oint_C \langle M F(x),dx\rangle
=
\oint_C \langle M R(x),dx\rangle
\neq 0,
$
then no scalar potential $\Phi$ on $U$ can satisfy $F=\nabla_M\Phi$ on $U$.
Consequently, the simultaneous actor-update field has no exact representation as ascent of a single common scalar objective on $U$.
%Consequently, the simultaneous actor-update field cannot be interpreted as ascent of a single common scalar objective on $U$.
\end{proposition}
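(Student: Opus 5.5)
For part (i), the plan is to unwind the definitions. If $R\equiv 0$ on $U$ (understood $\mu$-a.e., and then everywhere on $U$ by continuity of $F$ and of the projected field where it is continuous), then $F=\Pi_{\mathrm{grad}}^{M}F=\nabla_M\Phi^\star$ on $U$ with $\Phi^\star$ a minimizer from Definition~\ref{def:proj}. We set $\Phi:=\Phi^\star|_U$.

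To make this rigorous without relying on regularity of $\Phi^\star$, a cleaner route is available. Observe that $MF=\nabla\Phi^\star$ a.e. on $U$, and $MF$ is $C^1$. Hence the distributional gradient of $\Phi^\star$ is continuous, so $\Phi^\star\in C^2(U)$ and $\nabla_M\Phi=F$ pointwise. Equivalently, the Jacobian of $MF$ is symmetric, so Proposition~\ref{prop:grad_symm}(i) applied to $MF$ confirms consistency.

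For the Lyapunov claim, take the continuous-time flow $\dot x=\nabla_M\Phi(x)$, for which
\[
\tfrac{d}{dt}\Phi(x(t))=\langle\nabla\Phi,M^{-1}\nabla\Phi\rangle=\|\nabla_M\Phi\|_M^2\ge 0 .
\]
For the discrete update \eqref{eq:hpml-update} with iterates staying in $U$ (so no retraction is active), the plan is to Taylor-expand:
\[
\Phi(x+\eta\nabla_M\Phi)\ge\Phi(x)+\eta\|\nabla_M\Phi\|_M^2-\tfrac{L\eta^2}{2}\|M^{-1}\nabla\Phi\|^2 .
\]
Here $L$ is a local Lipschitz constant of $\nabla\Phi=MF$, which is $C^1$ and hence locally Lipschitz on a compact subset of $U$. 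Choosing $\eta<2\lambda_{\min}(M)/L$ gives monotone ascent, so $-\Phi$ is a Lyapunov function.

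For part (ii), we argue by contradiction. Suppose $F=\nabla_M\Phi$ on $U$ for some scalar $\Phi$; it is $C^1$, since $\nabla\Phi=MF$ is continuous. Then along the closed piecewise-$C^1$ loop $C$, parametrized by $\gamma:[0,1]\to U$, the fundamental theorem of calculus applied on each $C^1$ piece gives
\[
\oint_C\langle MF,dx\rangle=\int_0^1\langle\nabla\Phi(\gamma(t)),\gamma'(t)\rangle\,dt=\Phi(\gamma(1))-\Phi(\gamma(0))=0 .
\]
This contradicts the hypothesis.

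The stated equality $\oint_C\langle MF,dx\rangle=\oint_C\langle MR,dx\rangle$ follows because the gradient part contributes zero circulation by the same computation, provided $\Phi^\star$ is $C^1$ near $C$ (or via Proposition~\ref{prop:circulation_residual}). In any case, the hypothesis as stated already asserts this equality, so only the nonvanishing is needed. The concluding sentence is then a restatement: a single common objective $\Phi$ with $F=\nabla_M\Phi$ does not exist.

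The main obstacle is the regularity bookkeeping in (i). $R\equiv0$ is a $\mu$-a.e. statement and $\Phi^\star$ is only $H^1$, so I would first show that $\Phi^\star$ has a $C^1$ representative on $U$. This uses that its weak gradient equals the continuous field $MF$ a.e. on the connected open set $U$, assuming $\mu$ has full support on $U$; if it does not, I would add that as the interpretation of "$R\equiv0$ on $U$". The step-size condition for the discrete Lyapunov statement also needs the iterates to stay in a compact subset of $U$, which I would state as part of "sufficiently small."
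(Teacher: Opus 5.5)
Your argument for (ii) matches the paper's: assume $F=\nabla_M\Phi$, use $M\nabla_M\Phi=\nabla\Phi$ and the fundamental theorem of calculus along the loop, and cite Proposition~\ref{prop:circulation_residual} for the residual equality. Your integrability claim in (i) is also the paper's, namely unwinding $F=G^\star+R$.

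The real difference is the Lyapunov sub-claim. The paper invokes Theorem~\ref{thm:lyapunov} for $x_{t+1}=\Proj_{\cX}(x_t+\eta\nabla\Phi^\star(x_t))$. That gives a global, quantitative improvement on all of $\cX$, but it tacitly needs $M=I$ and $\Phi^\star$ to be $L$-smooth on the whole of $\cX$. (That theorem is Euclidean and steps along $\nabla\Phi^\star$, not $\nabla_M\Phi^\star$.) Neither condition follows from $R\equiv 0$ on $U$ alone.

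Your local route uses the continuous-time identity $\tfrac{d}{dt}\Phi(x(t))=\|\nabla_M\Phi\|_M^2$ and a smoothness lower bound with $\eta<2\lambda_{\min}(M)/L$ on a compact subset of $U$. It only needs $MF\in C^1(U)$, handles a general constant $M$, and fits the ``on $U$'' and ``sufficiently small'' wording of the statement. The price is that it says nothing once the iterates leave $U$ or the retraction becomes active.

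Your regularity step (a $C^2$ representative of $\Phi^\star$ on $U$) is absent from the paper and worth keeping. One correction there: ``full support'' is not the right hypothesis for passing from $\mu$-a.e.\ to Lebesgue-a.e. You need $\mu$ to have a positive density on $U$, as the paper assumes in Proposition~\ref{prop:poisson} ($d\mu=\rho\,dx$ with $\rho>0$).
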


The local source of this circulation is asymmetry in the one-form $MF$. In the two-agent Euclidean case, it appears as a cross-response mismatch
$
\nabla_{x_2}g_1(x)\neq \nabla_{x_1}g_2(x)^\top,
$

\section{Projected Ascent and Equilibrium-Gap Bounds}
\label{sec:theory}

We analyze HPML in a clean Euclidean setting ($M=I$ with Euclidean projection) to isolate the main effect of the projection layer.
The theory has two steps: projected ascent along a potential flow yields Lyapunov improvement, and the residual non-potential component enters the original VI gap as an explicit additive term.
Technical extensions and auxiliary variants are deferred to Appendix~\ref{app:theory_details}; see Lemma~\ref{lem:proj_vi}, Theorem~\ref{thm:gap_L2}, Remark~\ref{rem:gap_interp_app}, and Remark~\ref{rem:rate_ext_app}.

\paragraph{Sign convention.}
Throughout this section, $F$ denotes the VI operator in Definition~\ref{def:gap}.
If $U(x)$ is an ascent direction, then $F(x)=-U(x)$; HPML constructs a potential $\Phi$ by projecting $U=-F$ so that the projected ascent direction is $\nabla\Phi$.
Equivalently, the VI operator decomposes as $F=-\nabla\Phi+R$, where $R$ is the residual obstruction to exact potential ascent.

\subsection{Assumptions}
\label{subsec:assumptions}

\begin{assumption}[Geometry and smoothness]
\label{ass:smooth}
$\cX\subset\R^d$ is nonempty, closed, convex, and compact with diameter
$
D:=\max_{x,y\in\cX}\|x-y\|.
$
The potential $\Phi:\cX\to\R$ is differentiable and $L$-smooth:
$
\|\nabla\Phi(x)-\nabla\Phi(y)\|\le L\|x-y\|,
\qquad \forall x,y\in\cX.
$
By compactness and continuity of $\nabla\Phi$, define
$
G:=\max_{x\in\cX}\|\nabla\Phi(x)\|<\infty,
\qquad
\Phi_{\max}:=\max_{x\in\cX}\Phi(x),
\qquad
\Phi_{\min}:=\min_{x\in\cX}\Phi(x).
$
\end{assumption}

\begin{assumption}[Residual bound]
\label{ass:residual}
The VI operator admits a decomposition
$
F(x)=-\nabla\Phi(x)+R(x),
$
where the residual satisfies
$
\|R(x)\|\le \varepsilon,
\qquad \forall x\in\cX.
$
An $L^2$ version is given in Theorem~\ref{thm:gap_L2}.
\end{assumption}

\begin{assumption}[Projection accuracy]
\label{ass:proj-acc}
At each step, HPML uses an ascent direction $g_t$ such that
$
\|g_t-\nabla\Phi(x_t)\|\le \delta_t,
$
where $\nabla\Phi$ is the exact projected ascent direction.
\end{assumption}

\subsection{Lyapunov Improvement Under Potential Flow}
\label{subsec:lyapunov}

We first isolate the ideal projected dynamics.

\begin{theorem}[Monotone Lyapunov improvement for projected gradient ascent]
\label{thm:lyapunov}
Under Assumption~\ref{ass:smooth}, consider the exact projected ascent step
$
x_{t+1}=\Proj_{\cX}\bigl(x_t+\eta\nabla\Phi(x_t)\bigr),
\qquad
\eta\le \frac{1}{L}.
$
Then
$
\Phi(x_{t+1})
\ge
\Phi(x_t)
+
\Bigl(\frac{1}{\eta}-\frac{L}{2}\Bigr)\|x_{t+1}-x_t\|^2
\ge
\Phi(x_t)+\frac{1}{2\eta}\|x_{t+1}-x_t\|^2.
$
\end{theorem}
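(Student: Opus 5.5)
The argument combines the descent lemma for the $L$-smooth function $\Phi$ (stated for ascent) with the variational characterization of the Euclidean projection onto the closed convex set $\cX$.

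\textbf{Step 1 (smoothness).} By Assumption~\ref{ass:smooth}, $\nabla\Phi$ is $L$-Lipschitz on the convex set $\cX$. The segment between $x_t$ and $x_{t+1}$ lies in $\cX$, so integrating along it gives the quadratic lower bound
\[
\Phi(x_{t+1})\ge \Phi(x_t)+\ip{\nabla\Phi(x_t)}{x_{t+1}-x_t}-\tfrac{L}{2}\|x_{t+1}-x_t\|^2 .
\]

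\textbf{Step 2 (projection inequality).} Since $x_{t+1}=\Proj_{\cX}(y)$ with $y=x_t+\eta\nabla\Phi(x_t)$, the projection characterization gives $\ip{y-x_{t+1}}{z-x_{t+1}}\le 0$ for all $z\in\cX$. Take $z=x_t\in\cX$ and expand $y$:
\[
\ip{x_t-x_{t+1}+\eta\nabla\Phi(x_t)}{x_t-x_{t+1}}\le 0 .
\]
This rearranges to
\[
\eta\,\ip{\nabla\Phi(x_t)}{x_{t+1}-x_t}\ge \|x_{t+1}-x_t\|^2,
\]
that is, $\ip{\nabla\Phi(x_t)}{x_{t+1}-x_t}\ge \tfrac1\eta\|x_{t+1}-x_t\|^2$.

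\textbf{Step 3 (combine).} Substituting Step 2 into Step 1 yields
\[
\Phi(x_{t+1})\ge\Phi(x_t)+\Bigl(\tfrac1\eta-\tfrac L2\Bigr)\|x_{t+1}-x_t\|^2 ,
\]
which is the first inequality. For the second, $\eta\le 1/L$ gives $L\le 1/\eta$, hence $\tfrac1\eta-\tfrac L2\ge\tfrac1{2\eta}$.

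The only step needing care is Step 2. The projection optimality condition must be applied with $z=x_t$, which is valid because $x_t\in\cX$: for $t\ge1$ it is itself a projection, and $x_0\in\cX$ is assumed. The signs must also be tracked correctly for ascent rather than descent. The smoothness bound in Step 1 is standard: it follows from $\Phi(y)-\Phi(x)-\ip{\nabla\Phi(x)}{y-x}=\int_0^1\ip{\nabla\Phi(x+s(y-x))-\nabla\Phi(x)}{y-x}\,ds$ together with Cauchy--Schwarz and the Lipschitz bound.
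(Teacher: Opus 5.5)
Your proof is correct and follows the same route as the paper's. Both combine the integral-form quadratic lower bound from $L$-smoothness with the projection variational inequality at $z=x_t$, which gives $\ip{\nabla\Phi(x_t)}{x_{t+1}-x_t}\ge \tfrac1\eta\|x_{t+1}-x_t\|^2$, and then use $\eta\le 1/L$ to obtain the $\tfrac{1}{2\eta}$ constant.
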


Define the projected-step mapping
$
\cG_\eta(x_t):=\frac{1}{\eta}(x_{t+1}-x_t),
\qquad
x_{t+1}=\Proj_{\cX}(x_t+\eta\nabla\Phi(x_t)).
$
Then $\cG_\eta(x)=0$ iff $x$ is first-order stationary for the constrained maximization of $\Phi$.
Moreover, Theorem~\ref{thm:lyapunov} implies
$
\sum_{t=0}^{T-1}\|\cG_\eta(x_t)\|^2
\le
\frac{2}{\eta}(\Phi_{\max}-\Phi_{\min}),
$
so the exact projected dynamics cannot sustain a nontrivial cycle.
The projection inequality used in the proof is recorded separately in Lemma~\ref{lem:proj_vi}.

\subsection{Gap Bounds with a Residual Non-Potential Component}
\label{subsec:gap}

We now relate the projected dynamics to the VI gap of the original operator
$
F=-\nabla\Phi+R.
$

\begin{lemma}[Residual contribution to the VI gap]
\label{lem:gap_residual}
For any $x\in\cX$,
$
\Gap(x)
=
\max_{u\in\cX}\langle -\nabla\Phi(x)+R(x),x-u\rangle
\le
\Gap_\Phi(x)+D\|R(x)\|,
$
where
$
\Gap_\Phi(x)
:=
\max_{u\in\cX}\langle -\nabla\Phi(x),x-u\rangle
=
\max_{u\in\cX}\langle \nabla\Phi(x),u-x\rangle.
$
\end{lemma}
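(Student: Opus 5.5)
The plan is to unfold Definition~\ref{def:gap} with the decomposition from Assumption~\ref{ass:residual}, split the inner product, and bound the two resulting maxima separately. The first equality in the statement needs no argument. By Definition~\ref{def:gap}, $\Gap(x)=\max_{u\in\cX}\langle F(x),x-u\rangle$. Substituting $F(x)=-\nabla\Phi(x)+R(x)$ gives the displayed expression. The maximum is attained because $\cX$ is compact and the objective is affine in $u$. The identity $\langle -\nabla\Phi(x),x-u\rangle=\langle \nabla\Phi(x),u-x\rangle$ shows that the two formulas given for $\Gap_\Phi(x)$ agree.

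For the inequality, I will use bilinearity of the inner product to write, for each fixed $u\in\cX$,
\[
\langle -\nabla\Phi(x)+R(x),x-u\rangle=\langle -\nabla\Phi(x),x-u\rangle+\langle R(x),x-u\rangle .
\]
I will then use the elementary fact that the maximum of a sum is at most the sum of the maxima. This gives
\[
\Gap(x)\le \Gap_\Phi(x)+\max_{u\in\cX}\langle R(x),x-u\rangle .
\]
The second term is bounded by Cauchy--Schwarz: $\langle R(x),x-u\rangle\le\|R(x)\|\,\|x-u\|$. Since $x,u\in\cX$, Assumption~\ref{ass:smooth} gives $\|x-u\|\le D$. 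Hence the second maximum is at most $D\|R(x)\|$, which yields the claim.

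There is no real obstacle here. The only points requiring care are the sign conventions, where I will follow the paragraph on sign convention so that $F=-\nabla\Phi+R$, and the observation that both maxima are finite and attained by compactness of $\cX$. The argument does not need the uniform bound $\|R\|\le\varepsilon$; that bound only gives the corollary $\Gap(x)\le\Gap_\Phi(x)+D\varepsilon$.
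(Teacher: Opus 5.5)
Your proposal is correct and matches the paper's proof: split the inner product by bilinearity, bound $\langle R(x),x-u\rangle\le D\|R(x)\|$ via Cauchy--Schwarz and the diameter, then take the maximum over $u\in\cX$. The only cosmetic difference is that you use ``max of a sum is at most the sum of maxima'' before bounding the residual term, whereas the paper bounds it pointwise first and then maximizes a function plus a constant; these are equivalent.
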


\begin{lemma}[Gap bound via projected-step mapping]
\label{lem:gap_mapping}
Under Assumption~\ref{ass:smooth}, let
$
x^+=\Proj_{\cX}(x+\eta\nabla\Phi(x)),
\qquad
\cG_\eta(x)=\frac{1}{\eta}(x^+-x).
$
Then
$
\Gap_\Phi(x)\le (D+\eta G)\,\|\cG_\eta(x)\|.
$
\end{lemma}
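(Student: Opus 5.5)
The plan is to use the variational characterization of the Euclidean projection. Write $y := x+\eta\nabla\Phi(x)$, so that $x^+=\Proj_{\cX}(y)$. The projection inequality (Lemma~\ref{lem:proj_vi}) gives
\[
\langle y-x^+,\,u-x^+\rangle\le 0 \qquad \text{for all } u\in\cX.
\]
Since $y-x^+ = \eta\nabla\Phi(x) - (x^+-x) = \eta\nabla\Phi(x)-\eta\,\cG_\eta(x)$, dividing by $\eta>0$ yields
\[
\langle \nabla\Phi(x),\,u-x^+\rangle \le \langle \cG_\eta(x),\,u-x^+\rangle \qquad \text{for all } u\in\cX.
\]

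Next, shift the base point from $x^+$ back to $x$, since $\Gap_\Phi$ is measured at $x$. Fix $u\in\cX$ and split
\[
\langle \nabla\Phi(x),u-x\rangle=\langle \nabla\Phi(x),u-x^+\rangle+\langle \nabla\Phi(x),x^+-x\rangle .
\]
We bound the two terms separately.

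\begin{itemize}
\item The first term is at most $\langle \cG_\eta(x),u-x^+\rangle\le \|\cG_\eta(x)\|\,\|u-x^+\|\le D\|\cG_\eta(x)\|$. This uses Cauchy--Schwarz and the fact that $u,x^+\in\cX$, which has diameter $D$.
\item The second term equals $\eta\langle\nabla\Phi(x),\cG_\eta(x)\rangle\le \eta G\|\cG_\eta(x)\|$, again by Cauchy--Schwarz, using $G=\max_{\cX}\|\nabla\Phi\|$ from Assumption~\ref{ass:smooth}.
\end{itemize}

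Adding the two bounds and taking the maximum over $u\in\cX$ gives $\Gap_\Phi(x)\le (D+\eta G)\|\cG_\eta(x)\|$, which is the claim.

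The only delicate point is the bookkeeping in the shift from $x^+$ to $x$: the projection inequality controls inner products anchored at $x^+$, not at $x$. The displacement $x^+-x=\eta\,\cG_\eta(x)$ produces exactly the extra $\eta G$ factor. No smoothness constant $L$ is needed; compactness (for $D$ and $G$) and convexity (for the projection inequality) suffice.
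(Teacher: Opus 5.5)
Your proof is correct. It shares the paper's skeleton: test the projection inequality at $u$, shift the anchor from $x^+$ back to $x$, and bound $\langle x^+-x,u-x^+\rangle$ by the diameter.

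Where you differ is the cross term $\eta\langle\nabla\Phi(x),x^+-x\rangle$. The paper applies the projection inequality a second time with $z=x$ to get $\eta\langle\nabla\Phi(x),x^+-x\rangle\ge\|x^+-x\|^2$. It then substitutes this \emph{lower} bound as if it were an upper bound. That yields the intermediate claim $\Gap_\Phi(x)\le D\|\cG_\eta(x)\|+\eta\|\cG_\eta(x)\|^2$, which the paper closes using nonexpansiveness ($\|\cG_\eta(x)\|\le G$).

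That substitution runs the wrong way, and the intermediate claim is false in general. Take $\cX=[0,1]$, $\Phi(z)=3z$, $\eta=1$, $x=0$. Then $x^+=1$, $\cG_\eta(0)=1$ and $\Gap_\Phi(0)=3$, yet $D\|\cG_\eta(0)\|+\eta\|\cG_\eta(0)\|^2=2$. The lemma's own bound, $(D+\eta G)\|\cG_\eta(0)\|=4$, still holds.

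Your direct Cauchy--Schwarz estimate $\langle\nabla\Phi(x),x^+-x\rangle\le\eta G\|\cG_\eta(x)\|$ avoids this problem. It needs neither the second projection inequality nor nonexpansiveness, and it gives exactly the constant $D+\eta G$. It is the same argument the paper later uses for the inexact variant (Lemma~\ref{lem:gap_mapping_inexact_proof}), specialized to $g=\nabla\Phi(x)$.

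So your route is both shorter and the sound one. The apparently sharper $\eta\|\cG_\eta\|^2$ term in the paper's route is not actually available.
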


\begin{theorem}[Gap bound with additive non-potentiality]
\label{thm:gap}
Under Assumptions~\ref{ass:smooth} and~\ref{ass:residual}, run the exact projected ascent
$
x_{t+1}=\Proj_{\cX}(x_t+\eta\nabla\Phi(x_t)),
\qquad
\eta\le \frac{1}{L}.
$
Let
$
\hat t\in\arg\min_{0\le t\le T-1}\|\cG_\eta(x_t)\|,
\qquad
\hat x_T:=x_{\hat t}.
$
Then
$
\Gap(\hat x_T)
\le
(D+\eta G)\sqrt{\frac{2(\Phi_{\max}-\Phi_{\min})}{T\eta}}
+
D\varepsilon.
$
\end{theorem}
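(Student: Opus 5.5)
The plan is to chain the three earlier results: Theorem~\ref{thm:lyapunov} gives summable squared step norms, Lemma~\ref{lem:gap_mapping} converts a small step norm into a small potential gap, and Lemma~\ref{lem:gap_residual} adds the residual.

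First, apply Theorem~\ref{thm:lyapunov} at each $t=0,\dots,T-1$. Since $\eta\le 1/L$, we have $\Phi(x_{t+1})-\Phi(x_t)\ge \tfrac{\eta}{2}\|\cG_\eta(x_t)\|^2$, using $x_{t+1}-x_t=\eta\,\cG_\eta(x_t)$. Summing telescopes the left side to $\Phi(x_T)-\Phi(x_0)\le \Phi_{\max}-\Phi_{\min}$, because every iterate lies in $\cX$ (the projection keeps it there). This yields
\[
\sum_{t=0}^{T-1}\|\cG_\eta(x_t)\|^2\le \frac{2(\Phi_{\max}-\Phi_{\min})}{\eta}.
\]
The minimum of $T$ nonnegative numbers is at most their average, so
\[
\|\cG_\eta(\hat x_T)\|\le \sqrt{\frac{2(\Phi_{\max}-\Phi_{\min})}{T\eta}}.
\]

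Second, apply Lemma~\ref{lem:gap_mapping} at $x=\hat x_T$. This gives $\Gap_\Phi(\hat x_T)\le (D+\eta G)\|\cG_\eta(\hat x_T)\|$, which is bounded by the first term of the claim.

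Third, apply Lemma~\ref{lem:gap_residual} at $\hat x_T$, using the decomposition $F=-\nabla\Phi+R$ from Assumption~\ref{ass:residual}. This gives $\Gap(\hat x_T)\le \Gap_\Phi(\hat x_T)+D\|R(\hat x_T)\|\le \Gap_\Phi(\hat x_T)+D\varepsilon$, by the uniform residual bound. Combining the three steps gives the stated inequality.

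No step here is genuinely hard, since the cited lemmas carry the analysis. The points needing care are consistency checks rather than new estimates. The first is that the $\Phi$ in Assumption~\ref{ass:residual} is the same $\Phi$ that drives the iteration and satisfies Assumption~\ref{ass:smooth}. The second is that the sign convention is consistent: the iteration ascends $\Phi$, while $F=-\nabla\Phi+R$ enters the gap. The third is that $\hat x_T$ is chosen among $x_0,\dots,x_{T-1}$, which are exactly the points whose step mappings appear in the telescoped sum.
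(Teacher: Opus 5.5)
Your proposal is correct and follows essentially the same route as the paper's proof. You combine Lemma~\ref{lem:gap_residual} and Lemma~\ref{lem:gap_mapping} to bound the gap at the best iterate, and you control that iterate's step mapping by telescoping Theorem~\ref{thm:lyapunov} and using that the minimum is at most the average; the paper does the same and differs only in the order in which these steps are presented.
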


An $L^2$ output-law version of the same statement is given in Theorem~\ref{thm:gap_L2}, and the circulation interpretation of the additive residual term is summarized in Remark~\ref{rem:gap_interp_app}.

\paragraph{Reading the residual term.}
The bound certifies the original VI operator, not only the projected potential flow. The first term is the stationarity error of the projected dynamics, controlled by Lyapunov improvement, while the additive term $D\varepsilon$ is the contribution of the non-potential residual in the original field. Thus HPML chooses an integrable update direction, but any certificate for the original game must still account for residual non-potentiality. The $L^2$ version replaces the uniform residual bound by residual energy under the output-iterate law, matching the non-potentiality measure in Definition~\ref{def:nonpot} and the graph proxy in Definition~\ref{def:disc_nonpot}.

\paragraph{Inexact projection.}
Graph and neural projections are approximate in practice. Appendix~\ref{app:theory_details} extends Theorem~\ref{thm:gap} to directions $g_t$ satisfying $\|g_t-\nabla\Phi(x_t)\|\le\bar\delta$, adding projection-error terms while preserving the separation between structural non-potentiality and projection approximation error.

We finally quantify the effect of using an approximate projected direction $g_t$.
Let
$
x_{t+1}=\Proj_{\cX}(x_t+\eta g_t),
\widetilde{\cG}_\eta(x_t):=\frac{1}{\eta}(x_{t+1}-x_t).
$

\begin{theorem}[Effect of projection error]
\label{thm:inexact}
Under Assumptions~\ref{ass:smooth}, \ref{ass:residual}, and~\ref{ass:proj-acc},
run
$
x_{t+1}=\Proj_{\cX}(x_t+\eta g_t),
\qquad
\eta\le \frac{1}{L},
$
and suppose $\delta_t\le \bar\delta$ for all $t$.
Let
$
\hat t\in\arg\min_{0\le t\le T-1}\|\widetilde{\cG}_\eta(x_t)\|,
\hat x_T:=x_{\hat t}.
$
Then
$
\Gap(\hat x_T)
\le
\bigl(D+\eta(G+\bar\delta)\bigr)
\sqrt{\frac{4(\Phi_{\max}-\Phi_{\min})}{T\eta}+4\bar\delta^2}
+
D(\varepsilon+\bar\delta).
$
\end{theorem}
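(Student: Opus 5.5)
We mirror the proof of Theorem~\ref{thm:gap}, replacing the exact descent inequality with an inexact one and the exact gap-mapping bound with one that pays for the perturbation $e_t:=g_t-\nabla\Phi(x_t)$, where $\|e_t\|\le\bar\delta$. The argument has three steps: a perturbed ascent inequality, a telescoping bound on $\min_t\|\widetilde{\cG}_\eta(x_t)\|^2$, and a gap bound at $\hat x_T$ combined with Lemma~\ref{lem:gap_residual}.

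\textbf{Step 1 (perturbed Lyapunov inequality).} By Lemma~\ref{lem:proj_vi}, the projection characterization gives $\langle x_t+\eta g_t-x_{t+1},x_t-x_{t+1}\rangle\le 0$. Hence $\langle g_t,x_{t+1}-x_t\rangle\ge \frac1\eta\|x_{t+1}-x_t\|^2$. $L$-smoothness of $\Phi$ then yields
\[
\Phi(x_{t+1})\ge\Phi(x_t)+\langle\nabla\Phi(x_t),x_{t+1}-x_t\rangle-\tfrac L2\|x_{t+1}-x_t\|^2 .
\]
Write $\nabla\Phi(x_t)=g_t-e_t$ and use $\eta\le1/L$. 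This gives $\Phi(x_{t+1})-\Phi(x_t)\ge \frac{1}{2\eta}\|x_{t+1}-x_t\|^2-\bar\delta\|x_{t+1}-x_t\|$. Young's inequality in the form $\bar\delta\|s\|\le\frac1{4\eta}\|s\|^2+\eta\bar\delta^2$ turns this into
\[
\Phi(x_{t+1})-\Phi(x_t)\ge\frac{\eta}{4}\|\widetilde{\cG}_\eta(x_t)\|^2-\eta\bar\delta^2 .
\]

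\textbf{Step 2 (telescoping).} Summing over $t=0,\dots,T-1$ and bounding the telescoped difference by $\Phi_{\max}-\Phi_{\min}$ gives
\[
\min_t\|\widetilde{\cG}_\eta(x_t)\|^2\le\frac1T\sum_t\|\widetilde{\cG}_\eta(x_t)\|^2\le\frac{4(\Phi_{\max}-\Phi_{\min})}{T\eta}+4\bar\delta^2 .
\]
The right-hand side is exactly the quantity under the square root in the statement, and the left-hand side is attained at $\hat t$.

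\textbf{Step 3 (gap at $\hat x_T$).} Let $x=\hat x_T$ and $x^+=\Proj_{\cX}(x+\eta g)$. The projection inequality gives, for every $u\in\cX$, $\langle x+\eta g-x^+,u-x^+\rangle\le 0$, so $\eta\langle g,u-x^+\rangle\le\langle x^+-x,u-x^+\rangle$. Splitting $u-x=(u-x^+)+(x^+-x)$ then yields
\[
\langle g,u-x\rangle\le\|\widetilde{\cG}_\eta(x)\|\,\|u-x^+\|+\|g\|\,\eta\|\widetilde{\cG}_\eta(x)\|\le (D+\eta(G+\bar\delta))\|\widetilde{\cG}_\eta(x)\| ,
\]
using $\|u-x^+\|\le D$ and $\|g\|\le G+\bar\delta$. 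This is Lemma~\ref{lem:gap_mapping} with $\nabla\Phi$ replaced by $g$. Next, $\langle\nabla\Phi(x),u-x\rangle=\langle g,u-x\rangle-\langle e,u-x\rangle\le\langle g,u-x\rangle+D\bar\delta$, so $\Gap_\Phi(x)\le(D+\eta(G+\bar\delta))\|\widetilde{\cG}_\eta(x)\|+D\bar\delta$. Lemma~\ref{lem:gap_residual} together with Assumption~\ref{ass:residual} adds $D\varepsilon$. Inserting the Step~2 bound gives the claim.

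The main obstacle is Step~1. The cross term $-\langle e_t,x_{t+1}-x_t\rangle$ must be absorbed while retaining a constant compatible with the stated factors $4$ and $4\bar\delta^2$. If the split above leaves slack or mismatches these constants, the fallback is to reuse the exact argument of Theorem~\ref{thm:lyapunov} on the auxiliary point $\Proj_{\cX}(x_t+\eta\nabla\Phi(x_t))$. Nonexpansiveness of the projection gives $\|\widetilde{\cG}_\eta-\cG_\eta\|\le\bar\delta$, and the inequality $(a+b)^2\le2a^2+2b^2$ would then produce exactly the factor $4$ structure.
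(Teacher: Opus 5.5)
Your proof is correct and follows the paper's argument essentially step for step: the same inexact one-step Lyapunov bound (projection inequality with $z=x_t$, $L$-smoothness, and the identical Young split $\bar\delta\|\Delta\|\le\frac{1}{4\eta}\|\Delta\|^2+\eta\bar\delta^2$), the same telescoping, and the same inexact gap-mapping estimate split at $x^+$ with the $D\bar\delta$ and $D\varepsilon$ corrections. Step~1 already produces exactly the stated constants, so the fallback in your last paragraph is unnecessary; as sketched it would also not telescope cleanly, because the exact Lyapunov inequality controls $\Phi$ at the auxiliary point rather than at $x_{t+1}$.
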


The auxiliary inexact gap-mapping estimate and a term-by-term interpretation of the bound are recorded in Lemma~\ref{lem:gap_mapping_inexact} and Remark~\ref{rem:inexact_interp_app}.

\section{Experiments}
\label{sec:exp}

\begin{figure}[t]
  \centering
  \begin{subfigure}[t]{0.24\textwidth}
    \centering
    \includegraphics[width=\linewidth]{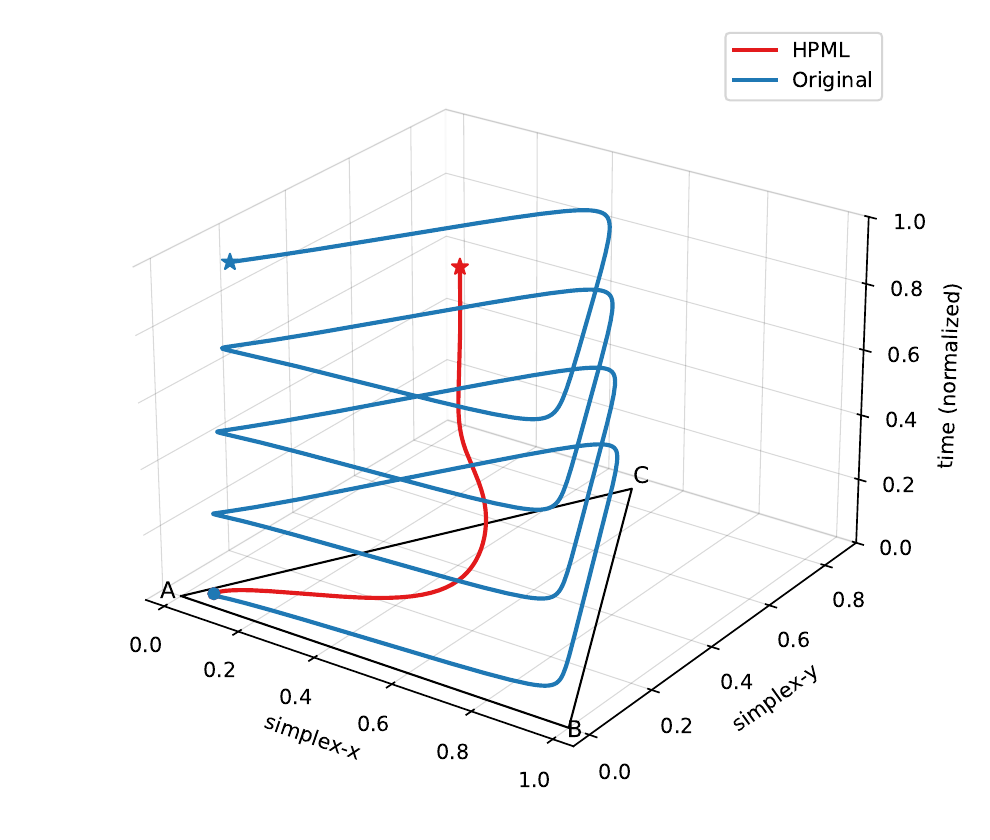}
    \caption{RPS simplex}
    \label{fig:interp_simplex_p}
  \end{subfigure}\hfill
  \begin{subfigure}[t]{0.24\textwidth}
    \centering
    \includegraphics[width=\linewidth]{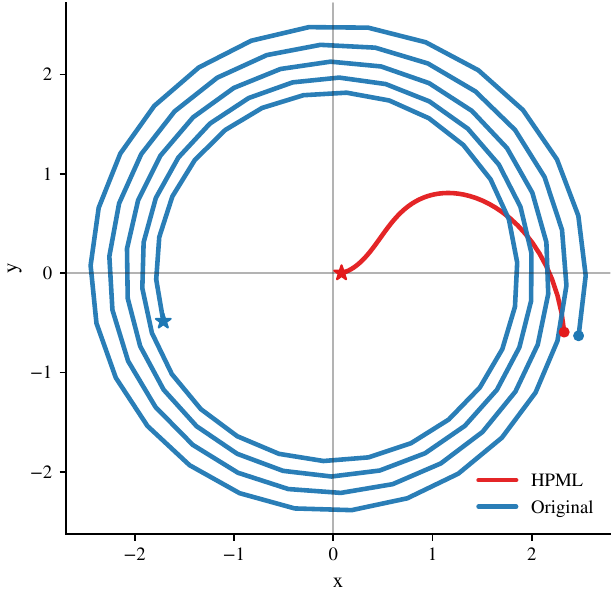}
    \caption{2D field}
    \label{fig:interp_bilinear_phase}
  \end{subfigure}\hfill
  \begin{subfigure}[t]{0.24\textwidth}
    \centering
    \includegraphics[width=\linewidth]{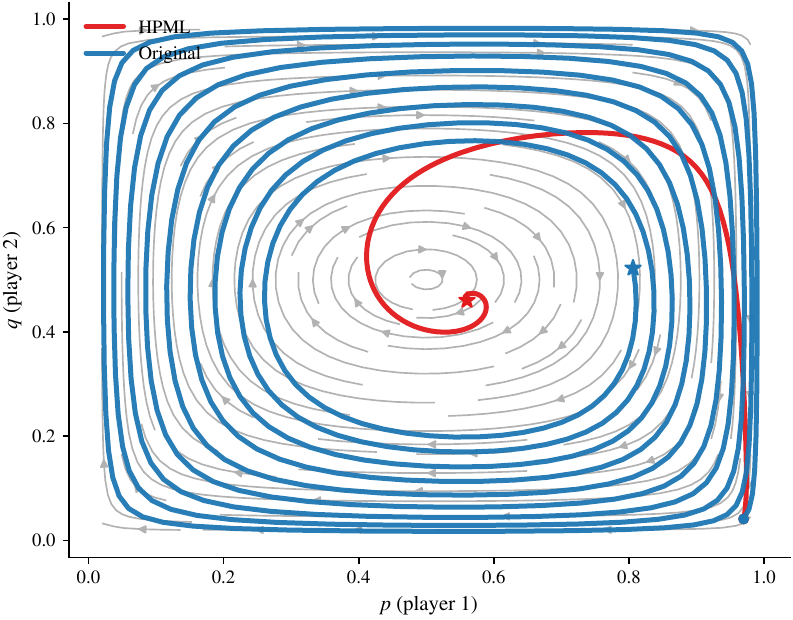}
    \caption{Logistic game}
    \label{fig:interp_logistic2x2}
  \end{subfigure}\hfill
  \begin{subfigure}[t]{0.24\textwidth}
    \centering
    \includegraphics[width=\linewidth]{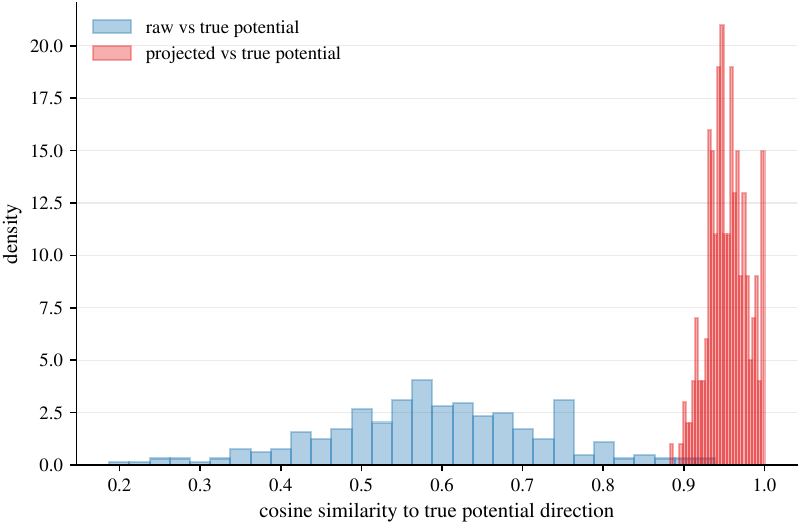}
    \caption{3D projection}
    \label{fig:interp_linear3d_correct}
  \end{subfigure}
  \caption{\textbf{Mechanism tests.} HPML suppresses circulation in controlled games and recovers the known potential direction in the linear 3D test. The first three panels compare raw and projected trajectories or fields; the fourth reports cosine similarity to the exact potential component $g_{\mathrm{pot}}(z)=-z$.}
  \label{fig:part1}
\end{figure}

\providecommand{\curveincludeorplaceholder}[1]{%
  \IfFileExists{#1}{\includegraphics[width=\linewidth]{#1}}{\fbox{\rule{0pt}{1.15in}\rule{0.95\linewidth}{0pt}}}%
}

\begin{figure}[h]
  \centering
  \begin{subfigure}[t]{0.32\textwidth}
    \centering
    \curveincludeorplaceholder{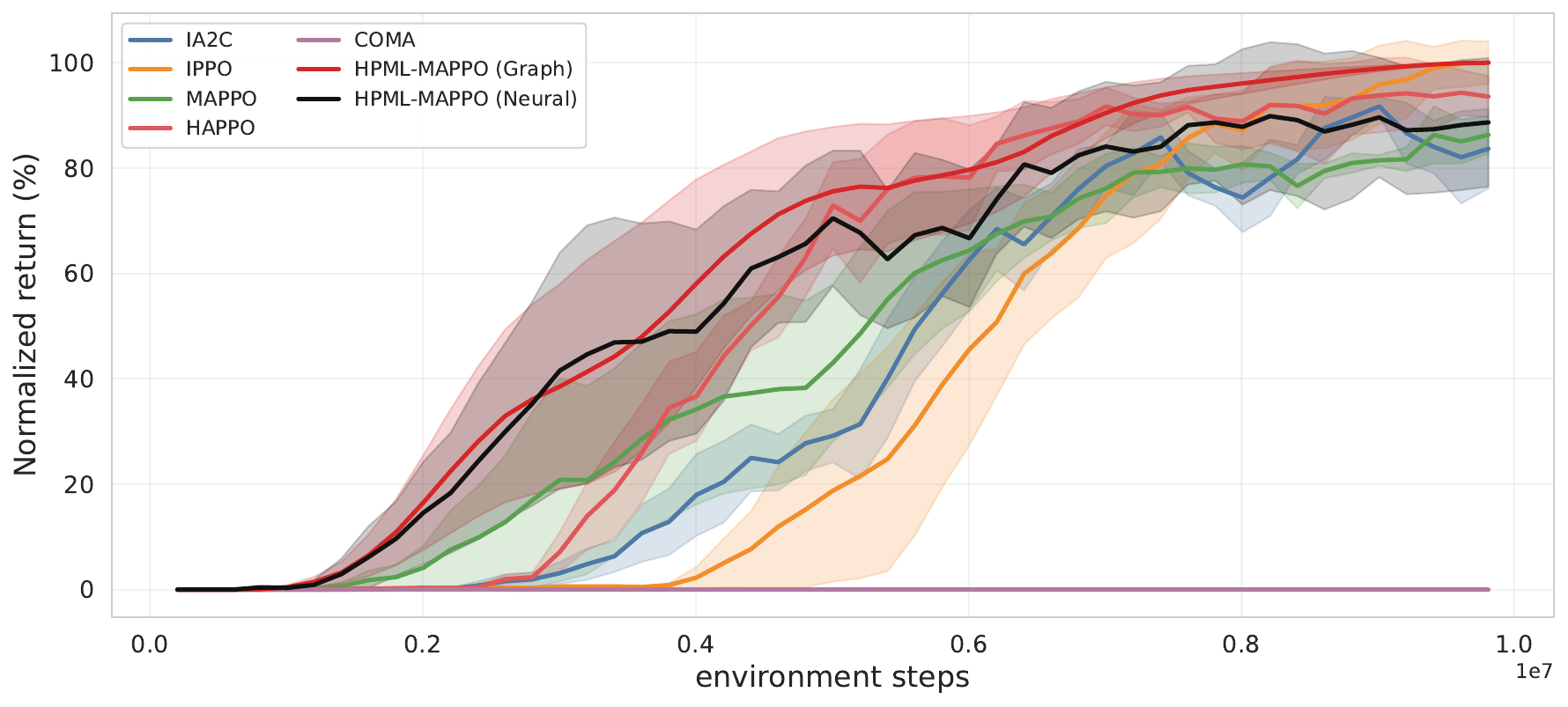}
    \caption{\texttt{bos\_repeated}}
    \label{fig:mp_bos_curve}
  \end{subfigure}\hfill
  \begin{subfigure}[t]{0.32\textwidth}
    \centering
    \curveincludeorplaceholder{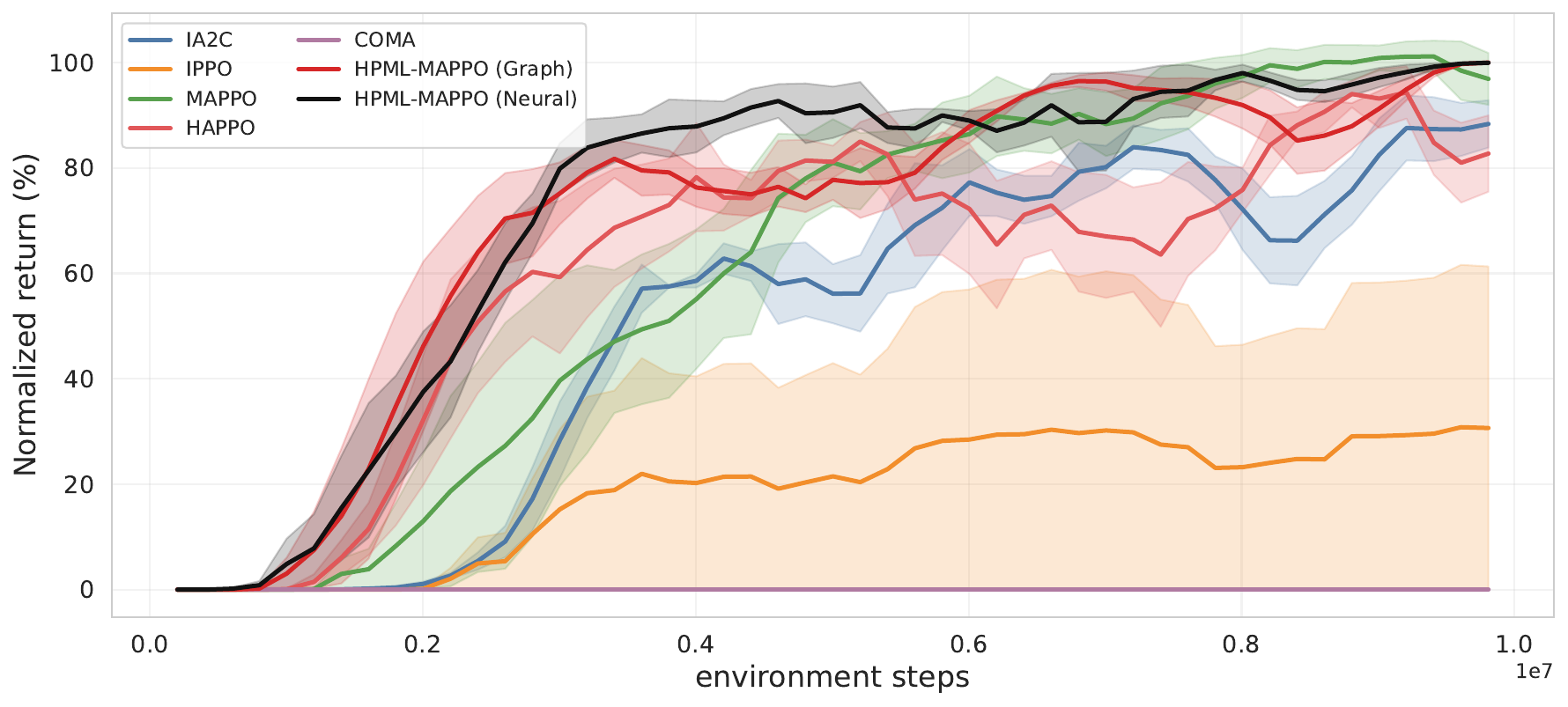}
    \caption{\texttt{collaborative\_cooking\_circuit}}
    \label{fig:mp_cc_circuit_curve}
  \end{subfigure}\hfill
  \begin{subfigure}[t]{0.32\textwidth}
    \centering
    \curveincludeorplaceholder{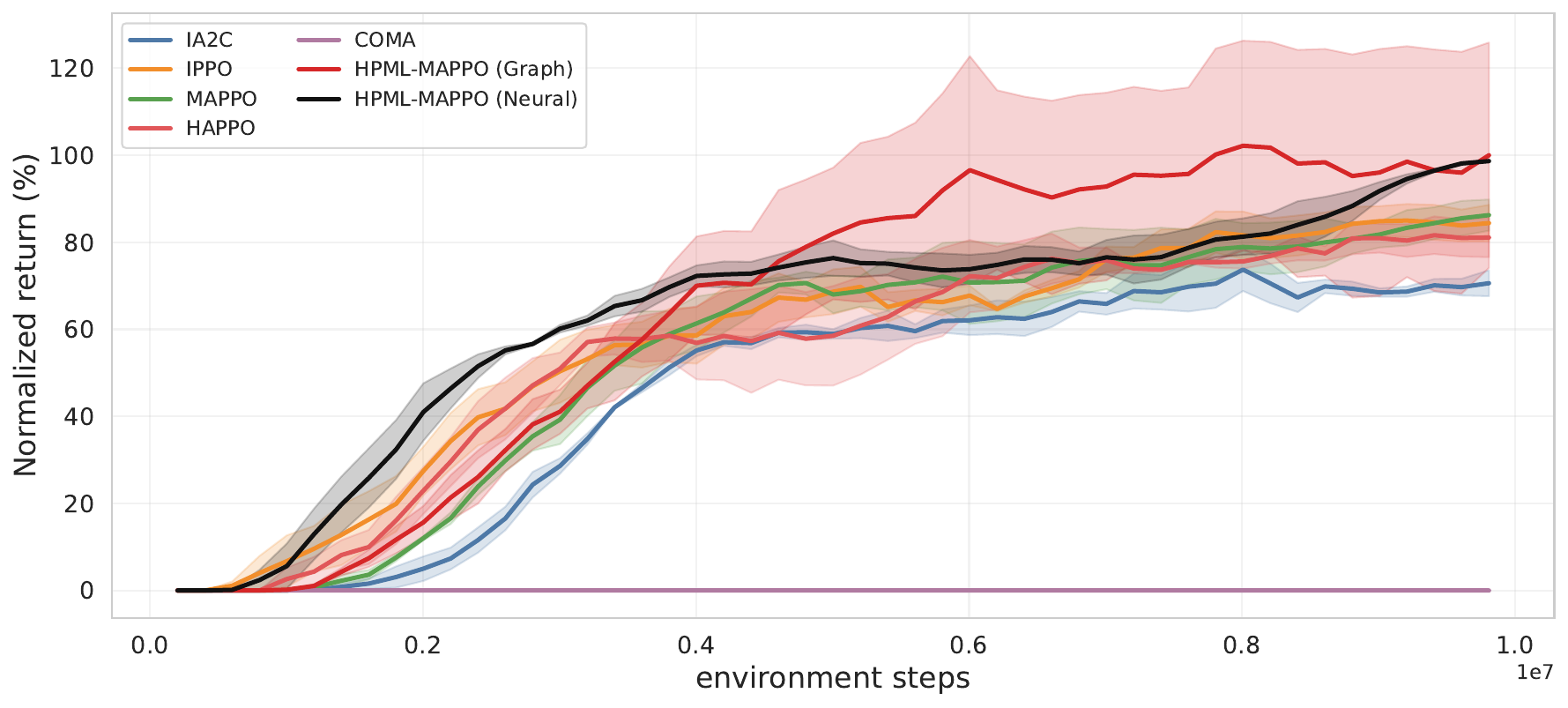}
    \caption{\texttt{clean\_up}}
    \label{fig:mp_cleanup_curve}
  \end{subfigure}
  \caption{\textbf{Representative Melting Pot learning curves.} Normalized return versus environment steps on three scenarios covering convention selection, embodied coordination, and sequential social dilemmas. Shaded bands denote standard error over $S=5$ seeds; remaining scenarios are reported in Appendix~\ref{app:exp_curves}.}
  \label{fig:meltingpot_main_curves}
\end{figure}

\begin{table}[h]
\centering
\small
\caption{\textbf{Melting Pot summary.} Final normalized return (\%) on three representative scenarios, macro average over the listed scenarios, and training-average raw cyclic residual energy (lower is better). Full per-scenario results are given in Appendix~\ref{app:exp_table_full}.}
\label{tab:meltingpot_summary}
\begin{tabular}{lccccc}
\toprule
Method & BoS-Rep & CC-Circuit & Clean Up & Shown Avg & 
Raw residual energy$\downarrow$ \\
% $\mathrm{NonPot}(\omega)\downarrow$ \\
\midrule
IA2C  & 83.7 & 88.3 & 70.6 & 80.9 & -- \\
IPPO  & 100.0 & 30.7 & 84.4 & 71.7 & -- \\
MAPPO  & 86.3 & 96.9 & 86.2 & 89.8 & -- \\
HAPPO  & 93.6 & 82.7 & 81.1 & 85.8 & -- \\
COMA  & 0.0 & 0.0 & 0.0 & 0.0 & -- \\
\midrule
HPML-MAPPO (Graph)  & 100.0 & 100.0 & 100.0 & 100.0 & 0.074 \\
HPML-MAPPO (Neural)  & 88.7 & 99.9 & 98.7 & 95.8 & 411.326 \\
% IA2C  & -- & -- & -- & -- & -- \\
% IPPO  & -- & -- & -- & -- & -- \\
% MAPPO & -- & -- & -- & -- & -- \\
% HAPPO & -- & -- & -- & -- & -- \\
% COMA  & -- & -- & -- & -- & -- \\
% \midrule
% HPML-MAPPO (Graph)  & -- & -- & -- & -- & -- \\
% HPML-MAPPO (Neural) & -- & -- & -- & -- & -- \\
\bottomrule
\end{tabular}
\end{table}

We evaluate HPML in two regimes: controlled games where the potential and cyclic components are interpretable, and high-dimensional CTDE benchmarks where the projection is used as a plug-in actor-update layer.

\subsection{Mechanism Tests in Interpretable Games}
\label{subsec:exp_interpretable}

We first isolate the geometric mechanism in four controlled settings: cyclic matrix games, a two-dimensional potential-plus-skew field, a logistic $2\times2$ mixed-strategy game, and a linear 3D field with a known potential component. Details of the testbeds and metrics are given in Appendix~\ref{app:exp_interpretable_details}. Across these settings, we compare the raw update field with the HPML-projected field and track circulation, path length, graph non-potentiality $\mathrm{NonPot}(\omega)$, and a gap/exploitability proxy when available.

These tests verify the intended mechanism: HPML preserves the potential-flow component while reducing sample-graph circulation. The 3D test checks projection accuracy against a known potential component in addition to trajectory visualizations.

\subsection{Melting Pot CTDE Benchmarks}
\label{subsec:exp_meltingpot}

We next evaluate HPML in high-dimensional, partially observed MARL using a 10-scenario Melting Pot suite~\citep{leibo2021scalable,agapiou2022melting}. The suite covers convention selection, repeated coordination, social dilemmas, and embodied team coordination. 
The main text reports three representative scenarios---\texttt{bos\_repeated}, \texttt{collaborative\_cooking\_circuit}, and \texttt{clean\_up}---while the remaining scenarios, hyperparameters, and full tables are deferred to Appendix~\ref{app:exp_impl}--\ref{app:exp_diag}.

We compare IA2C, IPPO, MAPPO, and HAPPO against two HPML variants built on MAPPO: \textbf{HPML-MAPPO (Graph)} and \textbf{HPML-MAPPO (Neural)}. COMA is reported in the full appendix table for completeness. All methods use the same actor/critic backbone, rollout budget, and observation interface. Curves show mean return with standard-error bands over $S=5$ seeds; Table~\ref{tab:meltingpot_summary} reports final means.

The controlled tasks test the geometric mechanism directly, while the Melting Pot suite tests the same projection inside a high-dimensional CTDE pipeline. Because HPML-MAPPO uses the same backbone, rollout budget, and observation interface as MAPPO, the comparison isolates the effect of projecting the stacked actor-update field.

\section{Conclusion}

We introduced Hodge-Projected Multi-Agent Learning (HPML), a geometric projection method that separates a joint multi-agent update field into a metric-gradient potential component and a non-potential residual. The projected component admits Lyapunov-style improvement, and the residual captures cyclic structure that appears explicitly in the VI-gap bound. Graph-based and amortized implementations make HPML usable as a CTDE plug-in, and the experiments show reduced circulation in controlled games and improved stability and normalized return on the evaluated Melting Pot suite. Key limitations are the dependence on the chosen metric, sampling distribution, field-estimation quality, and projection approximation error.

{
\small
\nocite{*}
\bibliographystyle{unsrtnat}
\bibliography{ref}

@book{puterman2014markov,
  title={Markov decision processes: discrete stochastic dynamic programming},
  author={Puterman, Martin L},
  year={2014},
  publisher={John Wiley \& Sons}
}

@inproceedings{DBLP:conf/nips/ZinkevichGL05,
  author       = {Martin Zinkevich and
                  Amy Greenwald and
                  Michael L. Littman},
  title        = {Cyclic Equilibria in Markov Games},
  booktitle    = {Advances in Neural Information Processing Systems 18 [Neural Information
                  Processing Systems, {NIPS} 2005, December 5-8, 2005, Vancouver, British
                  Columbia, Canada]},
  pages        = {1641--1648},
  year         = {2005},
  url          = {https://proceedings.neurips.cc/paper/2005/hash/9752d873fa71c19dc602bf2a0696f9b5-Abstract.html},
  bibsource    = {dblp computer science bibliography, https://dblp.org}
}

@article{tang2026agent,
  title={Agent Alpha: Tree Search Unifying Generation, Exploration and Evaluation for Computer-Use Agents},
  author={Tang, Sizhe and Chen, Rongqian and Lan, Tian},
  journal={arXiv preprint arXiv:2602.02995},
  year={2026}
}

@article{li2026acdzero,
  title={Acdzero: Graph-embedding-based tree search for mastering automated cyber defense},
  author={Li, Yu and Tang, Sizhe and Chen, Rongqian and Yu, Fei Xu and Jiang, Guangyu and Imani, Mahdi and Bastian, Nathaniel D and Lan, Tian},
  journal={arXiv preprint arXiv:2601.02196},
  year={2026}
}

@article{zhang2026operator,
  title={Operator-Guided Invariance Learning for Continuous Reinforcement Learning},
  author={Zhang, Zuyuan and Yu, Fei Xu and Lan, Tian},
  journal={arXiv preprint arXiv:2605.06500},
  year={2026}
}

@article{tang2026nonzero,
  title={NonZero: Interaction-Guided Exploration for Multi-Agent Monte Carlo Tree Search},
  author={Tang, Sizhe and Zhang, Zuyuan and Imani, Mahdi and Lan, Tian},
  journal={arXiv preprint arXiv:2605.00751},
  year={2026}
}

@book{sutton1998reinforcement,
  title={Reinforcement learning: An introduction},
  author={Sutton, Richard S and Barto, Andrew G and others},
  volume={1},
  number={1},
  year={1998},
  publisher={MIT press Cambridge}
}

@article{shapley1953stochastic,
  title={Stochastic games},
  author={Shapley, Lloyd S},
  journal={Proceedings of the national academy of sciences},
  volume={39},
  number={10},
  pages={1095--1100},
  year={1953},
  publisher={National Academy of Sciences}
}

@incollection{littman1994markov,
  title={Markov games as a framework for multi-agent reinforcement learning},
  author={Littman, Michael L},
  booktitle={Machine learning proceedings 1994},
  pages={157--163},
  year={1994},
  publisher={Elsevier}
}

@article{bucsoniu2010multi,
  title={Multi-agent reinforcement learning: An overview},
  author={Bu{\c{s}}oniu, Lucian and Babu{\v{s}}ka, Robert and De Schutter, Bart},
  journal={Innovations in multi-agent systems and applications-1},
  pages={183--221},
  year={2010},
  publisher={Springer}
}

@article{bertsekas1997nonlinear,
  title={Nonlinear programming},
  author={Bertsekas, Dimitri P},
  journal={Journal of the Operational Research Society},
  volume={48},
  number={3},
  pages={334--334},
  year={1997},
  publisher={Taylor \& Francis}
}

@inproceedings{balduzzi2018mechanics,
  title={The mechanics of n-player differentiable games},
  author={Balduzzi, David and Racaniere, Sebastien and Martens, James and Foerster, Jakob and Tuyls, Karl and Graepel, Thore},
  booktitle={International Conference on Machine Learning},
  pages={354--363},
  year={2018},
  organization={PMLR}
}

@article{monderer1996potential,
  title={Potential games},
  author={Monderer, Dov and Shapley, Lloyd S},
  journal={Games and economic behavior},
  volume={14},
  number={1},
  pages={124--143},
  year={1996},
  publisher={Elsevier}
}

@inproceedings{schulman2015trust,
  title={Trust region policy optimization},
  author={Schulman, John and Levine, Sergey and Abbeel, Pieter and Jordan, Michael and Moritz, Philipp},
  booktitle={International conference on machine learning},
  pages={1889--1897},
  year={2015},
  organization={PMLR}
}

@article{schulman2015high,
  title={High-dimensional continuous control using generalized advantage estimation},
  author={Schulman, John and Moritz, Philipp and Levine, Sergey and Jordan, Michael and Abbeel, Pieter},
  journal={arXiv preprint arXiv:1506.02438},
  year={2015}
}

@article{schulman2017proximal,
  title={Proximal policy optimization algorithms},
  author={Schulman, John and Wolski, Filip and Dhariwal, Prafulla and Radford, Alec and Klimov, Oleg},
  journal={arXiv preprint arXiv:1707.06347},
  year={2017}
}

@article{yu2022surprising,
  title={The surprising effectiveness of ppo in cooperative multi-agent games},
  author={Yu, Chao and Velu, Akash and Vinitsky, Eugene and Gao, Jiaxuan and Wang, Yu and Bayen, Alexandre and Wu, Yi},
  journal={Advances in neural information processing systems},
  volume={35},
  pages={24611--24624},
  year={2022}
}

@article{lowe2017multi,
  title={Multi-agent actor-critic for mixed cooperative-competitive environments},
  author={Lowe, Ryan and Wu, Yi I and Tamar, Aviv and Harb, Jean and Pieter Abbeel, OpenAI and Mordatch, Igor},
  journal={Advances in neural information processing systems},
  volume={30},
  year={2017}
}

@inproceedings{foerster2018counterfactual,
  title={Counterfactual multi-agent policy gradients},
  author={Foerster, Jakob and Farquhar, Gregory and Afouras, Triantafyllos and Nardelli, Nantas and Whiteson, Shimon},
  booktitle={Proceedings of the AAAI conference on artificial intelligence},
  volume={32},
  number={1},
  year={2018}
}

@article{rashid2020monotonic,
  title={Monotonic value function factorisation for deep multi-agent reinforcement learning},
  author={Rashid, Tabish and Samvelyan, Mikayel and De Witt, Christian Schroeder and Farquhar, Gregory and Foerster, Jakob and Whiteson, Shimon},
  journal={Journal of Machine Learning Research},
  volume={21},
  number={178},
  pages={1--51},
  year={2020}
}

@article{sunehag2017value,
  title={Value-decomposition networks for cooperative multi-agent learning},
  author={Sunehag, Peter and Lever, Guy and Gruslys, Audrunas and Czarnecki, Wojciech Marian and Zambaldi, Vinicius and Jaderberg, Max and Lanctot, Marc and Sonnerat, Nicolas and Leibo, Joel Z and Tuyls, Karl and others},
  journal={arXiv preprint arXiv:1706.05296},
  year={2017}
}

@article{mescheder2017numerics,
  title={The numerics of gans},
  author={Mescheder, Lars and Nowozin, Sebastian and Geiger, Andreas},
  journal={Advances in neural information processing systems},
  volume={30},
  year={2017}
}

@article{gidel2018variational,
  title={A variational inequality perspective on generative adversarial networks},
  author={Gidel, Gauthier and Berard, Hugo and Vignoud, Ga{\"e}tan and Vincent, Pascal and Lacoste-Julien, Simon},
  journal={arXiv preprint arXiv:1802.10551},
  year={2018}
}

@article{daskalakis2017training,
  title={Training gans with optimism},
  author={Daskalakis, Constantinos and Ilyas, Andrew and Syrgkanis, Vasilis and Zeng, Haoyang},
  journal={arXiv preprint arXiv:1711.00141},
  year={2017}
}

@article{korpelevich1977extragradient,
  title={Extragradient method for finding saddle points and other problems},
  author={Korpelevich, Galina M},
  journal={Matekon},
  volume={13},
  number={4},
  pages={35--49},
  year={1977},
  publisher={ME SHARPE INC 80 BUSINESS PARK DR, ARMONK, NY 10504}
}

@article{nemirovski2004prox,
  title={Prox-method with rate of convergence O (1/t) for variational inequalities with Lipschitz continuous monotone operators and smooth convex-concave saddle point problems},
  author={Nemirovski, Arkadi},
  journal={SIAM Journal on Optimization},
  volume={15},
  number={1},
  pages={229--251},
  year={2004},
  publisher={SIAM}
}

@article{popov1980modification,
  title={A modification of the Arrow-Hurwicz method for search of saddle points},
  author={Popov, Leonid Denisovich},
  journal={Mathematical notes of the Academy of Sciences of the USSR},
  volume={28},
  number={5},
  pages={845--848},
  year={1980},
  publisher={Springer}
}

@article{mertikopoulos2018optimistic,
  title={Optimistic mirror descent in saddle-point problems: Going the extra (gradient) mile},
  author={Mertikopoulos, Panayotis and Lecouat, Bruno and Zenati, Houssam and Foo, Chuan-Sheng and Chandrasekhar, Vijay and Piliouras, Georgios},
  journal={arXiv preprint arXiv:1807.02629},
  year={2018}
}

@article{zhang2025lipschitz,
  title={Lipschitz lifelong monte carlo tree search for mastering non-stationary tasks},
  author={Zhang, Zuyuan and Lan, Tian},
  journal={arXiv preprint arXiv:2502.00633},
  year={2025}
}

@article{zhang2025tail,
  title={Tail-risk-safe monte carlo tree search under pac-level guarantees},
  author={Zhang, Zuyuan and Ghosh, Arnob and Lan, Tian},
  journal={arXiv preprint arXiv:2508.05441},
  year={2025}
}

@article{fang2026manifold,
  title={Manifold-Constrained Energy-Based Transition Models for Offline Reinforcement Learning},
  author={Fang, Zeyu and Zhang, Zuyuan and Imani, Mahdi and Lan, Tian},
  journal={arXiv preprint arXiv:2602.02900},
  year={2026}
}

@article{zhang2026structuring,
  title={Structuring Value Representations via Geometric Coherence in Markov Decision Processes},
  author={Zhang, Zuyuan and Fang, Zeyu and Lan, Tian},
  journal={arXiv preprint arXiv:2602.02978},
  year={2026}
}

@article{zhang2026geometry,
  title={Geometry of drifting mdps with path-integral stability certificates},
  author={Zhang, Zuyuan and Imani, Mahdi and Lan, Tian},
  journal={arXiv preprint arXiv:2601.21991},
  year={2026}
}

@article{zhang2026cochain,
  title={Cochain Perspectives on Temporal-Difference Signals for Learning Beyond Markov Dynamics},
  author={Zhang, Zuyuan and Tang, Sizhe and Lan, Tian},
  journal={arXiv preprint arXiv:2602.06939},
  year={2026}
}

@book{facchinei2003finite,
  title={Finite-dimensional variational inequalities and complementarity problems},
  author={Facchinei, Francisco and Pang, Jong-Shi},
  year={2003},
  publisher={Springer}
}

@book{hodge1989theory,
  title={The theory and applications of harmonic integrals},
  author={Hodge, William Vallance Douglas},
  year={1989},
  publisher={CUP Archive}
}

@article{eckmann1944harmonische,
  title={Harmonische funktionen und randwertaufgaben in einem komplex},
  author={Eckmann, Beno},
  journal={Commentarii Mathematici Helvetici},
  volume={17},
  number={1},
  pages={240--255},
  year={1944},
  publisher={Springer}
}

@book{chung1997spectral,
  title={Spectral graph theory},
  author={Chung, Fan RK},
  volume={92},
  year={1997},
  publisher={American Mathematical Soc.}
}

@article{von2007tutorial,
  title={A tutorial on spectral clustering},
  author={Von Luxburg, Ulrike},
  journal={Statistics and computing},
  volume={17},
  number={4},
  pages={395--416},
  year={2007},
  publisher={Springer}
}

@inproceedings{spielman2010algorithms,
  title={Algorithms, graph theory, and linear equations in Laplacian matrices},
  author={Spielman, Daniel A},
  booktitle={Proceedings of the International Congress of Mathematicians 2010 (ICM 2010) (In 4 Volumes) Vol. I: Plenary Lectures and Ceremonies Vols. II--IV: Invited Lectures},
  pages={2698--2722},
  year={2010},
  organization={World Scientific}
}

@misc{hawrylycz2012discrete,
  title={Discrete Calculus: Applied Analysis on Graphs for Computational Science, Leo Grady, Jonathan Polimeni, Springer (2010), \$129.00, ISBN: 978-1-84996-289-6},
  author={Hawrylycz, Mike},
  year={2012},
  publisher={Pergamon}
}

@article{lim2020hodge,
  title={Hodge Laplacians on graphs},
  author={Lim, Lek-Heng},
  journal={Siam Review},
  volume={62},
  number={3},
  pages={685--715},
  year={2020},
  publisher={SIAM}
}

@inproceedings{leibo2021scalable,
  title={Scalable evaluation of multi-agent reinforcement learning with melting pot},
  author={Leibo, Joel Z and Due{\~n}ez-Guzman, Edgar A and Vezhnevets, Alexander and Agapiou, John P and Sunehag, Peter and Koster, Raphael and Matyas, Jayd and Beattie, Charlie and Mordatch, Igor and Graepel, Thore},
  booktitle={International conference on machine learning},
  pages={6187--6199},
  year={2021},
  organization={PMLR}
}

@article{agapiou2022melting,
  title={Melting Pot 2.0},
  author={Agapiou, John P and Vezhnevets, Alexander Sasha and Du{\'e}{\~n}ez-Guzm{\'a}n, Edgar A and Matyas, Jayd and Mao, Yiran and Sunehag, Peter and K{\"o}ster, Raphael and Madhushani, Udari and Kopparapu, Kavya and Comanescu, Ramona and others},
  journal={arXiv preprint arXiv:2211.13746},
  year={2022}
}

@inproceedings{zhang2025learning,
  title={Learning to collaborate with unknown agents in the absence of reward},
  author={Zhang, Zuyuan and Zhou, Hanhan and Imani, Mahdi and Lee, Taeyoung and Lan, Tian},
  booktitle={Proceedings of the AAAI Conference on Artificial Intelligence},
  volume={39},
  number={13},
  pages={14502--14511},
  year={2025}
}

@article{zhang2024modeling,
  title={Modeling other players with bayesian beliefs for games with incomplete information},
  author={Zhang, Zuyuan and Imani, Mahdi and Lan, Tian},
  journal={arXiv preprint arXiv:2405.14122},
  year={2024}
}

@inproceedings{zhang2025network,
  title={Network diffuser for placing-scheduling service function chains with inverse demonstration},
  author={Zhang, Zuyuan and Aggarwal, Vaneet and Lan, Tian},
  booktitle={IEEE INFOCOM 2025-IEEE Conference on Computer Communications},
  pages={1--10},
  year={2025},
  organization={IEEE}
}

@inproceedings{qiao2024br,
  title={Br-defedrl: Byzantine-robust decentralized federated reinforcement learning with fast convergence and communication efficiency},
  author={Qiao, Jing and Zhang, Zuyuan and Yue, Sheng and Yuan, Yuan and Cai, Zhipeng and Zhang, Xiao and Ren, Ju and Yu, Dongxiao},
  booktitle={IEEE infocom 2024-IEEE conference on computer communications},
  pages={141--150},
  year={2024},
  organization={IEEE}
}

@inproceedings{ravari2024adversarial,
  title={Adversarial inverse learning of defense policies conditioned on human factor models},
  author={Ravari, Amirhossein and Jiang, Guangyu and Zhang, Zuyuan and Imani, Mahdi and Thomson, Robert H and Pyke, Aryn A and Bastian, Nathaniel D and Lan, Tian},
  booktitle={2024 58th Asilomar Conference on Signals, Systems, and Computers},
  pages={188--195},
  year={2024},
  organization={IEEE}
}

@article{zhang2026lisfc,
  title={Lisfc-search: Lifelong search for network sfc optimization under non-stationary drifts},
  author={Zhang, Zuyuan and Aggarwal, Vaneet and Lan, Tian},
  journal={arXiv preprint arXiv:2602.14360},
  year={2026}
}

@article{tang2025malinzero,
  title={Malinzero: Efficient low-dimensional search for mastering complex multi-agent planning},
  author={Tang, Sizhe and Chen, Jiayu and Lan, Tian},
  journal={arXiv preprint arXiv:2511.06142},
  year={2025}
}
}

%%%%%%%%%%%%%%%%%%%%%%%%%%%%%%%%%%%%%%%%%%%%%%%%%%%%%%%%%%%%

\appendix
\newpage

\section{Additional Experimental Details and Full Results}
\label{app:exp_full}

\subsection{Mechanism-test details}
\label{app:exp_interpretable_details}

The controlled experiments in Section~\ref{subsec:exp_interpretable} use four diagnostic settings.
First, cyclic matrix games, including Rock--Paper--Scissors and generalized cyclic games on $\Delta_K\times\Delta_K$, expose cyclic interaction dynamics in a familiar simplex geometry.
Second, the continuous field $g(z)=-z+\rho Jz$ separates a contractive potential component from a tunable skew component; increasing $\rho$ increases transient swirl while preserving the radial potential part.
Third, the logistic $2\times 2$ mixed-strategy game parameterizes strategies by logits $(a,b)$ and visualizes the induced field on $(p,q)=(\sigma(a),\sigma(b))$.
Fourth, the linear 3D field $g(z)=-z+\rho Sz$ uses a known skew-symmetric matrix $S$, so the exact potential component is $-z$ and projection accuracy can be measured directly.

For each setting, we compare the raw update field with the HPML-projected field computed on a sample graph.
The main diagnostics are windowed orbit diameter, total path length, the graph residual ratio $\mathrm{NonPot}(\omega)$, and a dual-gap or exploitability proxy when the game admits one.
These experiments are intended to verify the mechanism rather than to serve as a large-scale benchmark: increasing $\rho$ should increase the measured non-potential component and the amount of cycling in the raw dynamics, while HPML should reduce both by retaining only the potential-flow component.

\subsection{Implementation details}
\label{app:exp_impl}

This appendix provides the implementation details omitted from the main text,
including Melting Pot scenario handling, observation preprocessing,
CTDE training configuration, baseline-specific settings, HPML graph construction,
neural-projection optimization, and evaluation protocol.

\paragraph{Melting Pot specifics.}
For Melting Pot, we follow a unified preprocessing and rollout interface across all scenarios.
We keep the actor/critic backbone fixed across scenarios unless otherwise stated,
and only adjust environment-dependent quantities such as action masking,
episode horizon, and reward normalization.

\paragraph{Architectures.}
All methods share the same actor/critic backbone unless otherwise stated.
For CTDE methods, we use decentralized actors with a centralized critic;
parameter counts are matched as closely as possible across baselines.

\paragraph{Training protocol.}
We train each method for a fixed environment-step budget on every Melting Pot scenario,
using $S$ seeds (default $S{=}5$).
We report mean and standard error across seeds.
Learning curves are smoothed only for visualization.

\paragraph{HPML specifics.}
For HPML-Graph, we build a $k$-NN graph over a buffer of recent policy/parameter points $\{x^i\}$ and estimate edge flows
$\omega_{ij}=\langle M\frac{F_i+F_j}{2}, x^j-x^i\rangle$ (Section~\ref{sec:discrete}).
We solve the graph Poisson system with a fixed gauge and lift the potential to node directions via local least squares.
For HPML-Neural, we fit a scalar potential network $\Phi_\theta$ by minimizing the empirical projection loss
(Section~\ref{sec:neural-projection}) and compute projected directions by autograd.

\paragraph{Hyperparameters.}
We sweep a small grid for each method and report the best validation configuration.
Table~\ref{tab:hparams} lists the final hyperparameters used in the main experiments.

\begin{table}[t]
\centering
\small
\caption{Hyperparameters used in Melting Pot experiments.}
\label{tab:hparams}
\begin{tabular}{lcc}
\toprule\toprule
Hyperparameter & Value & Notes \\
\midrule\midrule
Total env steps & $10{,}000{,}000$ & per scenario \\
Rollout length / batch size & $128$ / $256 \times n_{\mathrm{agents}}$ & 2 vectorized envs; minibatch size $4096$ \\
Actor LR / critic LR & $3\times10^{-4}$ / $3\times10^{-4}$ & Adam, $\epsilon=10^{-5}$ \\
PPO clip $\epsilon$ & $0.2$ & for PPO-style methods \\
Entropy coef & $0.01$ & \\
HPML graph $k$ & $4$ & kNN degree \\
HPML refresh rate & $8$ & actor update/minibatch steps between projections \\
HPML ridge $\lambda$ (lifting) & $10^{-4}$ & stability \\
Neural potential LR / inner steps & $10^{-3}$ / $2$ & if HPML-Neural \\
% Total env steps & -- & per scenario \\
% Rollout length / batch size & -- & \\
% Actor LR / critic LR & -- & \\
% PPO clip $\epsilon$ & -- & for PPO-style methods \\
% Entropy coef & -- & \\
% HPML graph $k$ & -- & kNN degree \\
% HPML refresh rate & -- & steps between projections \\
% HPML ridge $\lambda$ (lifting) & -- & stability \\
% Neural potential LR / inner steps & -- & if HPML-Neural \\
\bottomrule\bottomrule
\end{tabular}
\end{table}

\paragraph{Compute resources and software.}
All Melting Pot experiments were run on a single machine equipped with four NVIDIA RTX 6000 Ada Generation GPUs, each with 49,140 MiB of memory, an AMD EPYC 7713 64-Core Processor CPU, and 1.0 TiB of RAM. Software versions were Python 3.13.9, NumPy 2.4.2, and CUDA 13.0 as reported by NVIDIA-SMI.

\paragraph{Existing assets and licenses.}
The experiments use the Melting Pot benchmark \citep{leibo2021scalable,agapiou2022melting}, which is distributed under the Apache-2.0 license. We use the benchmark only for evaluation and do not redistribute modified benchmark assets. The anonymized supplementary code package includes its own license and documentation.

\subsection{Additional Melting Pot learning curves (remaining seven scenarios)}
\label{app:exp_curves}

\providecommand{\curveincludeorplaceholder}[1]{%
  \IfFileExists{#1}{\includegraphics[width=\linewidth]{#1}}{\fbox{\rule{0pt}{1.15in}\rule{0.95\linewidth}{0pt}}}%
}
\newcommand{\curveplaceholder}[3]{%
\begin{figure}[t]
  \centering
  \curveincludeorplaceholder{#3}
  \caption{Normalized return versus environment steps on \textbf{#1}.}
  \label{fig:curve:#2}
\end{figure}
}

Figure~\ref{fig:meltingpot_main_curves} in the main text shows three representative scenarios (\texttt{bos\_repeated}, \texttt{collaborative\_cooking\_circuit}, and \texttt{clean\_up}). Here we report the remaining seven scenarios.

\curveplaceholder{Pure Coordination Repeated}{pc_rep}{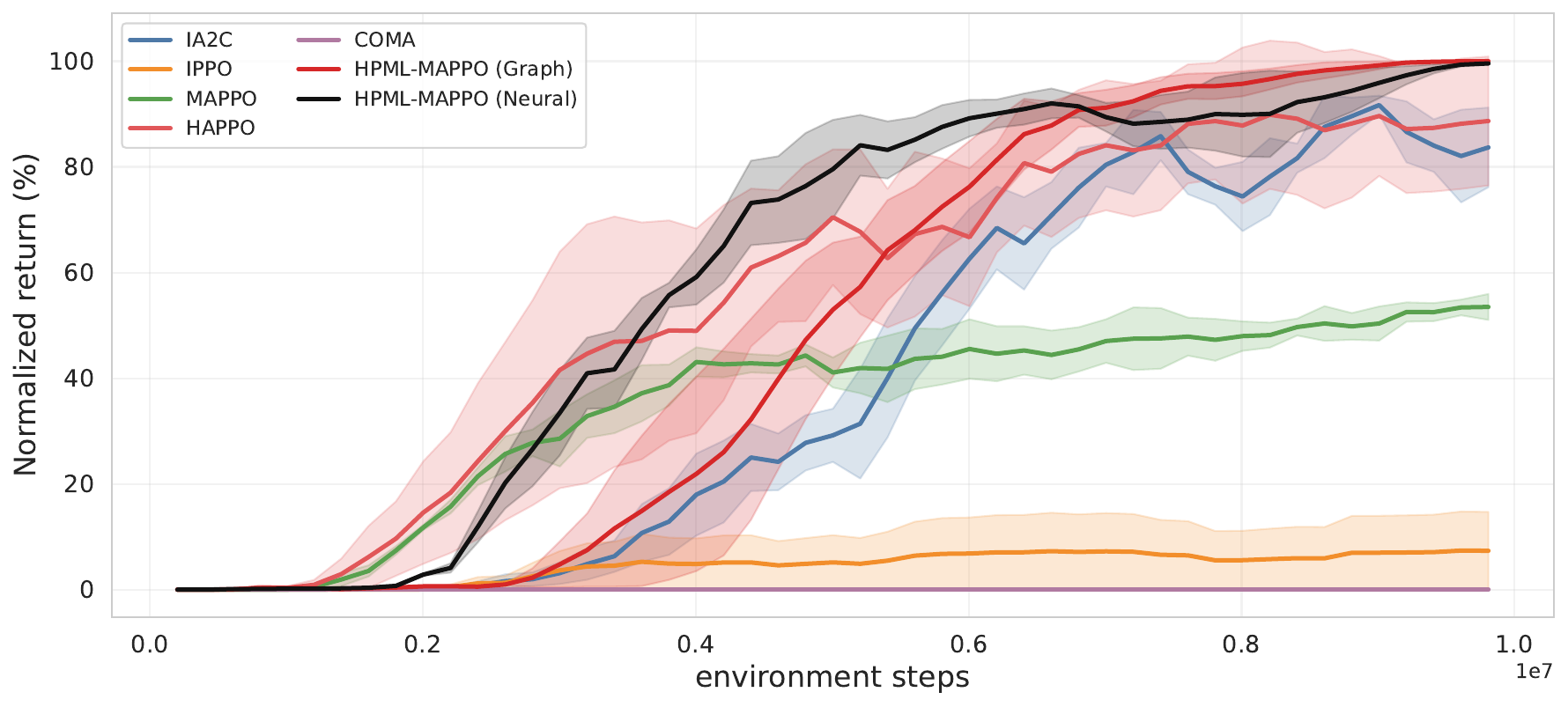}
\curveplaceholder{Rationalizable Coordination Repeated}{rc_rep}{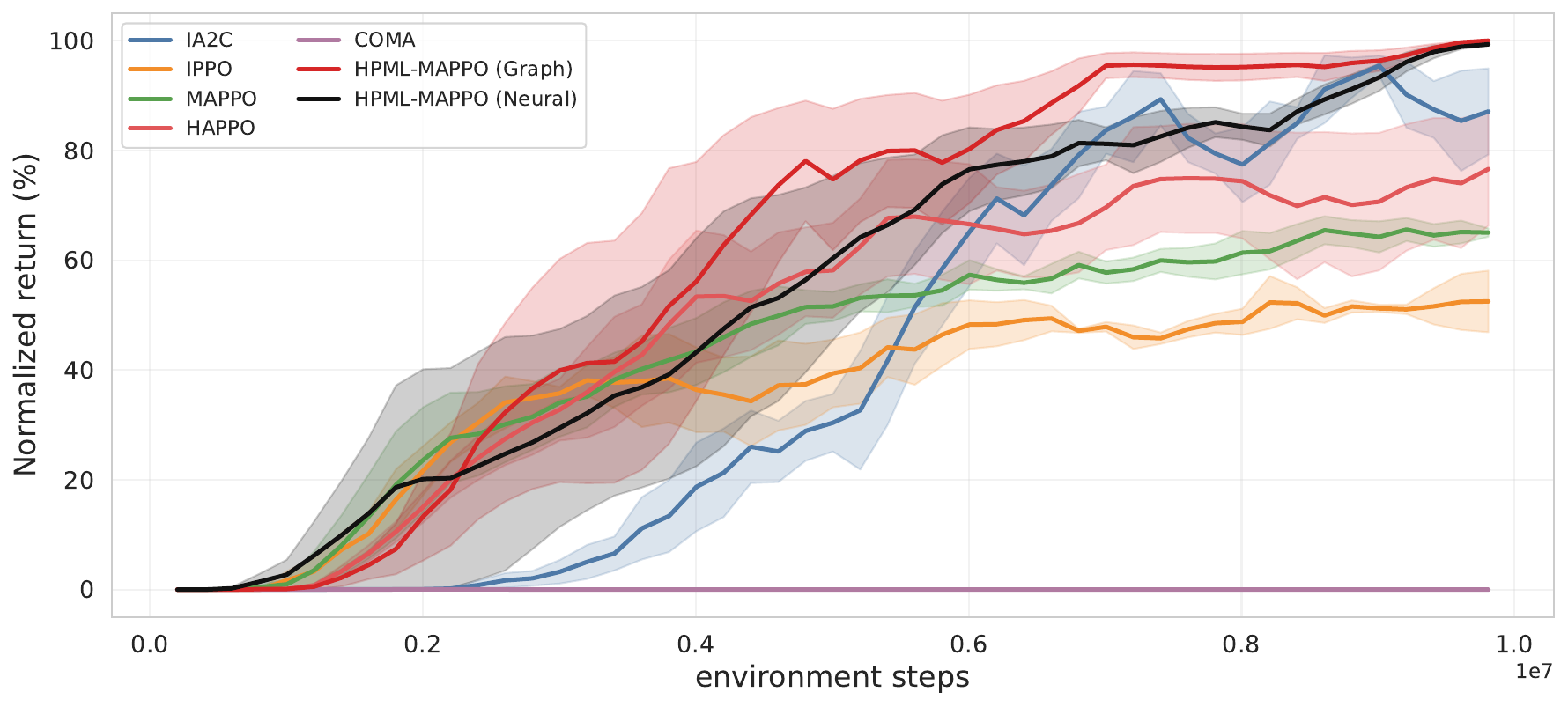}
\curveplaceholder{Stag Hunt Repeated}{sh_rep}{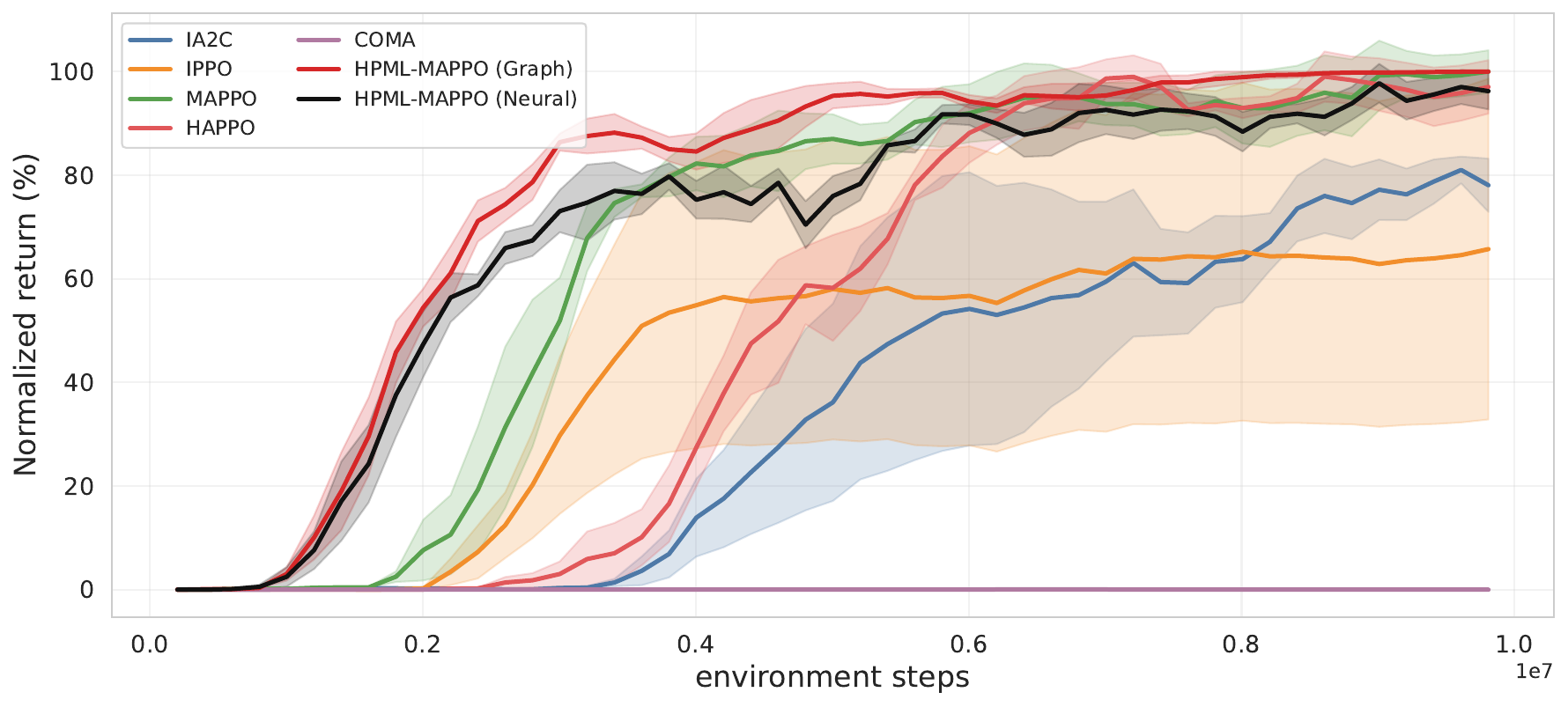}
\curveplaceholder{Prisoners Dilemma Repeated}{pd_rep}{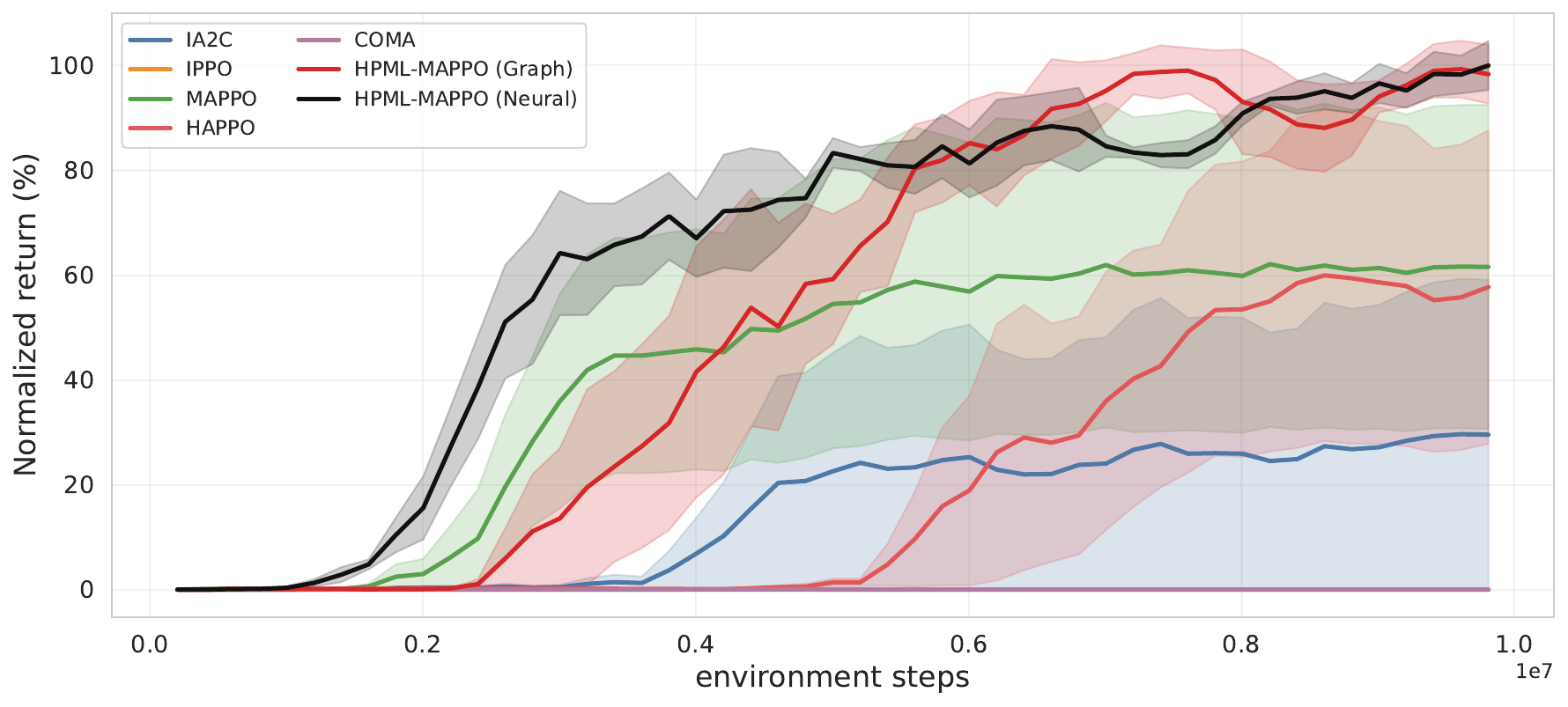}
\curveplaceholder{Collaborative Cooking: Crowded}{cc_crowded}{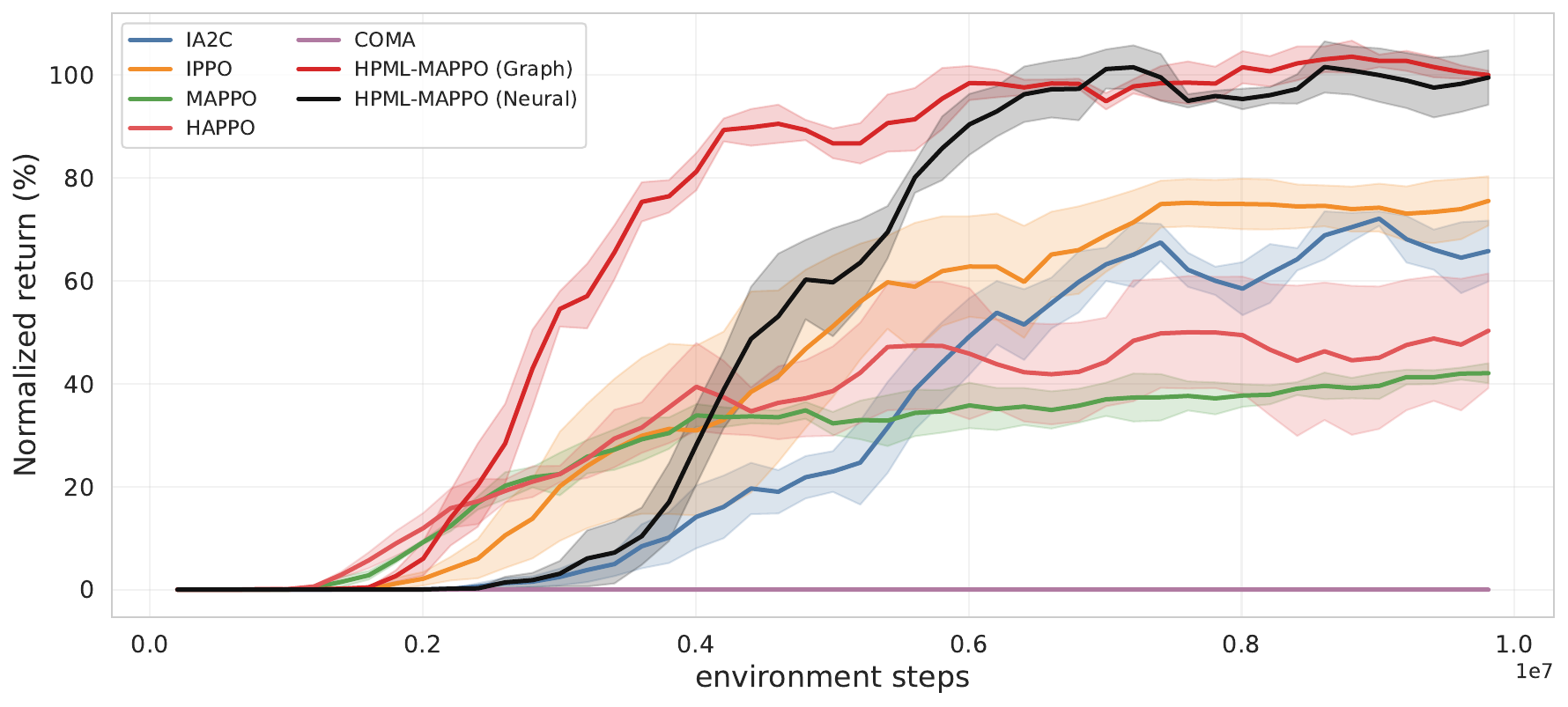}
\curveplaceholder{Collaborative Cooking: Figure Eight}{cc_figure8}{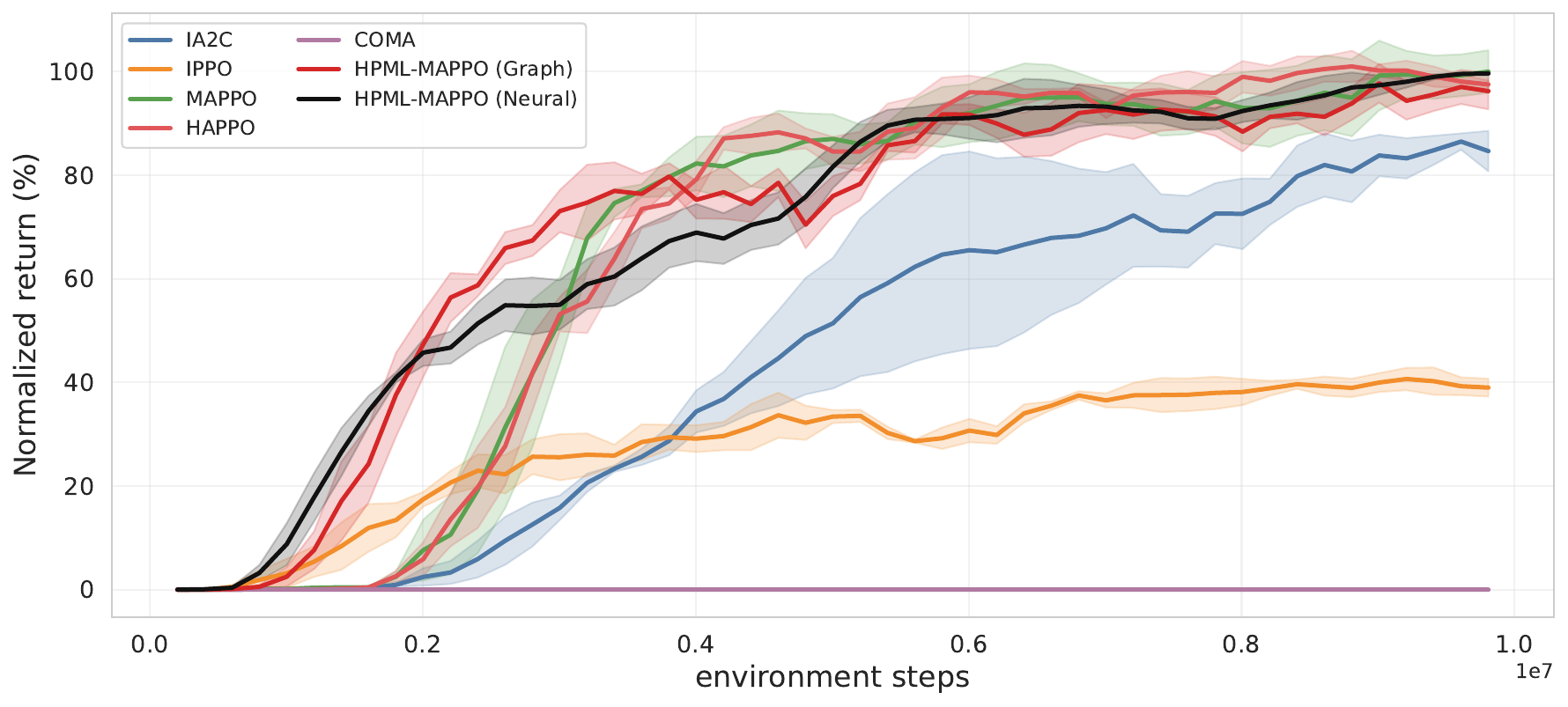}
\curveplaceholder{Commons Harvest: Partnership}{ch_partnership}{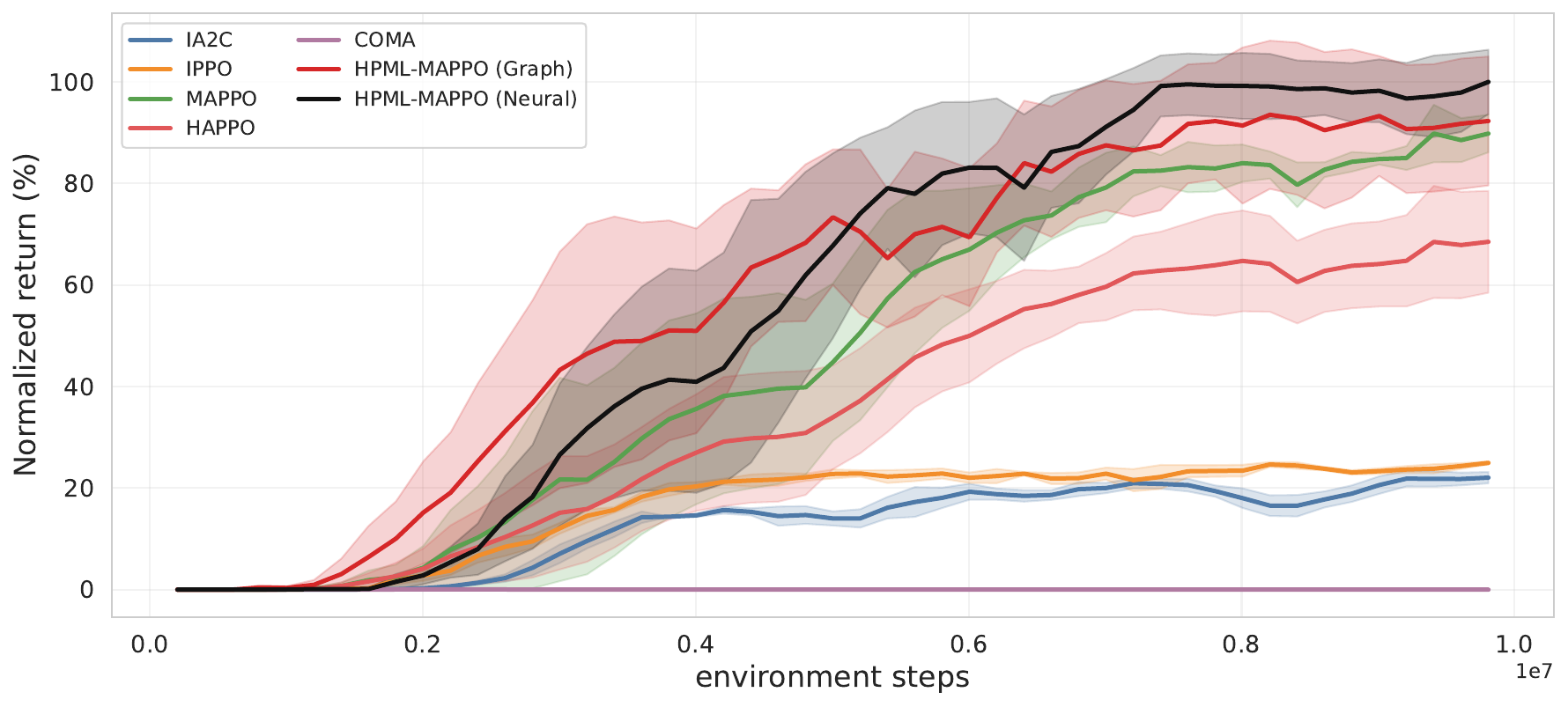}

\subsection{Full Melting Pot table (per-scenario)}
\label{app:exp_table_full}

\begin{table*}[t]
\centering
\scriptsize
\caption{Full Melting Pot results (mean normalized return \% $\pm$ s.e.) on all 10 scenarios.
Abbreviations: BoS-R = Bach or Stravinsky Repeated,
PC-R = Pure Coordination Repeated,
RC-R = Rationalizable Coordination Repeated,
SH-R = Stag Hunt Repeated,
PD-R = Prisoners Dilemma Repeated,
CC-Cir = Collaborative Cooking Circuit,
CC-Crowd = Collaborative Cooking Crowded,
CC-F8 = Collaborative Cooking Figure Eight,
CH-Part = Commons Harvest Partnership.}
\label{tab:meltingpot_full}
\resizebox{\textwidth}{!}{%
\begin{tabular}{lccccccccccc}
\toprule
Method
& BoS-R & PC-R & RC-R & SH-R & PD-R
& CC-Cir & CC-Crowd & CC-F8 & CH-Part & Clean Up & Macro Avg \\
\midrule
IA2C & 83.7 $\pm$ 7.5 & 83.7 $\pm$ 7.5 & 87.1 $\pm$ 7.8 & 78.1 $\pm$ 5.2 & 29.6 $\pm$ 29.6 & 88.3 $\pm$ 4.5 & 65.8 $\pm$ 5.9 & 84.6 $\pm$ 3.9 & 22.1 $\pm$ 1.1 & 70.6 $\pm$ 3.0 & 69.4 $\pm$ 3.4 \\
IPPO & 100.0 $\pm$ 4.1 & 7.4 $\pm$ 7.4 & 52.5 $\pm$ 5.6 & 65.7 $\pm$ 32.9 & 0.0 $\pm$ 0.0 & 30.7 $\pm$ 30.7 & 75.5 $\pm$ 4.8 & 39.0 $\pm$ 1.7 & 25.0 $\pm$ 0.3 & 84.4 $\pm$ 4.2 & 48.0 $\pm$ 4.7 \\
MAPPO & 86.3 $\pm$ 3.6 & 53.5 $\pm$ 2.5 & 65.0 $\pm$ 0.8 & 100.0 $\pm$ 4.1 & 61.6 $\pm$ 30.9 & 96.9 $\pm$ 4.9 & 42.1 $\pm$ 1.9 & 100.0 $\pm$ 4.1 & 89.8 $\pm$ 3.7 & 86.2 $\pm$ 3.6 & 78.1 $\pm$ 3.3 \\
HAPPO & 93.6 $\pm$ 3.9 & 88.7 $\pm$ 12.2 & 76.6 $\pm$ 10.3 & 97.0 $\pm$ 5.2 & 57.7 $\pm$ 29.9 & 82.7 $\pm$ 7.2 & 50.3 $\pm$ 11.1 & 97.5 $\pm$ 0.8 & 68.5 $\pm$ 10.0 & 81.1 $\pm$ 4.5 & 79.4 $\pm$ 3.9 \\
COMA & 0.0 $\pm$ 0.0 & 0.0 $\pm$ 0.0 & 0.0 $\pm$ 0.0 & 0.0 $\pm$ 0.0 & 0.0 $\pm$ 0.0 & 0.0 $\pm$ 0.0 & 0.0 $\pm$ 0.0 & 0.0 $\pm$ 0.0 & 0.0 $\pm$ 0.0 & 0.0 $\pm$ 0.0 & 0.0 $\pm$ 0.0 \\
\midrule
HPML-MAPPO (Graph) & 100.0 $\pm$ 0.0 & 100.0 $\pm$ 0.0 & 100.0 $\pm$ 0.0 & 100.0 $\pm$ 0.0 & 98.3 $\pm$ 5.6 & 100.0 $\pm$ 0.0 & 100.0 $\pm$ 0.8 & 96.2 $\pm$ 3.5 & 92.3 $\pm$ 12.7 & 100.0 $\pm$ 25.9 & 98.7 $\pm$ 3.0 \\
HPML-MAPPO (Neural) & 88.7 $\pm$ 12.2 & 99.6 $\pm$ 0.2 & 99.3 $\pm$ 0.2 & 96.2 $\pm$ 3.5 & 100.0 $\pm$ 4.7 & 99.9 $\pm$ 0.0 & 99.5 $\pm$ 5.3 & 99.6 $\pm$ 0.2 & 100.0 $\pm$ 6.3 & 98.7 $\pm$ 0.2 & 98.2 $\pm$ 1.6 \\
% Method
% & BoS-R & PC-R & RC-R & SH-R & PD-R
% & CC-Cir & CC-Crowd & CC-F8 & CH-Part & Clean Up & Macro Avg \\
% \midrule
% IA2C  & -- & -- & -- & -- & -- & -- & -- & -- & -- & -- & -- \\
% IPPO  & -- & -- & -- & -- & -- & -- & -- & -- & -- & -- & -- \\
% MAPPO & -- & -- & -- & -- & -- & -- & -- & -- & -- & -- & -- \\
% HAPPO & -- & -- & -- & -- & -- & -- & -- & -- & -- & -- & -- \\
% COMA  & -- & -- & -- & -- & -- & -- & -- & -- & -- & -- & -- \\
% \midrule
% HPML-MAPPO (Graph)  & -- & -- & -- & -- & -- & -- & -- & -- & -- & -- & -- \\
% HPML-MAPPO (Neural) & -- & -- & -- & -- & -- & -- & -- & -- & -- & -- & -- \\
\bottomrule
\end{tabular}}
\end{table*}

\subsection{Additional diagnostics: non-potentiality and convention switching}
\label{app:exp_diag}

We additionally track $\mathrm{NonPot}(\omega)$ during training for all scenarios. For the convention-heavy tasks---Bach or Stravinsky Repeated, Pure Coordination Repeated, and Rationalizable Coordination Repeated---we also report an empirical convention-switch rate to test whether HPML changes the geometry of learning rather than only the reward scale.

\section{Additional Notes for Section~\ref{sec:prelim}}
\label{app:prelim_notes}

\begin{remark}[Equilibrium operator and algorithmic update field]
\label{rem:operator_app}
For equilibrium analysis, the canonical VI operator is the negative pseudo-gradient
\[
F_{\mathrm{VI}}(\pi)=\big(-\nabla_{\pi_1}J_1(\pi),\ldots,-\nabla_{\pi_n}J_n(\pi)\big).
\]
The corresponding simultaneous ascent/update field is $U=-F_{\mathrm{VI}}$.
HPML projects the supplied update field $U$; if the available object is a VI operator, one first changes sign and applies the projection to $-F_{\mathrm{VI}}$.
Some sections use $F$ as a generic field to avoid excessive notation, but the intended sign is always determined by whether the object is being used for VI gap evaluation or for an actual update.
\end{remark}

\begin{remark}[Primal policies and unconstrained parameterizations]
\label{rem:param_app}
The theory is stated on the compact convex feasible set $\cX$ of stationary policies in primal coordinates.
In implementations, one often optimizes unconstrained parameters such as logits and maps them back to $\cX$
through a retraction or mirror map such as softmax.
This changes the coordinate representation of the iterate, but not the conceptual role of $F$
as a field defined on feasible policies.
\end{remark}

\begin{remark}[Stampacchia and Minty gaps]
\label{rem:minty_app}
Definition~\ref{def:gap} uses the Stampacchia duality gap
\[
\Gap(\bar x)=\max_{x\in\cX}\ip{F(\bar x)}{\bar x-x},
\]
which evaluates the operator at the reference point $\bar x$.
Another common quantity is the Minty gap
\[
\Gap_{\mathrm{Minty}}(\bar x)
:=
\max_{x\in\cX}\ip{F(x)}{\bar x-x},
\]
which corresponds to the Minty VI formulation.
The present paper uses the Stampacchia gap because it aligns directly with the projected operator analysis developed later.
\end{remark}

\begin{remark}[Gap and monotonicity]
\label{rem:monotone_gap_app}
Lemma~\ref{lem:gap_zero_vi} is purely algebraic and does not require monotonicity.
When $F$ is monotone, however, the duality gap becomes a standard merit function that quantitatively certifies closeness to the VI solution set and supports classical convergence arguments.
In general-sum games, the induced operator is typically non-monotone, which is precisely why cycling may persist and why a structural modification of the field is useful.
\end{remark}

\begin{example}[Canonical bilinear cyclic-interaction field]
\label{ex:bilinear_app}
Consider the bilinear min--max problem
\[
\min_u\max_v\; u^\top v
\]
with associated field
\[
F(u,v)=(v,-u).
\]
Its Jacobian is
\[
J=\nabla F=
\begin{bmatrix}
0 & I\\
-I & 0
\end{bmatrix},
\qquad
S=0,
\qquad
A=J.
\]
Hence the field is purely antisymmetric and has no potential component in the local Jacobian sense.
This is the canonical example showing that simultaneous first-order updates may follow cyclic orbits around the origin rather than converge.
\end{example}

\begin{remark}[Local and global non-potentiality]
\label{rem:localglobal_app}
Proposition~\ref{prop:grad_symm} identifies antisymmetry of the Jacobian as a \emph{local} obstruction to integrability.
This is not yet a global statement: even if a field looks locally potential-like, global or sample-level obstructions may remain.
The role of HPML in the main text is to move from this local Jacobian diagnostic to a global $L^2$ projection of the full joint field, thereby producing a residual that measures non-potentiality at the vector-field level.
\end{remark}

\section{Variational and Geometric Details for HPML}
\label{app:hpml_variational}

This appendix records the continuous Euler--Lagrange characterization of the projection,
its circulation interpretation, and several geometric remarks that are not needed in the main text
but clarify the structure behind HPML.

\begin{lemma}[Weighted integration by parts]
\label{lem:adjoint}
Let $\cX\subset\R^d$ be a bounded Lipschitz domain or a flat torus.
Let $d\mu(x)=\rho(x)\,dx$ with $0<\rho_{\min}\le \rho(x)\le \rho_{\max}<\infty$ a.e.\ on $\cX$.
Assume boundary conditions that eliminate boundary terms
(e.g., periodic boundary conditions on the torus; or $\rho\,U\cdot n=0$ on $\partial\cX$).
Then for any $\psi\in H^1(\cX)$ and any vector field $U$ for which $\div_\mu U$ is well-defined in $H^{-1}(\cX)$,
\[
\int_{\cX} U^\top \nabla\psi \, d\mu
=
-\int_{\cX} \psi\, \div_\mu U \, d\mu.
\]
Equivalently, $\nabla$ and $-\div_\mu$ are adjoint under the $L^2(\mu)$ pairing.
\end{lemma}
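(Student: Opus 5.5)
The plan is to reduce the identity to the classical divergence theorem applied to the product field $\psi\,\rho\,U$, and then extend to the stated weak regularity by density. Throughout, the weighted divergence is read as
\[
\operatorname{div}_\mu U := \frac{1}{\rho}\,\operatorname{div}(\rho\,U),
\]
which is the convention under which $-\operatorname{div}_\mu$ is the $L^2(\mu)$-adjoint of $\nabla$. With this convention, $\int_{\cX}\psi\,\operatorname{div}_\mu U\,d\mu=\int_{\cX}\psi\,\operatorname{div}(\rho U)\,dx$, so the density $\rho$ cancels against the weight.

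\textbf{Step 1 (smooth case).} Assume first that $\psi$ is $C^1$ up to the boundary and that $\rho U$ is $C^1$ with the stated boundary condition. The product rule gives
\[
\operatorname{div}(\psi\,\rho U)=\rho\,U^\top\nabla\psi+\psi\,\operatorname{div}(\rho U).
\]
I will integrate this over $\cX$. On the flat torus, the integral of a divergence vanishes by periodicity. On a bounded Lipschitz domain, the divergence theorem turns the left side into $\int_{\partial\cX}\psi\,\rho\,U\cdot n\,dS$, which is zero because $\rho\,U\cdot n=0$ on $\partial\cX$. Rearranging gives
\[
\int_{\cX} U^\top\nabla\psi\,\rho\,dx=-\int_{\cX}\psi\,\operatorname{div}(\rho U)\,dx,
\]
and this is the claimed identity with respect to $d\mu=\rho\,dx$.

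\textbf{Step 2 (extension to $H^1$).} Because $\cX$ is Lipschitz (or a torus), smooth functions up to the boundary are dense in $H^1(\cX)$. The left side is continuous in $\psi$ under the $H^1$ norm whenever $\rho U\in L^2$, which holds since $\rho\le\rho_{\max}$ and $U\in L^2$. The right side is the duality pairing $\langle \operatorname{div}(\rho U),\psi\rangle$, which is continuous in $\psi$ provided $\operatorname{div}(\rho U)$ lies in the dual of $H^1(\cX)$. Passing to the limit along an approximating sequence $\psi_k\to\psi$ therefore yields the identity for all $\psi\in H^1(\cX)$. The bounds $\rho_{\min}\le\rho\le\rho_{\max}$ make $L^2(\mu)$ and $L^2(dx)$ equivalent, so no integrability is lost when moving between the weighted and unweighted pairings.

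\textbf{Main obstacle.} The delicate point is giving meaning to the boundary condition $\rho\,U\cdot n=0$ and to $\operatorname{div}_\mu U\in H^{-1}$ when $U$ is only $L^2$. I would handle this by taking the no-flux condition in its weak form: $\operatorname{div}(\rho U)$ is the functional $\psi\mapsto-\int_{\cX}\rho\,U^\top\nabla\psi\,dx$ on all of $H^1(\cX)$, not merely on $H^1_0(\cX)$. I would justify this through the normal-trace theory for $H(\operatorname{div})$ fields on Lipschitz domains, which makes $\rho U\cdot n$ well defined in $H^{-1/2}(\partial\cX)$ and allows the boundary term from Step 1 to be interpreted there. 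With that interpretation the vanishing normal trace kills the boundary term, and the weak identity is consistent with Step 1 for smooth data. The lemma then follows by density.
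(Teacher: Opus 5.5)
Your proposal is correct and follows essentially the same route as the paper. You set $\operatorname{div}_\mu U=\rho^{-1}\operatorname{div}(\rho U)$, apply the product rule and the divergence theorem (or periodicity on the torus) to $\psi\,\rho U$ for smooth $\psi$, and then pass to $\psi\in H^1(\cX)$ by density, using Cauchy--Schwarz on the left-hand side and $H^1$-continuity of the $H^{-1}$ pairing on the right.

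Your extra remark reading $\rho U\cdot n=0$ through the $H(\operatorname{div})$ normal trace in $H^{-1/2}(\partial\cX)$ usefully addresses a regularity point on $\rho U$ that the paper's smooth-$\psi$ step leaves implicit. Note, however, that when $\operatorname{div}(\rho U)$ lies only in the dual of $H^1(\cX)$, the weak no-flux formulation you adopt makes the identity hold by definition, and the paper's proof has the same limitation.
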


\begin{proposition}[Poisson-type characterization of the projection]
\label{prop:poisson}
Assume $M\succ 0$ is constant and $d\mu(x)=\rho(x)\,dx$ with $\rho>0$.
Assume boundary conditions that remove boundary terms in Lemma~\ref{lem:adjoint}.
Let $\Phi^\star$ be any minimizer in Definition~\ref{def:proj}.
Then the projected field is $\Pi_{\mathrm{grad}}^{M}F=\nabla_M\Phi^\star$ and $\Phi^\star$ satisfies the weak equation
\[
\cL_{M,\mu}\Phi^\star
=
\div_\mu F,
\qquad\text{i.e.,}\qquad
\div_\mu\!\big(M^{-1}\nabla\Phi^\star\big)
=
\div_\mu F.
\]
Moreover, $\Phi^\star$ is unique up to an additive constant; one may fix a gauge such as $\E_\mu[\Phi^\star]=0$.
A natural boundary condition implied by the variational optimality is the zero-flux residual condition
\[
\rho\,\big(M^{-1}\nabla\Phi^\star-F\big)\cdot n
=
0
\quad\text{on }\partial\cX,
\]
whenever traces are well-defined.
\end{proposition}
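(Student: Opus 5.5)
Since $\Phi^\star$ minimizes the quadratic functional $\mathcal J(\Phi):=\E_{\mu}\big[\|F-\nabla_M\Phi\|_M^2\big]$ over $H^1(\cX)$, we compute its first variation. For any test function $\psi\in H^1(\cX)$ and $s\in\R$, expanding $\mathcal J(\Phi^\star+s\psi)$ and using $\langle u,M^{-1}\nabla\psi\rangle_M=u^\top\nabla\psi$ shows that the derivative at $s=0$ must vanish. This gives
\[
\int_{\cX}\big(M^{-1}\nabla\Phi^\star-F\big)^\top\nabla\psi\,d\mu=0,\qquad\forall\psi\in H^1(\cX).
\]
This is Lemma~\ref{lem:orth} restated with the metric absorbed. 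Both terms belong to $L^2$, since $F\in L^2(\mu)$ and $\nabla\Phi^\star\in L^2$, and $\rho$ is bounded. Hence the identity is well defined.

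Next we apply Lemma~\ref{lem:adjoint} separately to $U=M^{-1}\nabla\Phi^\star$ and $U=F$, with $\div_\mu$ interpreted in $H^{-1}$. First take $\psi\in C_c^\infty$, or simply note that the boundary terms vanish under the standing assumption. This turns the identity into $\int\psi\,\big(\div_\mu(M^{-1}\nabla\Phi^\star)-\div_\mu F\big)\,d\mu=0$ for all $\psi$, which is the stated weak equation $\cL_{M,\mu}\Phi^\star=\div_\mu F$.

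For the natural boundary condition, we return to the first-variation identity, this time with arbitrary $\psi\in H^1$ whose traces need not vanish. Integrating by parts via the divergence theorem, without discarding boundary terms, yields an interior term plus $\int_{\partial\cX}\psi\,\rho\,(M^{-1}\nabla\Phi^\star-F)\cdot n\,dS$. The interior term is already zero by the previous step. Because the trace of $\psi$ is arbitrary, the flux $\rho\,(M^{-1}\nabla\Phi^\star-F)\cdot n$ must vanish wherever traces are defined. This is the delicate step: the traces need to exist, which is why the statement adds the qualifier "whenever traces are well-defined". I will argue it only under that hypothesis, using the $H(\mathrm{div})$ normal-trace pairing.

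Uniqueness rests on strict convexity. Suppose $\Phi_1$ and $\Phi_2$ are two minimizers. Subtracting their first-variation identities and choosing $\psi=\Phi_1-\Phi_2$ gives $\int\nabla\psi^\top M^{-1}\nabla\psi\,\rho\,dx=0$. Since $M^{-1}\succ0$ and $\rho\ge\rho_{\min}>0$, it follows that $\nabla\psi=0$ a.e. On the connected domain, a bounded Lipschitz domain or a torus, the Poincaré–Wirtinger argument shows $\psi$ is constant. Thus $\Pi^M_{\mathrm{grad}}F=\nabla_M\Phi^\star$ is unique, and $\Phi^\star$ is unique up to an additive constant. Imposing the gauge $\E_\mu[\Phi^\star]=0$ removes the constant.
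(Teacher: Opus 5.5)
Your proposal is correct and follows the paper's proof essentially step for step. The shared steps are: the first variation gives the weak orthogonality identity; Lemma~\ref{lem:adjoint}, applied to $M^{-1}\nabla\Phi^\star$ and to $F$, gives the weak Poisson equation; testing the difference of two minimizers against itself gives uniqueness; and keeping the boundary term gives the natural zero-flux condition. The one small deviation is that you derive uniqueness directly from the subtracted orthogonality identities, whereas the paper goes through the weak PDE and a second integration by parts. Your route is slightly cleaner because it needs no boundary-term assumption for $M^{-1}\nabla(\Phi_1-\Phi_2)$.
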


\begin{remark}[Local Jacobian diagnostic and global projection]
\label{rem:jac_bridge}
The Jacobian split in Section~\ref{sec:prelim} is a local diagnostic:
antisymmetry in $\nabla F$ signals local cyclic interaction behavior.
The HPML projection is a global $L^2(M,\mu)$ construction on the full field.
For a locally linear field $F(x)\approx Bx$, Euclidean metric $M=I$, and isotropic sampling,
the closest gradient field corresponds to the symmetric part of $B$, namely
$B\mapsto \tfrac12(B+B^\top)$.
\end{remark}

\begin{remark}[Relation to Hodge decomposition]
\label{rem:hodge_app}
Lemma~\ref{lem:orth} is the structural statement used in the main text.
On domains or complexes with appropriate topology and boundary conditions,
the residual can often be refined into curl-like and harmonic components.
The discrete construction used by HPML mirrors this picture on graphs or sample complexes,
but the main text only requires the projected gradient component and its orthogonal residual.
\end{remark}

\paragraph{Convention-level cyclic interaction as circulation.}
Cyclic interaction dynamics need not come from explicit symmetry.
In CTDE training, one often observes repeated revisits to a small family of recurring joint behaviors.
Geometrically, such behavior corresponds to nontrivial circulation of the joint update field along approximately closed loops in the visited region of $\cX$.

For a piecewise $C^1$ closed curve $\gamma:[0,1]\to\cX$ with $\gamma(0)=\gamma(1)$, define
\[
\Circ_F(\gamma)
:=
\int_0^1 \langle F(\gamma(t)),\dot\gamma(t)\rangle_M\,dt
=
\int_0^1 \dot\gamma(t)^\top M F(\gamma(t))\,dt.
\]

\begin{proposition}[Closed-loop circulation is carried by the non-potential residual]
\label{prop:circulation_residual}
Let $G^\star := \Pi_{\mathrm{grad}}^{M}F$ and $R:=F-G^\star$.
Assume $G^\star=\nabla_M\Phi^\star$ with $\Phi^\star$ sufficiently regular along the curve under consideration.
For any piecewise $C^1$ closed curve $\gamma$ in $\cX$,
\[
\Circ_{G^\star}(\gamma)=0,
\qquad\text{and hence}\qquad
\Circ_F(\gamma)=\Circ_R(\gamma).
\]
Moreover, if $\mu_\gamma$ denotes the pushforward of the uniform measure on $[0,1]$ by $\gamma$, then
\[
|\Circ_F(\gamma)|
\le
\Big(\E_{t\sim\mathrm{Unif}[0,1]}\|R(\gamma(t))\|_M^2\Big)^{1/2}
\Big(\E_{t\sim\mathrm{Unif}[0,1]}\|\dot\gamma(t)\|_{M^{-1}}^2\Big)^{1/2}.
\]
\end{proposition}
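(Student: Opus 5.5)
The plan is to prove the identity $\Circ_{G^\star}(\gamma)=0$ first, then get $\Circ_F=\Circ_R$ by linearity, and finally bound $\Circ_R$ pointwise followed by Cauchy--Schwarz in $t$. One caveat up front: the stated inequality, with $\|\dot\gamma\|_{M^{-1}}$, does not hold for every $M\succ 0$. I will prove it under $M\preceq I$, which includes the Euclidean case $M=I$ used in Section~\ref{sec:theory}, and identify the extra factor needed otherwise.

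\emph{Step 1: zero circulation of the projected field.} Since $G^\star=\nabla_M\Phi^\star=M^{-1}\nabla\Phi^\star$, the integrand becomes
\[
\dot\gamma(t)^\top M G^\star(\gamma(t))=\dot\gamma(t)^\top\nabla\Phi^\star(\gamma(t))=\tfrac{d}{dt}\Phi^\star(\gamma(t)).
\]
The factor $M$ cancels against $M^{-1}$, which is exactly why the metric gradient pairs with the one-form $\langle M\cdot,dx\rangle$. On each $C^1$ piece, the fundamental theorem of calculus applies because of the assumed regularity of $\Phi^\star$ along $\gamma$. The contributions telescope over the pieces and give $\Phi^\star(\gamma(1))-\Phi^\star(\gamma(0))=0$.

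\emph{Step 2: linearity.} The map $\Circ_{(\cdot)}(\gamma)$ is linear in the field. Hence $\Circ_F=\Circ_{G^\star}+\Circ_R=\Circ_R$.

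\emph{Step 3: the bound.} Write $\dot\gamma^\top M R=(M^{1/2}\dot\gamma)^\top(M^{1/2}R)$. The pointwise Cauchy--Schwarz inequality gives
\[
|\dot\gamma^\top M R|\le \|R\|_M\|\dot\gamma\|_M.
\]
Integrating over $t\in[0,1]$ and applying Cauchy--Schwarz in $L^2([0,1])$ yields
\[
|\Circ_F(\gamma)|\le\Big(\E\|R(\gamma(t))\|_M^2\Big)^{1/2}\Big(\E\|\dot\gamma(t)\|_M^2\Big)^{1/2}.
\]

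\emph{Step 4: passing to the $M^{-1}$-norm (the main obstacle).} To reach the stated form I need $\|\dot\gamma\|_M\le\|\dot\gamma\|_{M^{-1}}$, that is, $M\preceq M^{-1}$. Diagonalizing $M$, this is equivalent to every eigenvalue satisfying $\lambda\le 1/\lambda$, i.e.\ $\lambda\le1$, i.e.\ $M\preceq I$. Under this condition the stated inequality follows from Step 3.

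Without this condition the stated inequality can fail. Take $d=1$, $M=m>1$, and any loop with $\int_0^1\dot\gamma R\,dt\neq0$ for which Cauchy--Schwarz in $t$ is tight. The left side is $m\,|\int\dot\gamma R\,dt|$, while the right side is $(m\E R^2)^{1/2}(m^{-1}\E\dot\gamma^2)^{1/2}=(\E R^2)^{1/2}(\E\dot\gamma^2)^{1/2}$, which is smaller by a factor of $m$.

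For a general $M$, the direct dual-norm route gives $|a^\top b|\le\|a\|_{M^{-1}}\|b\|_M$ with $a=\dot\gamma$ and $b=MR$. This produces $\|MR\|_M=\|R\|_{M^3}\le\lambda_{\max}(M)\|R\|_M$. So the stated right-hand side must be multiplied by $\lambda_{\max}(M)$ for the inequality to hold for every $M\succ0$; this factor equals $1$ exactly in the Euclidean setting.
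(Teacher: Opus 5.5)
Your Steps 1--3 are the paper's proof. You substitute $MG^\star=\nabla\Phi^\star$, apply the fundamental theorem of calculus on each $C^1$ piece, use linearity, and then apply the $M^{1/2}$ Cauchy--Schwarz bound followed by Cauchy--Schwarz in $L^2([0,1])$. The paper likewise only reaches the bound with $\|\dot\gamma\|_M$, and it matches the displayed $\|\dot\gamma\|_{M^{-1}}$ form only by specializing to $M=I$. So your caveat is well founded. Your extension to $M\preceq I$ via $M\preceq M^{-1}$ is correct, and so is the general factor $\lambda_{\max}(M)$ obtained from $|\dot\gamma^\top MR|\le\|\dot\gamma\|_{M^{-1}}\|R\|_{M^3}$. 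Both go slightly beyond what the paper proves.

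The step that fails is your counterexample. In $d=1$ no closed loop carries any circulation. If $H'=R$, then $\int_0^1\dot\gamma(t)R(\gamma(t))\,dt=H(\gamma(1))-H(\gamma(0))=0$, so a loop with $\int_0^1\dot\gamma R\,dt\neq 0$ does not exist. Tightness of Cauchy--Schwarz, $\dot\gamma\propto R(\gamma)$, would also make that integral a nonzero multiple of $\int_0^1 R(\gamma(t))^2\,dt$, which is a contradiction.

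You need $d\ge 2$. For example:
\begin{itemize}
\item Let $\cX$ be the unit disk with uniform $\mu$, $M=mI$ with $m>1$, and $F(u,v)=(-v,u)$.
\item This field is divergence-free and tangent to the boundary circle, so it is $L^2(\mu)$-orthogonal to every gradient. Hence $G^\star=0$ and $R=F$ is a genuine HPML residual.
\item On $\gamma(t)=r(\cos 2\pi t,\sin 2\pi t)$ with $0<r\le 1$, the left-hand side is $2\pi r^2 m$.
\item The stated right-hand side is $(\sqrt m\,r)(2\pi r/\sqrt m)=2\pi r^2$.
\end{itemize}
The stated inequality therefore fails by exactly the factor $m=\lambda_{\max}(M)$. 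This also shows that your correction factor is sharp for scalar metrics.
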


\begin{remark}[Operational reading of failed collaboration loops]
\label{rem:failed_loops}
Proposition~\ref{prop:circulation_residual} formalizes the intuition that recurring convention cycles correspond to nonzero circulation, which cannot be generated by any potential flow.
Accordingly, the non-potentiality in Definition~\ref{def:nonpot}, or its empirical discrete proxy, can be read as a quantitative certificate of such looping behavior.
\end{remark}

\begin{remark}[Smoothly varying metrics]
\label{rem:variable_metric}
If the metric varies smoothly as $M(x)$, the variational definition in Definition~\ref{def:proj} remains unchanged:
one still projects onto metric-gradients under the local inner product induced by $M(x)$.
The Euler--Lagrange condition becomes a variable-coefficient elliptic equation in weak form.
The main text keeps the constant-$M$ presentation for clarity.
\end{remark}

\begin{remark}[Discrete and amortized implementations]
\label{rem:discrete_impl_app}
The practical implementations of HPML do not solve the continuous Poisson problem directly.
Instead, they approximate the same variational objective either on a graph or sample complex,
or through an amortized potential model.
The concrete procedures appear in Algorithm~\ref{alg:hodge-proj} and Algorithm~\ref{alg:neural-hpml}.
\end{remark}

\section{Graph Construction and Discrete Recovery Details}
\label{app:discrete_details}

This appendix records the implementation-level details of the sample-graph construction,
the graph interpretation of the residual, the local lifting formulas, and the graph-based
projection subroutine used by HPML.

\paragraph{Graph construction.}
Let $\{x^1,\dots,x^N\}\subset\cX$ be sampled joint-policy points.
We build an undirected $k$-NN graph $G=(\cV,\cE)$ on the samples, where $\cV=\{1,\dots,N\}$ and
$(i,j)\in\cE$ indicates that $x^j$ is among the $k$ nearest neighbors of $x^i$ (symmetrized).
If a metric $M\succ 0$ is used in Section~\ref{sec:hpml}, neighbors may be computed using
\[
\dist_M(x^i,x^j):=\sqrt{(x^i-x^j)^\top M (x^i-x^j)};
\]
otherwise Euclidean distance is used.
Each undirected edge is assigned a positive weight $w_{ij}=w_{ji}>0$.

\begin{remark}[Cycle-space interpretation on a graph]
\label{rem:graph_cycle_space}
On a pure graph, the space of edge flows decomposes into the cut space (gradients) and the cycle space (circulations).
Thus $\omega_{\mathrm{cyc}}$ is precisely the cycle-space component of $\omega$.
If one augments the graph to a higher-order simplicial complex, one can further refine the decomposition into gradient, curl, and harmonic components, but the graph-only variant already removes all non-gradient circulation supported by cycles.
\end{remark}

\begin{remark}[Convention graphs and identifying which patterns form the cycle]
\label{rem:convention_graph}
When samples $\{x_i\}$ cluster around a small number of recurring joint behaviors,
the $k$-NN graph can be viewed as a convention graph whose nodes represent these patterns and whose edges capture local transitions induced by the update field.
In this view, the residual $\omega_{\mathrm{cyc}}$ in Definition~\ref{def:disc_proj} is exactly the cycle-space component on the convention graph, and $\mathrm{NonPot}(\omega)$ in Definition~\ref{def:disc_nonpot} quantifies how much of the observed update energy is explained only by cycling among conventions.

Moreover, one can localize which conventions participate in the loop by expanding $\omega_{\mathrm{cyc}}$ on a cycle basis of the graph, e.g.\ fundamental cycles from a spanning tree.
Large coefficients identify the specific subset of conventions responsible for the recurrent dynamics.
\end{remark}

\subsection{Node-wise lifting formulas}
\label{app:discrete_lifting}

The graph projection outputs a scalar potential $\phi^\star$.
To obtain a vector direction at node $i$, define the neighborhood
\[
\cN(i):=\{j:(i,j)\in\cE\}.
\]
We fit $h^i\in\R^d$ as an approximation of $\nabla\Phi$ by solving
\begin{equation}
h^i \in \arg\min_{h\in\R^d}\;
\sum_{j\in\cN(i)} w_{ij}
\Big(
\langle h,x^j-x^i\rangle-(\phi^\star_j-\phi^\star_i)
\Big)^2
+\lambda\|h\|_2^2,
\label{eq:lifting_ls_h}
\end{equation}
where $\lambda\ge 0$ is a ridge parameter.
We then set
\begin{equation}
g^i:=M^{-1}h^i.
\label{eq:lifting_metric}
\end{equation}

In matrix form,
\[
h^i=(X_i^\top W_i X_i+\lambda I)^{-1}X_i^\top W_i y_i,
\]
where rows of $X_i$ are $(x^j-x^i)^\top$ and entries of $y_i$ are $(\phi^\star_j-\phi^\star_i)$.

\begin{remark}[Updating a query point not in the sample set]
\label{rem:query_point_graph}
If the current iterate $x_t$ is not one of the sampled nodes, one may insert $x_t$ into the graph by connecting it to its $k$ nearest sampled neighbors, compute local potential differences, and apply the same lifting step to obtain $g_t$.
\end{remark}

\begin{proposition}[Metric-consistent discretization of potential differences]
\label{prop:metric_discrete}
Let $M\in\R^{d\times d}$ be constant SPD and let $\Phi\in C^2(\cX)$.
For any $x\in\cX$ and any displacement $\Delta x$ with $x+\Delta x\in\cX$,
\[
\Phi(x+\Delta x)-\Phi(x)
=
\langle \nabla\Phi(x),\Delta x\rangle
+
O(\|\Delta x\|^2)
=
\langle M\nabla_M\Phi(x),\Delta x\rangle
+
O(\|\Delta x\|^2).
\]
Consequently, on a graph with node positions $\{x^i\}$ and edges $(i,j)$, the metric-consistent edge $1$-form
associated with a vector field $G$ intended to approximate $\nabla_M\Phi$ is
\[
\omega_{ij}
:=
\Big\langle
M\,\frac{G(x^i)+G(x^j)}{2},
\,x^j-x^i
\Big\rangle,
\]
and if $G=\nabla_M\Phi$ then
\[
\omega_{ij}
=
\Phi(x^j)-\Phi(x^i)+O(\|x^j-x^i\|^2).
\]
\end{proposition}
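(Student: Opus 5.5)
The proof has two parts: first an exact identity for the edge 1-form when $G=\nabla_M\Phi$, then a Taylor expansion of $\Phi$ along the segment $[x^i,x^j]$ to compare the trapezoidal average with the exact increment.

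\textbf{First display.} Since $M$ is constant and SPD, we have
\[
\langle M\nabla_M\Phi(x),\Delta x\rangle=\langle MM^{-1}\nabla\Phi(x),\Delta x\rangle=\langle\nabla\Phi(x),\Delta x\rangle,
\]
so the two right-hand sides coincide exactly. For the Taylor statement, use convexity of $\cX$: the segment $x+s\Delta x$, $s\in[0,1]$, stays in $\cX$. Write
\[
\Phi(x+\Delta x)-\Phi(x)=\int_0^1\langle\nabla\Phi(x+s\Delta x),\Delta x\rangle\,ds .
\]
Subtracting $\langle\nabla\Phi(x),\Delta x\rangle$ leaves a remainder bounded by $\tfrac12\sup_{\cX}\|\nabla^2\Phi\|\,\|\Delta x\|^2$. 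This supremum is finite because $\Phi\in C^2$ on the compact set $\cX$, or on a compact neighborhood of the points considered if $\cX$ is not assumed compact here. So the remainder is $O(\|\Delta x\|^2)$.

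\textbf{Edge 1-form.} Substituting $G=\nabla_M\Phi$ and using $MM^{-1}=I$ gives
\[
\omega_{ij}=\tfrac12\big\langle\nabla\Phi(x^i)+\nabla\Phi(x^j),\,x^j-x^i\big\rangle .
\]
Apply the first display with $\Delta x=x^j-x^i$ based at $x^i$. This gives $\langle\nabla\Phi(x^i),\Delta x\rangle=\Phi(x^j)-\Phi(x^i)+O(\|\Delta x\|^2)$. Apply it again based at $x^j$ with displacement $-\Delta x$, which gives $\langle\nabla\Phi(x^j),\Delta x\rangle=\Phi(x^j)-\Phi(x^i)+O(\|\Delta x\|^2)$. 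Averaging the two identities yields the claim.

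\textbf{Possible sharpening.} The trapezoidal average actually gives an $O(\|\Delta x\|^3)$ error, by symmetric cancellation of the second-order terms, but only if $\Phi\in C^3$. The statement requires only $O(\|\Delta x\|^2)$, so I will not pursue this.

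\textbf{Main obstacle.} The only delicate point is making the implicit constant in $O(\cdot)$ uniform. It should depend only on a Hessian bound, and convexity of $\cX$ is what keeps every segment inside the domain where that bound holds. I will state this explicitly, so the estimate reads
\[
|\omega_{ij}-(\Phi(x^j)-\Phi(x^i))|\le\tfrac12\sup_{\cX}\|\nabla^2\Phi\|\,\|x^j-x^i\|^2 .
\]
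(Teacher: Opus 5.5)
Your proof is correct and follows essentially the paper's route: both expand $\Phi$ with an integral remainder along the segment $x^i+s\Delta_{ij}$ (which lies in $\cX$ by convexity), bound the remainder by a Hessian supremum, and use $M\nabla_M\Phi=\nabla\Phi$ to absorb the metric. The only difference is in the edge estimate. You average the forward expansion at $x^i$ with the backward one at $x^j$ and bound each remainder in absolute value, which gives the constant $\tfrac12\sup\|\nabla^2\Phi\|$. The paper instead keeps the signed remainders, obtaining the exact error $\int_0^1(s-\tfrac12)\,\Delta_{ij}^\top\nabla^2\Phi(x^i+s\Delta_{ij})\,\Delta_{ij}\,ds$ and the sharper bound $\tfrac14 H_{ij}\|\Delta_{ij}\|^2$, with $H_{ij}$ the Hessian supremum over the segment only. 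That exact error is precisely what your averaging yields if you track signs, and it makes your $O(\|\Delta x\|^3)$ remark immediate, since $\int_0^1(s-\tfrac12)\,ds=0$.
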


\subsection{Graph-based projection algorithm}
\label{app:discrete_algorithm}

\begin{algorithm}[t]
\caption{Discrete Hodge Projection (Graph Poisson Solve)}
\label{alg:hodge-proj}
\begin{algorithmic}[1]
\REQUIRE Samples $\{x^i\}_{i=1}^N\subset\cX$, field estimates $\{F_i=F(x^i)\}$, metric $M\succ 0$ (optional), weighted $k$-NN graph $G=(\cV,\cE)$, ridge $\lambda\ge 0$.
\ENSURE Node potential $\phi^\star$, potential edge flow $\omega_{\mathrm{pot}}$, residual $\omega_{\mathrm{cyc}}$, and optionally lifted node directions $\{g^i\}$.
\STATE Choose an orientation for each undirected edge and form $\cE^\rightarrow$.
\STATE Construct the incidence matrix $B$ and the diagonal weight matrix $W=\diag(w_e)$.
\FOR{each oriented edge $e=(i\to j)\in\cE^\rightarrow$}
    \STATE $\Delta x_e \gets x^j-x^i$
    \STATE $\omega_e \gets \left\langle M\,\tfrac12(F_i+F_j),\,\Delta x_e\right\rangle$
\ENDFOR
\STATE Form $L\gets B^\top W B$ and $b\gets B^\top W\omega$.
\STATE Solve $L\phi^\star=b$ with a gauge condition.
\STATE $\omega_{\mathrm{pot}}\gets B\phi^\star$, \quad $\omega_{\mathrm{cyc}}\gets \omega-\omega_{\mathrm{pot}}$.
\IF{node directions are needed}
    \FOR{$i=1$ to $N$}
        \STATE Fit $h^i$ by \eqref{eq:lifting_ls_h}.
        \STATE Set $g^i\gets M^{-1}h^i$.
    \ENDFOR
\ENDIF
\end{algorithmic}
\end{algorithm}

\section{Algorithmic Procedures for HPML}
\label{app:alg_realizations}

This appendix records the generic outer-loop realization of HPML, the amortized neural realization, and the CTDE plug-in realization.
The graph-based projection subroutine itself is given in Algorithm~\ref{alg:hodge-proj}.

\begin{algorithm}[t]
\caption{Hodge-Projected Multi-Agent Learning (HPML)}
\label{alg:hpml}
\begin{algorithmic}[1]
\REQUIRE Initial $x_0\in\cX$, stepsizes $\{\eta_t\}$, update-field oracle/estimator $F(\cdot)$,
metric $M\succ 0$ (optional), feasibility operator $\Retr(\cdot)$,
projection routine (discrete graph or amortized neural projection).
\FOR{$t=0,1,2,\dots$}
  \STATE Estimate $F(x_t)$.
  \STATE Obtain a projected direction $g_t \approx \Pi_{\mathrm{grad}}^{M}F(x_t)$:
  \STATE \hspace{1em} either (i) build/update a sample graph and run Algorithm~\ref{alg:hodge-proj} then lift to get $g_t$, or
  \STATE \hspace{1em} (ii) use an amortized potential network to compute $g_t$ by autograd.
  \STATE $x_{t+1} \gets \Retr(x_t + \eta_t g_t)$.
\ENDFOR
\end{algorithmic}
\end{algorithm}

\begin{algorithm}[t]
\caption{Neural HPML (Amortized Potential Projection)}
\label{alg:neural-hpml}
\begin{algorithmic}[1]
\REQUIRE Initial $x_0\in\cX$, stepsizes $\{\eta_t\}$, feasibility operator $\Retr(\cdot)$,
field estimator $\widehat F(\cdot)$, buffer $\mathcal{B}$, inner steps $K$, batch size $B$,
(optional) metric $M\succ 0$.
\STATE Initialize potential-network parameters $\theta$.
\FOR{$t=0,1,2,\ldots$}
  \STATE Append $x_t$ (and auxiliary rollout data, if any) to buffer $\mathcal B$.
  \STATE Sample $\{x_k\}_{k=1}^B\sim\mathcal B$ and compute $\{\widehat F(x_k)\}$.
  \FOR{$k=1$ to $K$}
     \STATE Update $\theta$ by SGD on $\widehat{\mathcal L}(\theta)$ in \eqref{eq:nn-proj-loss}.
  \ENDFOR
  \STATE Compute $g_t \gets M^{-1}\nabla_x\Phi_\theta(x_t)$ by autograd.
  \STATE $x_{t+1} \gets \Retr(x_t + \eta_t g_t)$.
\ENDFOR
\end{algorithmic}
\end{algorithm}

\begin{remark}[Practical schedules for amortized projection]
\label{rem:neural_practical_app}
The amortized projection can be updated either online (a few inner SGD steps per outer iteration) or periodically (larger refreshes every $H$ outer steps).
If $M$ varies with $x$, one may use a diagonal or low-rank approximation of $M(x)$, or provide metric-related features to the potential network.
These choices affect approximation quality and computational overhead, but do not change the projection objective itself.
\end{remark}

\begin{algorithm}[t]
\caption{CTDE-HPML (HPML as a Plug-in Projection Layer for MARL)}
\label{alg:ctde-hpml}
\begin{algorithmic}[1]
\REQUIRE Decentralized actors $\{\pi_{\theta_i}(a_i\mid o_i)\}_{i=1}^n$, centralized critic $V_\psi$ (or $Q_\psi$),
stepsizes $\{\eta_t\}$, projection routine $\Pi_{\mathrm{grad}}^{M}(\cdot)$ (discrete or neural),
buffer $\mathcal B$, and an optional feasibility operator $\Retr(\cdot)$.
\FOR{$t=0,1,2,\dots$}
  \STATE Roll out trajectories with decentralized execution and append transitions to $\mathcal B$.
  \STATE Sample a minibatch from $\mathcal B$ and update the centralized critic parameters $\psi$.
  \STATE Compute a centralized advantage (or TD-based signal) $\widehat A(s,a)$.
  \STATE For each agent $i$, form a stochastic ascent direction
  \[
  \widehat g_i
  \leftarrow
  \E\big[\nabla_{\theta_i}\log\pi_{\theta_i}(a_i\mid o_i)\,\widehat A(s,a)\big].
  \]
  \STATE Stack the joint field $\widehat F \leftarrow (\widehat g_1,\dots,\widehat g_n)$.
  \STATE Compute projected direction $g_t \leftarrow \Pi_{\mathrm{grad}}^{M}\widehat F$.
  \STATE Update each actor:
  \[
  \theta_i \leftarrow \Retr(\theta_i+\eta_t[g_t]_i),
  \qquad i=1,\dots,n.
  \]
\ENDFOR
\end{algorithmic}
\end{algorithm}

\begin{remark}[Parameter-space realization under CTDE]
\label{rem:ctde_param_space_app}
The main theory is stated on a convex feasible set of policy coordinates, whereas practical CTDE implementations often operate directly in actor-parameter space.
HPML is used there as an operator-level projection layer on the stacked update field.
This does not change the conceptual role of the residual: it still measures the non-integrable part of the joint improvement dynamics, but now in the chosen parameterization.
\end{remark}

\begin{remark}[Relation to Bellman residuals and the multi-agent source of circulation]
\label{rem:bellman_bridge}
The role of Proposition~\ref{prop:residual_bellman_obstruction} is interpretive rather than evaluative.
It does \emph{not} say that single-agent Bellman or TD operators are ill-defined.
The issue appears only after stacking several critic-driven local actor-improvement directions into one joint field.
Each block $g_i$ may be a valid local greedy direction, yet the collection $(g_1,\dots,g_n)$ may fail to be jointly integrable because the cross-agent response terms need not be symmetric.
That is the sense in which there is a ``mismatch'': changing agent $j$ can alter agent $i$'s improvement direction without a matching reciprocity in the reverse block.

This phenomenon is natural in multi-agent CTDE because the joint control problem is often factorized into simultaneously updated local blocks for scalability. The decisive distinction is again exact centralized optimization versus centralized evaluation alone. A centralized critic and even a single shared reward do not by themselves remove the obstruction if the implemented actor update is still assembled from local or approximate blocks. The obstruction disappears only when all agents are optimized as one super-agent by the exact gradient of a single centralized scalar objective in the full joint parameter space. Sampling other agents' actions then contributes stochasticity, but not a structural non-potential cyclic component at the level of the exact expected field. Partial observability and approximation error can exacerbate the effect but are not necessary. Mathematically, non-potentiality is not exclusive to multi-agent systems, but multi-agent blockwise updates are a particularly natural mechanism generating it.

So the residual is not a Bellman-evaluation error. It is a structural inconsistency in the joint greedy-improvement dynamics induced by decentralized multi-agent updates.
\end{remark}

\subsection{CTDE integrability and residual-only cyclic dynamics}
\label{app:ctde_integrability}

\begin{remark}[Origin of the non-potential cyclic component]
\label{rem:residual_quantitative}
The nonzero loop integral in Proposition~\ref{prop:residual_bellman_obstruction} is generated by a local asymmetry in the Jacobian of the one-form $MF$. Writing
$
Y(x):=M F(x),
\qquad
K(x):=\nabla Y(x)=M\nabla F(x),
$
joint integrability requires $K(x)=K(x)^\top$.
Equivalently, the antisymmetric part
$
B(x):=\tfrac12\bigl(M\nabla F(x)-\nabla F(x)^\top M\bigr)
$
must vanish.
If $B(x)\neq 0$, then for any sufficiently small parallelogram $C_\varepsilon(x;u,v)\subset U$,
$
\oint_{C_\varepsilon(x;u,v)} \langle M F(z),dz\rangle
=
2\varepsilon^2 \langle B(x)u,v\rangle + o(\varepsilon^2),
$
so the field has a local circulation component.

In the two-agent Euclidean case $M=I$ with $x=(x_1,x_2)$ and $F=(g_1,g_2)$, this becomes
$
\oint_{C_\varepsilon} \langle F(z),dz\rangle
=
\varepsilon^2 h^\top\!\bigl(\nabla_{x_2}g_1(x)-\nabla_{x_1}g_2(x)^\top\bigr)k
+
o(\varepsilon^2).
$
Thus the relevant ``mismatch'' is not merely that agents prefer different actions. It is the failure of cross-agent response symmetry: changing agent $2$ alters agent $1$'s greedy improvement direction in a way that is not matched by the reverse perturbation. Once this occurs, the stacked field cannot be the gradient of a single common potential.

This mismatch is typically created by strategic coupling in the environment together with decentralized or blockwise actor updates. Importantly, the issue is not determined by reward sharing alone. A single shared team reward and a centralized critic remove the non-potential cyclic component only when the realized joint actor-update field is exactly the gradient of the centralized team objective. If the same MARL system is trained through separate local or approximate improvement blocks, then one agent's update is still formed against the changing behavior of the others, and the induced cross-agent response terms need not be symmetric. Shared rewards, coupled transitions, role-allocation constraints, convention selection, or mixed cooperative/competitive effects can all make one agent's local improvement direction reshape another agent's landscape. Partial observability and function-approximation error can strengthen the asymmetry, but they are not required; the effect can already appear in fully observed matrix or Markov games. Hyperparameters that change the update operator itself can also change the size of the residual. By contrast, the numerical step size and the initial condition do not create the residual field; they determine whether training visits a region where the asymmetry is strong and whether the resulting cyclic motion is amplified in discrete time.

This also clarifies the multi-agent aspect. The decisive distinction is exact centralized optimization, not merely centralized evaluation. If all agents were merged into a single super-agent and updated by exact gradient ascent on one centralized scalar return, then the resulting field would be integrable and this non-potential cyclic component would disappear. By contrast, MARL with a single shared reward can still exhibit the same obstruction whenever the implemented actor update is factorized into local or approximate blocks whose cross-agent responses are not jointly consistent. Sampling the others' actions contributes stochasticity, but by itself does not imply a structural non-potential cyclic component at the level of the exact expected field; the obstruction appears when the realized update rule departs from exact joint-gradient structure. Mathematically, however, the phenomenon is not exclusive to multi-agent systems; any non-potential update rule may contain such a component. Finally, Proposition~\ref{prop:circulation_residual} implies that any nonzero loop circulation yields nonzero residual energy, so the obstruction is not only geometric but also quantitatively detectable.
\end{remark}

The next proposition isolates the effect of this residual cyclic component by considering the canonical residual-only skew field from Example~\ref{ex:bilinear_app}.

\begin{proposition}[Updating along the residual cyclic component alone need not approach the fixed point]
\label{prop:residual_rotation_fp}
Consider the residual-only field on $\R^2$,
$
R(u,v)=(v,-u),
$
whose unique fixed point is $(0,0)$.
(i) Under the continuous-time dynamics $(\dot u,\dot v)=R(u,v)$, every non-stationary trajectory satisfies
$
u(t)^2+v(t)^2 = u(0)^2+v(0)^2
$
for all $t\ge 0$ and therefore never reaches the fixed point. In particular, every circle
$
\gamma_r(s)=r(\cos 2\pi s,\sin 2\pi s),
\qquad r>0,
$
is an invariant orbit.
(ii) Under simultaneous explicit-Euler updates
$
z_{t+1}=z_t+\eta R(z_t),
\qquad z_t=(u_t,v_t),
$
one has
$
\|z_{t+1}\|^2=(1+\eta^2)\|z_t\|^2
$
for every step size $\eta>0$. Hence every nonzero iterate moves strictly away from the fixed point, and
$
\|z_t\|=(1+\eta^2)^{t/2}\|z_0\|.
$
Thus a pure residual cyclic component can make the fixed point unreachable in continuous time and repelling in discrete time.
\end{proposition}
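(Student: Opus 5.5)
The argument is a direct computation in both parts; the only structural idea is that $R$ is everywhere orthogonal to the position vector. For $z=(u,v)$ we have $\langle z,R(z)\rangle=uv-vu=0$ and $\|R(z)\|=\|z\|$. I will use these two identities throughout. First, the fixed-point claim: $R(u,v)=0$ forces $v=0$ and $u=0$, so the origin is the unique fixed point.

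For part (i), I will differentiate the energy $V(t):=u(t)^2+v(t)^2$ along any solution of $(\dot u,\dot v)=R(u,v)$. This gives $\dot V=2u\dot u+2v\dot v=2uv-2vu=0$, so $V$ is constant. Since $R$ is linear, and hence globally Lipschitz, solutions exist for all $t\ge0$ and are unique. Any non-stationary trajectory has $V(0)>0$ and so never reaches the origin.

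For the invariant-orbit claim, I will exhibit the explicit solution $u(t)=r\cos t$, $v(t)=-r\sin t$ and check that it satisfies $\dot u=v$ and $\dot v=-u$. Its image is the full circle of radius $r$, which is exactly the image of $\gamma_r$. By uniqueness of solutions, the circle is therefore invariant. The one point to note is that the orientation of the flow is opposite to the parametrization $\gamma_r$; the statement concerns the orbit as a set, so this does not matter.

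For part (ii), I will expand
\[
\|z_t+\eta R(z_t)\|^2=\|z_t\|^2+2\eta\langle z_t,R(z_t)\rangle+\eta^2\|R(z_t)\|^2 .
\]
The two identities give $\|z_{t+1}\|^2=(1+\eta^2)\|z_t\|^2$. Induction on $t$ then yields $\|z_t\|=(1+\eta^2)^{t/2}\|z_0\|$. Since $1+\eta^2>1$ for $\eta>0$, the norm of every nonzero iterate strictly increases, so the fixed point is repelling in discrete time.

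I expect no real obstacle. The only care needed is in part (i): invoking uniqueness of ODE solutions to justify invariance of the circles, and handling the orientation mismatch noted above.
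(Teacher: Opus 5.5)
Your proposal is correct and follows the paper's proof: energy conservation $\frac{d}{dt}(u^2+v^2)=0$ for part (i) and the norm expansion giving $(1+\eta^2)\|z_t\|^2$ for part (ii), with your orthogonality identity $\langle z,R(z)\rangle=0$ packaging the same cancellation. Your explicit solution $u(t)=r\cos t$, $v(t)=-r\sin t$ is a small but welcome addition. It justifies that each circle is a single orbit rather than merely an invariant set, which the paper only asserts.
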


Proposition~\ref{prop:residual_rotation_fp} isolates the role of the residual cyclic component: in continuous time it sustains recurrent motion, while in simultaneous discrete-time updates any positive step size turns the same skew field into outward drift.

\section{Auxiliary Results for Projected Dynamics and Gap Bounds}
\label{app:theory_details}

This appendix records the projection inequality used by Theorem~\ref{thm:lyapunov},
the $L^2$ output-law gap bound, an inexact projected-step lemma used by Theorem~\ref{thm:inexact},
and several interpretive remarks omitted from the main text.

\begin{lemma}[Projection variational inequality and nonexpansiveness]
\label{lem:proj_vi}
Let $\cX\subset\mathbb{R}^d$ be nonempty, closed, and convex, and let $P:=\Proj_{\cX}$ be the Euclidean projection.
Then for any $y\in\mathbb{R}^d$ and any $z\in\cX$,
\[
\langle y-P(y),\,z-P(y)\rangle \le 0.
\]
Moreover, $P$ is firmly nonexpansive: for any $y,y'\in\mathbb{R}^d$,
\[
\|P(y)-P(y')\|^2 \le \langle P(y)-P(y'),\,y-y'\rangle,
\qquad\text{and hence}\qquad
\|P(y)-P(y')\|\le \|y-y'\|.
\]
\end{lemma}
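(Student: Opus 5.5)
The plan is to derive both claims from the standard characterization of the Euclidean projection onto a closed convex set. First I establish existence and uniqueness of $P(y)$. Since $\cX$ is nonempty and closed, the function $z\mapsto\|y-z\|^2$ is continuous and coercive, so a minimizer exists. It is unique because the function is strictly convex and $\cX$ is convex.

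For the variational inequality, fix $z\in\cX$ and set $p:=P(y)$. For $t\in[0,1]$ the point $p+t(z-p)$ lies in $\cX$ by convexity. Hence the scalar function
\[
h(t):=\|y-p-t(z-p)\|^2=\|y-p\|^2-2t\langle y-p,z-p\rangle+t^2\|z-p\|^2
\]
is minimized over $[0,1]$ at $t=0$. This forces $h'(0)\ge 0$, that is, $-2\langle y-p,z-p\rangle\ge 0$, which is exactly the claimed inequality $\langle y-P(y),z-P(y)\rangle\le 0$.

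For firm nonexpansiveness, write $p=P(y)$ and $p'=P(y')$. I apply the inequality just proved twice: once with $(y,z)=(y,p')$, giving $\langle y-p,p'-p\rangle\le 0$, and once with $(y,z)=(y',p)$, giving $\langle y'-p',p-p'\rangle\le 0$. Adding the two yields $\langle (y-y')-(p-p'),\,p'-p\rangle\le 0$. Rearranging gives
\[
\|p-p'\|^2\le\langle p-p',y-y'\rangle,
\]
which is the firm nonexpansiveness claim. Finally, Cauchy--Schwarz bounds the right-hand side by $\|p-p'\|\,\|y-y'\|$. Dividing by $\|p-p'\|$ when it is nonzero, and noting the case $p=p'$ is trivial, gives $\|P(y)-P(y')\|\le\|y-y'\|$.

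None of these steps is a real obstacle. The only point needing care is the one-sided derivative argument: because $t$ ranges only over $[0,1]$, the first-order condition at the endpoint $t=0$ is an inequality $h'(0)\ge 0$ rather than an equality. An alternative is to expand $h(t)\ge h(0)$ directly, divide by $t>0$, and let $t\downarrow 0$.
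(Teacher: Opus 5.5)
Your proof is correct and follows the same route as the paper: the one-sided derivative of $t\mapsto\|y-P(y)-t(z-P(y))\|^2$ at $t=0$ gives the variational inequality, and summing its two instances (with $z=P(y')$ and $z=P(y)$) followed by Cauchy--Schwarz gives firm nonexpansiveness and nonexpansiveness. Your summation step is actually cleaner than the paper's written version, which flips the second inequality to a $\ge 0$ form before adding and then uses an incorrect regrouping identity. Adding the two inequalities exactly as you state them yields $\langle (y-y')-(p-p'),\,p'-p\rangle\le 0$ directly.
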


\begin{remark}[Residual circulation and the additive gap term]
\label{rem:gap_interp_app}
By Proposition~\ref{prop:circulation_residual}, any closed-loop circulation of the original operator $F$
must be carried by the residual $R$.
Accordingly, the additive terms $D\varepsilon$ in Theorem~\ref{thm:gap} and $DE_T$ in Theorem~\ref{thm:gap_L2}
can be read as the unavoidable contribution of non-potential circulation in the original joint field.
\end{remark}

\begin{theorem}[$L^2$ non-potentiality with respect to the output-iterate law]
\label{thm:gap_L2}
Assume Assumption~\ref{ass:smooth} and suppose
\[
F(x)=-\nabla\Phi(x)+R(x),
\]
with $R$ square-integrable under the law of the output iterate defined below.
Let $\eta\le 1/L$ and run the exact projected ascent
\[
x_{t+1}=\Proj_{\cX}(x_t+\eta\nabla\Phi(x_t)),
\qquad t=0,\dots,T-1.
\]
Let $\tilde t$ be uniformly distributed over $\{0,1,\dots,T-1\}$, independent of the iterates, and output $\tilde x_T:=x_{\tilde t}$.
Define
\[
E_T:=\Bigl(\mathbb{E}\bigl[\|R(\tilde x_T)\|^2\bigr]\Bigr)^{1/2}.
\]
Then
\[
\mathbb{E}\bigl[\Gap(\tilde x_T)\bigr]
\le
(D+\eta G)\sqrt{\frac{2(\Phi_{\max}-\Phi_{\min})}{T\eta}}
+
DE_T.
\]
\end{theorem}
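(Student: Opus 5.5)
Starting from Lemma~\ref{lem:gap_residual}, I bound the gap at each fixed iterate pointwise and then take expectation over the uniform output index. For every $t$,
\[
\Gap(x_t)\le \Gap_\Phi(x_t)+D\|R(x_t)\|,
\]
and Lemma~\ref{lem:gap_mapping} gives $\Gap_\Phi(x_t)\le (D+\eta G)\|\cG_\eta(x_t)\|$. The iterates are deterministic, since the dynamics are exact projected ascent. Because $\tilde t$ is uniform and independent of them, $\mathbb{E}[h(\tilde x_T)]=\frac1T\sum_{t=0}^{T-1}h(x_t)$ for any function $h$. Taking expectations therefore yields
\[
\mathbb{E}[\Gap(\tilde x_T)]\le (D+\eta G)\,\frac1T\sum_{t=0}^{T-1}\|\cG_\eta(x_t)\|+D\,\mathbb{E}\|R(\tilde x_T)\|.
\]

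The two terms are then handled separately. For the stationarity term, Cauchy--Schwarz (Jensen) gives
\[
\frac1T\sum_t\|\cG_\eta(x_t)\|\le\Bigl(\frac1T\sum_t\|\cG_\eta(x_t)\|^2\Bigr)^{1/2}.
\]
I combine this with the telescoped Lyapunov inequality from Theorem~\ref{thm:lyapunov}. That theorem gives $\Phi(x_{t+1})-\Phi(x_t)\ge \frac{\eta}{2}\|\cG_\eta(x_t)\|^2$, so summing yields $\sum_t\|\cG_\eta(x_t)\|^2\le \frac{2}{\eta}(\Phi_{\max}-\Phi_{\min})$. This produces exactly $\sqrt{2(\Phi_{\max}-\Phi_{\min})/(T\eta)}$. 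For the residual term, Jensen again gives $\mathbb{E}\|R(\tilde x_T)\|\le(\mathbb{E}\|R(\tilde x_T)\|^2)^{1/2}=E_T$. Square-integrability under the output law guarantees this is finite; the law is a finite average of point masses, so this only asks that $R$ be finite at the iterates.

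I expect essentially no serious obstacle. The one point needing care is measurability and finiteness of $\Gap(\tilde x_T)$ as a random variable, which is trivial here because $\tilde x_T$ takes finitely many values. The second point is that the pointwise gap bound uses $\|R(x)\|$ at the same point, which is what permits replacing the uniform $\varepsilon$ of Theorem~\ref{thm:gap} by the $L^2$ quantity $E_T$. The argument thus mirrors Theorem~\ref{thm:gap}, with the uniform residual bound replaced by averaging and the best-iterate selection replaced by a random iterate.
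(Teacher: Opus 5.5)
Your proposal is correct and matches the paper's proof: it uses the same pointwise bound $\Gap(x)\le (D+\eta G)\|\cG_\eta(x)\|+D\|R(x)\|$ from Lemmas~\ref{lem:gap_residual} and~\ref{lem:gap_mapping}, and the same averaging over the uniform index. The two terms are then handled exactly as in the paper, with Jensen plus the telescoped Lyapunov inequality of Theorem~\ref{thm:lyapunov} for the stationarity term and Cauchy--Schwarz for the residual term.
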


\begin{remark}[Rate extensions under stronger structure]
\label{rem:rate_ext_app}
The $O(1/\sqrt{T})$ rate in Theorem~\ref{thm:gap} is the most robust guarantee under the minimal assumptions used in the main text.
Under stronger structure, such as a PL condition for $\Phi$ or stronger operator regularity, one can sharpen the stationarity rate and therefore the induced gap rate.
These refinements are not needed for the main narrative.
\end{remark}

\begin{lemma}[Gap bound via inexact projected-step mapping]
\label{lem:gap_mapping_inexact}
Under Assumption~\ref{ass:smooth}, let
\[
x^+=\Proj_{\cX}(x+\eta g),
\qquad
\widetilde{\cG}_\eta(x):=\frac{1}{\eta}(x^+-x),
\]
and let $e:=g-\nabla\Phi(x)$.
Then
\[
\Gap_\Phi(x)
\le
\bigl(D+\eta\|g\|\bigr)\,\|\widetilde{\cG}_\eta(x)\|
+
D\,\|e\|.
\]
\end{lemma}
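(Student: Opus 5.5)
The plan is to mimic the proof of Lemma~\ref{lem:gap_mapping}, now with the inexact direction $g$ in place of $\nabla\Phi(x)$, and then pay for the substitution with the error $e=g-\nabla\Phi(x)$. The first step splits the gap. For any $u\in\cX$, write
\[
\langle \nabla\Phi(x),u-x\rangle=\langle g,u-x\rangle-\langle e,u-x\rangle\le \langle g,u-x\rangle+D\|e\|,
\]
using Cauchy--Schwarz and $\|u-x\|\le D$ (both $u$ and $x$ lie in $\cX$). Taking the maximum over $u$ reduces the claim to
\[
\max_{u\in\cX}\langle g,u-x\rangle\le (D+\eta\|g\|)\,\|\widetilde{\cG}_\eta(x)\|.
\]

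To prove this reduced claim, I will apply the projection inequality of Lemma~\ref{lem:proj_vi} with $y=x+\eta g$ and $P(y)=x^+$. For every $u\in\cX$ it gives
\[
\langle x+\eta g-x^+,\,u-x^+\rangle\le 0,
\]
which, after dividing by $\eta$, is equivalent to
\[
\langle g-\widetilde{\cG}_\eta(x),\,u-x^+\rangle\le 0 .
\]
Hence $\langle g,u-x^+\rangle\le \langle \widetilde{\cG}_\eta(x),u-x^+\rangle\le D\,\|\widetilde{\cG}_\eta(x)\|$, since $u,x^+\in\cX$.

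Next I will shift the base point from $x^+$ back to $x$. Since
\[
\langle g,u-x\rangle=\langle g,u-x^+\rangle+\langle g,x^+-x\rangle
\]
and $x^+-x=\eta\,\widetilde{\cG}_\eta(x)$, the second term is at most $\eta\|g\|\,\|\widetilde{\cG}_\eta(x)\|$. Combining the two pieces gives $\langle g,u-x\rangle\le (D+\eta\|g\|)\|\widetilde{\cG}_\eta(x)\|$ for every $u\in\cX$. Maximizing over $u$ and adding back $D\|e\|$ yields the statement.

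No step is a serious obstacle. The only care needed is bookkeeping: the projection inequality must be anchored at $x^+$ rather than $x$, which is exactly what forces the extra $\eta\|g\|$ factor. The error term must be extracted with the diameter bound before the projection argument is applied, so that $e$ enters additively rather than through the step mapping. Assumption~\ref{ass:smooth} is used only for compactness (finite $D$); smoothness of $\Phi$ is not needed for this lemma.
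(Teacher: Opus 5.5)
Your proposal is correct and matches the paper's proof step for step: the projection inequality with $z=u$ anchored at $x^+$, the shift from $x^+$ back to $x$ that contributes $\eta\|g\|\,\|\widetilde{\cG}_\eta(x)\|$, and the diameter bound that absorbs $e$ additively. The only difference is that you extract the error term first while the paper does it last; the two steps are independent, so the order is immaterial and your remark that it \emph{must} come first is overstated.
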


\begin{remark}[Interpretation of the inexact-projection bound]
\label{rem:inexact_interp_app}
In Theorem~\ref{thm:inexact}, the residual term $D\varepsilon$ is inherited from the original non-potential component,
while the additional $D\bar\delta$ term and the $\bar\delta^2$ floor arise from projection error under the stated pointwise accuracy assumption.
Thus the theorem separates two effects: structural non-potentiality in the original game field and approximation error in the projection routine itself.
\end{remark}

\section{Broader impacts.}
HPML is a general-purpose MARL optimization tool. Its potential positive impact is improved stability and reproducibility in cooperative and mixed-motive multi-agent systems. A potential negative impact is that more stable multi-agent optimization could also improve systems deployed in competitive, surveillance, or resource-allocation settings; responsible use therefore depends on the application context, evaluation protocol, and deployment safeguards.

\section{Proof}

\subsection{Proof of Lemma~\ref{lem:gap_zero_vi}}

\begin{proof}
Fix any $\bar x\in\cX$. By Definition~\ref{def:gap},
\[
\Gap(\bar x)=\max_{x\in\cX}\;\ip{F(\bar x)}{\bar x-x}.
\]
First observe that choosing the feasible point $x=\bar x$ yields
\[
\ip{F(\bar x)}{\bar x-\bar x}=\ip{F(\bar x)}{0}=0,
\]
hence the maximum is at least this value and therefore
\[
\Gap(\bar x)\;\ge\;0.
\]
Now assume $\Gap(\bar x)=0$. For an arbitrary $x\in\cX$, since $\Gap(\bar x)$ is the maximum of the map
$x\mapsto \ip{F(\bar x)}{\bar x-x}$ over the feasible set $\cX$, it dominates every feasible value, i.e.,
\[
\ip{F(\bar x)}{\bar x-x}\;\le\;\max_{y\in\cX}\ip{F(\bar x)}{\bar x-y}
\;=\;\Gap(\bar x)\;=\;0.
\]
Multiplying both sides by $-1$ reverses the inequality and gives
\[
-\ip{F(\bar x)}{\bar x-x}\;\ge\;0.
\]
Using bilinearity of the inner product and the identity $-(\bar x-x)=x-\bar x$, we rewrite the left-hand side as
\[
-\ip{F(\bar x)}{\bar x-x}=\ip{F(\bar x)}{-(\bar x-x)}=\ip{F(\bar x)}{x-\bar x}.
\]
Therefore, for every $x\in\cX$ we have
\[
\ip{F(\bar x)}{x-\bar x}\;\ge\;0,
\]
which is exactly the Stampacchia variational inequality condition in Definition~\ref{def:vi}. Hence $\bar x$
solves $\mathrm{VI}(\cX,F)$.
\end{proof}

\subsection{Proof of Proposition~\ref{prop:grad_symm}}

\begin{proof}
For (i), assume there exists $\phi$ that is twice continuously differentiable on an open set $V\supseteq\cX$
and satisfies $F=\nabla\phi$ on $V$. Writing components, for each $i\in\{1,\dots,d\}$ we have
$F_i(x)=\partial_i \phi(x)$ for all $x\in V$. By definition of the Jacobian $J(x)=\nabla F(x)$,
its entries are
\[
J_{ij}(x)=\partial_j F_i(x)=\partial_j\big(\partial_i \phi(x)\big)=\partial_{j}\partial_{i}\phi(x).
\]
Similarly,
\[
J_{ji}(x)=\partial_i F_j(x)=\partial_i\big(\partial_j \phi(x)\big)=\partial_{i}\partial_{j}\phi(x).
\]
Since $\phi\in C^2(V)$, Schwarz/Clairaut's theorem applies and yields $\partial_{j}\partial_{i}\phi(x)=
\partial_{i}\partial_{j}\phi(x)$ for all $x\in V$, hence $J_{ij}(x)=J_{ji}(x)$ for every pair $(i,j)$,
so $J(x)=J(x)^\top$ on $V$. With the symmetric--antisymmetric split
$A(x)=\tfrac12\big(J(x)-J(x)^\top\big)$ (Definition~2.4 in the paper's notation),
this gives $A(x)\equiv 0$ on $V$, and consequently on $\cX$ we have
\[
\mathcal R(x)=\|A(x)\|_F^2=0.
\]

For (ii), assume $U\subset\R^d$ is open and simply connected, $F\in C^1(U)$, and $A(x)\equiv 0$ on $U$.
Then for every $x\in U$,
\[
0=A(x)=\tfrac12\big(J(x)-J(x)^\top\big)\qquad\Longrightarrow\qquad J(x)=J(x)^\top.
\]
Writing this entrywise and recalling $J_{ij}(x)=\partial_j F_i(x)$, we obtain the cross-partial equalities
\[
\partial_j F_i(x)=\partial_i F_j(x),\qquad \forall\,x\in U,\ \forall\,i,j\in\{1,\dots,d\}.
\]

\begin{lemma}[Closed-loop integral vanishes under $J=J^\top$]
\label{lem:curlfree_loop}
Let $U\subset\R^d$ be open and simply connected and let $F\in C^1(U)$ satisfy $\partial_jF_i=\partial_iF_j$
on $U$. Then for every piecewise $C^1$ closed curve $\gamma:[0,1]\to U$ with $\gamma(0)=\gamma(1)$,
\[
\int_0^1 \ip{F(\gamma(t))}{\gamma'(t)}\,dt \;=\;0.
\]
\end{lemma}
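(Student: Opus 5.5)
The plan is to prove Lemma~\ref{lem:curlfree_loop} by a homotopy argument. Simple connectivity of $U$ supplies a continuous contraction of $\gamma$ to a point, and the symmetric Jacobian makes the loop integral invariant along that contraction. Since a continuous homotopy need not be $C^1$, I will first reduce to a smooth homotopy. Because $\gamma([0,1])$ is compact in the open set $U$, there is $r>0$ with $\{y:\dist(y,\gamma([0,1]))<2r\}\subset U$. A continuous null-homotopy $H:[0,1]^2\to U$ with $H(\cdot,0)=\gamma$, $H(\cdot,1)\equiv x_0$, and $H(0,s)=H(1,s)$ exists by simple connectivity. Its image is also compact, so I fix an $r'$-neighbourhood of the image contained in $U$ and mollify $H$ to obtain a $C^2$ map $\tilde H$ that is uniformly within $r'$ of $H$. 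Close to $s=0$ I splice $\tilde H$ back to $\gamma$ by straight-line segments. Each segment stays inside a ball contained in $U$, and on a convex ball the loop integral of a small quadrilateral vanishes by the classical Poincaré construction $\phi(y)=\int_0^1\ip{F(x+\tau(y-x))}{y-x}\,d\tau$.

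On a ball I will verify directly that $\nabla\phi=F$. Differentiating under the integral and using $\partial_jF_i=\partial_iF_j$ gives
\[
\partial_k\phi(y)=\int_0^1\Big(F_k+\tau\,\tfrac{d}{d\tau}F_k\Big)\,d\tau=F_k(y).
\]
Hence $F$ is locally a gradient, and the integral over any closed curve contained in a single ball is zero.

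The cleanest route then avoids smoothing the homotopy altogether and works with a grid argument. Uniform continuity of $H$ on $[0,1]^2$ gives a subdivision of the square into small cells, each mapped into one ball $B\subset U$ on which a local potential $\phi_B$ exists. I replace each $H$-image of a cell boundary by the polygonal loop through the images of the cell's corners, and the integral of $F$ over each such loop is zero because it lies inside a ball. Summing over all cells, the interior edges cancel in pairs, since each is traversed once in each direction. The side edges $s\mapsto H(0,s)$ and $s\mapsto H(1,s)$ cancel because they coincide. The top edge is constant and contributes nothing. What remains is the polygonal approximation of $\gamma$, whose integral is therefore zero.

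Finally, $\gamma$ itself must be compared with its polygonal approximation. For each subinterval $[t_k,t_{k+1}]$, the arc $\gamma|_{[t_k,t_{k+1}]}$ and the chord between its endpoints both lie in one ball, so both integrals equal $\phi_B(\gamma(t_{k+1}))-\phi_B(\gamma(t_k))$. This holds for piecewise $C^1$ arcs by the fundamental theorem of calculus, so the two integrals agree and $\int_0^1\ip{F(\gamma(t))}{\gamma'(t)}\,dt=0$.

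The main obstacle is the bookkeeping in the grid cancellation. It requires choosing the mesh, via the Lebesgue number lemma, so that every cell is mapped into a ball on which a local potential exists, and checking that the orientations cancel correctly. The local potential on balls is routine once differentiation under the integral is justified, which follows from $F\in C^1(U)$.
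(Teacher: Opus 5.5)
Your proof is correct, but it takes a genuinely different route from the paper's.

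\textbf{The paper's route.} The paper assumes a (piecewise) $C^1$ based null-homotopy $H(s,t)$ with $H(s,0)=H(s,1)=\gamma(0)$. It differentiates $I(s)=\int_0^1\ip{F(H(s,t))}{\partial_tH(s,t)}\,dt$ under the integral sign and integrates by parts in $t$; the boundary term vanishes because $\partial_sH=0$ at the fixed endpoints. This gives $I'(s)=\int_0^1\ip{(J-J^\top)\partial_sH}{\partial_tH}\,dt=0$, so $I(0)=I(1)=0$.

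That computation is short. It also makes the role of the antisymmetric part $A=\tfrac12(J-J^\top)$ explicit, which fits the paper's Jacobian diagnostic. However, it silently requires a homotopy with continuous mixed partials $\partial_{st}H$, plus care where $\gamma'$ jumps. Obtaining such a homotopy from a merely continuous null-homotopy is an approximation step the paper does not justify.

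\textbf{Your route.} You need only the continuous homotopy that simple connectivity actually provides. The Poincar\'e construction gives local potentials on balls. Uniform continuity (or the Lebesgue number lemma) sends each grid cell into one ball. Interior chords cancel in pairs, the side edges cancel because $H(0,s)=H(1,s)$, and the top edge is constant. Finally, the fundamental theorem of calculus on each smooth piece compares $\gamma$ with its inscribed polygon. This handles piecewise $C^1$ curves with no extra regularity.

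\textbf{Trade-off.} Your price is the orientation bookkeeping. You also lose the explicit formula linking the change in circulation to $J-J^\top$.

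\textbf{Write-up.} The mollify-and-splice plan in your first paragraph is superseded by the grid argument, so drop it. Retained, it would need extra care to keep the loop condition and the constant top edge after smoothing.
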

\begin{proof}
Fix a piecewise $C^1$ closed curve $\gamma$ in $U$. Since $U$ is simply connected, $\gamma$ is homotopic
to the constant loop at $\gamma(0)$ through loops in $U$; concretely, there exists a (piecewise) $C^1$ map
$H:[0,1]\times[0,1]\to U$ such that $H(0,t)=\gamma(t)$ for all $t$, $H(1,t)=\gamma(0)$ for all $t$,
and $H(s,0)=H(s,1)=\gamma(0)$ for all $s$.
Define, for each $s\in[0,1]$, the loop integral
\[
I(s):=\int_0^1 \ip{F(H(s,t))}{\partial_t H(s,t)}\,dt.
\]
We compute $I'(s)$ by differentiating under the integral sign. Using the chain rule and that
$\nabla F = J$, we have
\[
\frac{d}{ds}\,\ip{F(H(s,t))}{\partial_t H(s,t)}
=
\ip{J(H(s,t))\,\partial_s H(s,t)}{\partial_t H(s,t)}
+
\ip{F(H(s,t))}{\partial_{st} H(s,t)}.
\]
Integrating this identity over $t\in[0,1]$ gives
\[
I'(s)=\int_0^1 \ip{J(H)\,\partial_s H}{\partial_t H}\,dt
+\int_0^1 \ip{F(H)}{\partial_{st} H}\,dt,
\]
where for readability $H=H(s,t)$ inside the integrals.
For the second term, integrate by parts in $t$. Since $F\in C^1$ and $H$ is (piecewise) $C^1$ in both
variables, the map $t\mapsto \ip{F(H(s,t))}{\partial_s H(s,t)}$ is (piecewise) $C^1$ and
\[
\partial_t \ip{F(H)}{\partial_s H}
=
\ip{J(H)\,\partial_t H}{\partial_s H}+\ip{F(H)}{\partial_{ts}H}.
\]
Rearranging yields
\[
\ip{F(H)}{\partial_{st}H}
=
\partial_t \ip{F(H)}{\partial_s H}-\ip{J(H)\,\partial_t H}{\partial_s H}.
\]
Integrating from $t=0$ to $t=1$ gives
\[
\int_0^1 \ip{F(H)}{\partial_{st}H}\,dt
=
\Big[\ip{F(H(s,t))}{\partial_s H(s,t)}\Big]_{t=0}^{t=1}
-\int_0^1 \ip{J(H)\,\partial_t H}{\partial_s H}\,dt.
\]
Because $H(s,0)=H(s,1)=\gamma(0)$ for all $s$, the boundary term vanishes:
both endpoints coincide, and $\partial_s H(s,0)=\partial_s H(s,1)=0$ for the fixed endpoint loop,
so $\big[\ip{F(H)}{\partial_s H}\big]_{0}^{1}=0$. Substituting back into the formula for $I'(s)$ yields
\[
I'(s)=\int_0^1 \ip{J(H)\,\partial_s H}{\partial_t H}\,dt
-\int_0^1 \ip{J(H)\,\partial_t H}{\partial_s H}\,dt.
\]
Using $\ip{a}{b}=\ip{b}{a}$ and the identity
$\ip{J\,u}{v}-\ip{J\,v}{u}=\ip{(J-J^\top)u}{v}$, we obtain
\[
I'(s)=\int_0^1 \ip{(J(H)-J(H)^\top)\,\partial_s H}{\partial_t H}\,dt.
\]
Under the hypothesis $\partial_jF_i=\partial_iF_j$, we have $J=J^\top$ everywhere on $U$, hence
$J(H)-J(H)^\top=0$ along the homotopy and therefore $I'(s)=0$ for all $s$.
Thus $I(s)$ is constant in $s$. Evaluating at $s=1$ gives $I(1)=0$ because $H(1,\cdot)$ is the constant loop,
so $I(0)=I(1)=0$. Since $I(0)$ is exactly the integral along $\gamma$, the claim follows.
\end{proof}

Fix a base point $x_0\in U$. For any $x\in U$, choose a piecewise $C^1$ curve $\gamma_{x_0\to x}:[0,1]\to U$
with $\gamma_{x_0\to x}(0)=x_0$ and $\gamma_{x_0\to x}(1)=x$, and define
\[
\phi(x):=\int_0^1 \ip{F(\gamma_{x_0\to x}(t))}{\gamma'_{x_0\to x}(t)}\,dt.
\]
We show $\phi$ is well-defined (independent of the chosen path). If $\gamma_1$ and $\gamma_2$ are two such paths
from $x_0$ to $x$, consider the closed curve obtained by traversing $\gamma_1$ and then traversing $\gamma_2$
backwards; denote this closed loop by $\gamma_1\circ\gamma_2^{-1}$. The integral along $\gamma_2^{-1}$ is the
negative of the integral along $\gamma_2$, so
\[
\int_{\gamma_1\circ\gamma_2^{-1}} \ip{F}{d\ell}
=
\int_{\gamma_1} \ip{F}{d\ell} - \int_{\gamma_2}\ip{F}{d\ell}.
\]
By Lemma~\ref{lem:curlfree_loop}, the left-hand side is $0$, hence
$\int_{\gamma_1}\ip{F}{d\ell}=\int_{\gamma_2}\ip{F}{d\ell}$, proving that $\phi(x)$ does not depend on the choice
of $\gamma_{x_0\to x}$.

It remains to show $\nabla\phi(x)=F(x)$. Fix $x\in U$ and a coordinate direction $e_k$.
Because $U$ is open, there exists $\delta>0$ such that $x+h e_k\in U$ for all $|h|<\delta$.
Using path-independence, we may compute $\phi(x+h e_k)-\phi(x)$ by choosing for $\phi(x+h e_k)$ the path
$\gamma_{x_0\to x}$ followed by the straight segment from $x$ to $x+h e_k$. Parameterizing that segment by
$\sigma_h(t)=x+t e_k$ for $t\in[0,h]$, we obtain
\[
\phi(x+h e_k)-\phi(x)=\int_0^{h} \ip{F(\sigma_h(t))}{\sigma_h'(t)}\,dt
=\int_0^{h} \ip{F(x+t e_k)}{e_k}\,dt
=\int_0^{h} F_k(x+t e_k)\,dt.
\]
Dividing by $h\neq 0$ gives
\[
\frac{\phi(x+h e_k)-\phi(x)}{h}
=\frac{1}{h}\int_0^{h} F_k(x+t e_k)\,dt.
\]
Since $F$ is continuous (indeed $C^1$) on $U$, the integrand $F_k(x+t e_k)$ converges to $F_k(x)$ as $t\to 0$,
and therefore the right-hand side converges to $F_k(x)$ as $h\to 0$ (it is the average value of a continuous
function over an interval shrinking to a point). Hence the partial derivative exists and satisfies
\[
\partial_k \phi(x)=\lim_{h\to 0}\frac{\phi(x+h e_k)-\phi(x)}{h}=F_k(x).
\]
Because this holds for every $k$, we conclude $\nabla\phi(x)=F(x)$ for all $x\in U$.

For uniqueness up to an additive constant, suppose $\tilde\phi$ is another scalar potential on $U$ with
$\nabla\tilde\phi=F$. Then $\nabla(\phi-\tilde\phi)=0$ on $U$. For any $x,y\in U$, choose a piecewise $C^1$ path
$\gamma$ from $x$ to $y$; applying the fundamental theorem of calculus along curves yields
\[
(\phi-\tilde\phi)(y)-(\phi-\tilde\phi)(x)
=\int_0^1 \ip{\nabla(\phi-\tilde\phi)(\gamma(t))}{\gamma'(t)}\,dt
=\int_0^1 \ip{0}{\gamma'(t)}\,dt=0,
\]
so $\phi-\tilde\phi$ is constant on the (path-)connected set $U$. This proves uniqueness up to an additive constant.
\end{proof}

\subsection{Proof of Lemma~\ref{lem:adjoint}}
\begin{proof}
We write $d\mu(x)=\rho(x)\,dx$ and interpret all expressions in the natural sense in which they are assumed
to be well-defined; in particular, $\psi\in H^1(\cX)$ has $\nabla\psi\in L^2(\cX)$, the left-hand side
$\int_\cX U^\top\nabla\psi\,d\mu$ is understood as $\int_\cX \rho\,U\cdot\nabla\psi\,dx$, and the right-hand side
$\int_\cX \psi\,\div_\mu U\,d\mu$ is understood as the $H^{-1}$--$H^1$ dual pairing whenever $\div_\mu U\notin L^2(\mu)$.

Recall the weighted divergence is given (in the distributional sense) by
\[
\div_\mu U \;:=\;\frac{1}{\rho}\,\div(\rho U),
\qquad\text{equivalently}\qquad
\div(\rho U)=\rho\,\div_\mu U.
\]
We first justify the identity for smooth $\psi$; the extension to $\psi\in H^1(\cX)$ will then follow by density.
Fix $\psi\in C^\infty(\cX)$ (periodic if $\cX$ is a torus, or smooth up to the boundary if $\cX$ is a Lipschitz domain),
and set
\[
W(x):=\rho(x)\,U(x).
\]
Then the left-hand side becomes
\[
\int_{\cX} U^\top \nabla\psi\,d\mu
=
\int_{\cX} \rho\,U\cdot\nabla\psi\,dx
=
\int_{\cX} W\cdot\nabla\psi\,dx.
\]
We use the pointwise product rule for the divergence of a scalar times a vector field:
for every $x$ where the derivatives make sense,
\[
\div(\psi W)(x)
=
\sum_{i=1}^d \partial_i\big(\psi(x)\,W_i(x)\big)
=
\sum_{i=1}^d \big(\partial_i\psi(x)\big)\,W_i(x)\;+\;\sum_{i=1}^d \psi(x)\,\partial_i W_i(x)
=
W(x)\cdot\nabla\psi(x)\;+\;\psi(x)\,\div W(x).
\]
Rearranging this identity gives, pointwise,
\[
W\cdot\nabla\psi
=
\div(\psi W)-\psi\,\div W.
\]
Integrating over $\cX$ with respect to $dx$ yields
\[
\int_{\cX} W\cdot\nabla\psi\,dx
=
\int_{\cX} \div(\psi W)\,dx
-
\int_{\cX} \psi\,\div W\,dx.
\]
If $\cX$ is a bounded Lipschitz domain, the divergence theorem applies to the vector field $\psi W$ and gives
\[
\int_{\cX} \div(\psi W)\,dx
=
\int_{\partial\cX} \psi\,W\cdot n\,dS
=
\int_{\partial\cX} \psi\,\rho\,U\cdot n\,dS.
\]
Under the stated boundary condition eliminating boundary terms (e.g.\ $\rho\,U\cdot n=0$ on $\partial\cX$),
the boundary integral is $0$, hence
\[
\int_{\cX} W\cdot\nabla\psi\,dx
=
-\int_{\cX} \psi\,\div W\,dx.
\]
If $\cX$ is a flat torus, it has no boundary; equivalently, if one represents it by a fundamental periodic cell,
the boundary contributions cancel pairwise due to periodicity, and one again has
$\int_{\cX}\div(\psi W)\,dx=0$, so the same identity holds:
\[
\int_{\cX} W\cdot\nabla\psi\,dx
=
-\int_{\cX} \psi\,\div W\,dx.
\]
Substituting back $W=\rho U$ gives
\[
\int_{\cX} \rho\,U\cdot\nabla\psi\,dx
=
-\int_{\cX} \psi\,\div(\rho U)\,dx.
\]
Using $\div(\rho U)=\rho\,\div_\mu U$, we rewrite the right-hand side as
\[
-\int_{\cX} \psi\,\div(\rho U)\,dx
=
-\int_{\cX} \psi\,\rho\,\div_\mu U\,dx
=
-\int_{\cX} \psi\,\div_\mu U\,d\mu,
\]
and therefore, for smooth $\psi$,
\[
\int_{\cX} U^\top\nabla\psi\,d\mu
=
-\int_{\cX} \psi\,\div_\mu U\,d\mu.
\]

To pass from smooth $\psi$ to $\psi\in H^1(\cX)$, take a sequence $\{\psi_k\}$ of smooth functions (periodic on the torus,
or smooth up to the boundary on a Lipschitz domain) such that $\psi_k\to\psi$ in $H^1(\cX)$.
For each $k$ the identity already proved yields
\[
\int_{\cX} \rho\,U\cdot\nabla\psi_k\,dx
=
-\int_{\cX} \psi_k\,\rho\,\div_\mu U\,dx.
\]
Because $\rho$ is essentially bounded above and below, $\nabla\psi_k\to\nabla\psi$ in $L^2(\cX)$ implies
$\nabla\psi_k\to\nabla\psi$ in $L^2(\mu)$, and the left-hand side converges by Cauchy--Schwarz:
\[
\Big|\int_{\cX} U\cdot(\nabla\psi_k-\nabla\psi)\,d\mu\Big|
\le
\|U\|_{L^2(\mu)}\,\|\nabla\psi_k-\nabla\psi\|_{L^2(\mu)}
\longrightarrow 0.
\]
On the right-hand side, the assumption that $\div_\mu U$ is well-defined in $H^{-1}(\cX)$ means the map
$\psi\mapsto \langle \div_\mu U,\psi\rangle_{H^{-1},H^1}$ is continuous with respect to the $H^1$ norm, so
\[
\langle \div_\mu U,\psi_k\rangle_{H^{-1},H^1}\longrightarrow
\langle \div_\mu U,\psi\rangle_{H^{-1},H^1}.
\]
Interpreting $-\int_{\cX}\psi_k\,\div_\mu U\,d\mu$ as this dual pairing when needed, we may pass to the limit and obtain
\[
\int_{\cX} U^\top \nabla\psi\,d\mu
=
-\int_{\cX} \psi\,\div_\mu U\,d\mu
\qquad\text{for all }\psi\in H^1(\cX).
\]
Reading the identity as an $L^2(\mu)$ (or $H^{-1}$--$H^1$) pairing shows precisely that $\nabla$ and $-\div_\mu$
are adjoint under the $L^2(\mu)$ pairing.
\end{proof}

\subsection{Proof of Proposition~\ref{prop:poisson}}

\begin{proof}
Write $d\mu(x)=\rho(x)\,dx$ and recall $M\succ0$ is constant. Using $\nabla_M\Phi=M^{-1}\nabla\Phi$ and
$\|v\|_M^2=v^\top M v$, the objective in Definition~\ref{def:proj} can be written as the quadratic functional
\[
J(\Phi)
:=\int_{\cX}\|F(x)-\nabla_M\Phi(x)\|_M^2\,d\mu(x)
=\int_{\cX}\big(F(x)-M^{-1}\nabla\Phi(x)\big)^\top M\big(F(x)-M^{-1}\nabla\Phi(x)\big)\,d\mu(x).
\]
Since $M$ is constant, expanding the integrand gives
\[
\big(F-M^{-1}\nabla\Phi\big)^\top M\big(F-M^{-1}\nabla\Phi\big)
=F^\top M F-2\,F^\top \nabla\Phi+\nabla\Phi^\top M^{-1}\nabla\Phi,
\]
so $J(\Phi)$ depends on $\Phi$ only through $\nabla\Phi$ and is convex (indeed quadratic) in $\nabla\Phi$.

\begin{lemma}[First variation of $J$]
\label{lem:first_var_J}
For any $\Phi\in H^1(\cX)$ and any direction $\psi\in H^1(\cX)$, the map $\varepsilon\mapsto J(\Phi+\varepsilon\psi)$
is differentiable and
\[
\frac{d}{d\varepsilon}J(\Phi+\varepsilon\psi)\Big|_{\varepsilon=0}
=
2\int_{\cX}\big(M^{-1}\nabla\Phi(x)-F(x)\big)^\top\nabla\psi(x)\,d\mu(x).
\]
\end{lemma}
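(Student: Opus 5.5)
The derivative will come from expanding $J(\Phi+\varepsilon\psi)$ exactly, since $J$ is quadratic in $\nabla\Phi$. Using the pointwise identity established just before the lemma (valid because $M$ is constant and symmetric), the integrand of $J(\Phi+\varepsilon\psi)$ is
\[
F^\top M F-2F^\top(\nabla\Phi+\varepsilon\nabla\psi)+(\nabla\Phi+\varepsilon\nabla\psi)^\top M^{-1}(\nabla\Phi+\varepsilon\nabla\psi).
\]
Grouping by powers of $\varepsilon$, and using symmetry of $M^{-1}$ to merge the two cross terms, gives
\[
J(\Phi+\varepsilon\psi)=J(\Phi)+2\varepsilon\int_{\cX}\big(M^{-1}\nabla\Phi-F\big)^\top\nabla\psi\,d\mu+\varepsilon^2\int_{\cX}\nabla\psi^\top M^{-1}\nabla\psi\,d\mu .
\]
This is a polynomial of degree at most two in $\varepsilon$, so it is differentiable everywhere. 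Its derivative at $\varepsilon=0$ is the linear coefficient, which is exactly the claimed formula.

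The only step needing care is showing that each of the three integrals is finite, so the expansion is legitimate.

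\textbf{Bounded weights.} I will use the standing assumption $0<\rho_{\min}\le\rho\le\rho_{\max}$, as in Lemma~\ref{lem:adjoint}. This makes $L^2(\mu)$ and $L^2(dx)$ equivalent norms, so $\nabla\Phi,\nabla\psi\in L^2(\mu)$ follows from $\Phi,\psi\in H^1(\cX)$.

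\textbf{Finiteness of $J(\Phi)$.} $F$ is square-integrable under $\mu$ by Definition~\ref{def:proj}. Constant $M$ and $M^{-1}$ have bounded operator norms, so $J(\Phi)<\infty$.

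\textbf{Finiteness of the other two terms.} The cross term $\int (M^{-1}\nabla\Phi-F)^\top\nabla\psi\,d\mu$ is bounded by Cauchy--Schwarz in $L^2(\mu)$, namely by $\|M^{-1}\nabla\Phi-F\|_{L^2(\mu)}\|\nabla\psi\|_{L^2(\mu)}$. The quadratic term is bounded by $\|M^{-1}\|\,\|\nabla\psi\|_{L^2(\mu)}^2$.

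With all three integrals finite, splitting the integral of the expanded integrand into these three pieces is justified, and the derivative formula follows.
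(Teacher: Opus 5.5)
Your proposal is correct and is essentially the paper's argument. Both proofs expand $J(\Phi+\varepsilon\psi)$ exactly as a quadratic polynomial in $\varepsilon$ and read off the linear coefficient; the paper expands the residual $(F-M^{-1}\nabla\Phi)-\varepsilon M^{-1}\nabla\psi$ in the $M$-quadratic form, while you substitute into the pre-expanded integrand, and your finiteness checks (via the bounded-density hypothesis of Lemma~\ref{lem:adjoint}, which Proposition~\ref{prop:poisson} implicitly inherits) supply a justification the paper leaves tacit.
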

\begin{proof}
Set $\Phi_\varepsilon:=\Phi+\varepsilon\psi$, so $\nabla\Phi_\varepsilon=\nabla\Phi+\varepsilon\nabla\psi$ and
\[
F-M^{-1}\nabla\Phi_\varepsilon
=
\big(F-M^{-1}\nabla\Phi\big)-\varepsilon\,M^{-1}\nabla\psi.
\]
Substituting into $J$ and using bilinearity yields
\begin{align*}
J(\Phi_\varepsilon)
&=\int_{\cX}\Big(\big(F-M^{-1}\nabla\Phi\big)-\varepsilon\,M^{-1}\nabla\psi\Big)^\top
M\Big(\big(F-M^{-1}\nabla\Phi\big)-\varepsilon\,M^{-1}\nabla\psi\Big)\,d\mu \\
&=\int_{\cX}\big(F-M^{-1}\nabla\Phi\big)^\top M\big(F-M^{-1}\nabla\Phi\big)\,d\mu
-2\varepsilon\int_{\cX}\big(F-M^{-1}\nabla\Phi\big)^\top M\big(M^{-1}\nabla\psi\big)\,d\mu \\
&\qquad\qquad+\varepsilon^2\int_{\cX}\big(M^{-1}\nabla\psi\big)^\top M\big(M^{-1}\nabla\psi\big)\,d\mu.
\end{align*}
Because $M(M^{-1}\nabla\psi)=\nabla\psi$ and $\big(M^{-1}\nabla\psi\big)^\top M\big(M^{-1}\nabla\psi\big)
=\nabla\psi^\top M^{-1}\nabla\psi$, differentiating in $\varepsilon$ gives
\[
\frac{d}{d\varepsilon}J(\Phi_\varepsilon)
=
-2\int_{\cX}\big(F-M^{-1}\nabla\Phi\big)^\top\nabla\psi\,d\mu
+2\varepsilon\int_{\cX}\nabla\psi^\top M^{-1}\nabla\psi\,d\mu,
\]
and evaluating at $\varepsilon=0$ yields
\[
\frac{d}{d\varepsilon}J(\Phi+\varepsilon\psi)\Big|_{\varepsilon=0}
=
-2\int_{\cX}\big(F-M^{-1}\nabla\Phi\big)^\top\nabla\psi\,d\mu
=
2\int_{\cX}\big(M^{-1}\nabla\Phi-F\big)^\top\nabla\psi\,d\mu,
\]
as claimed.
\end{proof}

Let $\Phi^\star$ be any minimizer. Since $J$ is (Gateaux) differentiable and $\Phi^\star$ is an unconstrained minimizer
over the affine space $H^1(\cX)$, the first-order optimality condition is
\[
\frac{d}{d\varepsilon}J(\Phi^\star+\varepsilon\psi)\Big|_{\varepsilon=0}=0
\qquad\forall\,\psi\in H^1(\cX).
\]
Applying Lemma~\ref{lem:first_var_J} at $\Phi^\star$ yields the weak orthogonality condition
\[
\int_{\cX}\big(M^{-1}\nabla\Phi^\star-F\big)^\top\nabla\psi\,d\mu=0
\qquad\forall\,\psi\in H^1(\cX).
\]
Equivalently,
\[
\int_{\cX}\big(M^{-1}\nabla\Phi^\star\big)^\top\nabla\psi\,d\mu
=
\int_{\cX}F^\top\nabla\psi\,d\mu
\qquad\forall\,\psi\in H^1(\cX).
\]
Now apply Lemma~\ref{lem:adjoint} twice (with $U=M^{-1}\nabla\Phi^\star$ and $U=F$): under the stated boundary
assumptions removing boundary terms, for every $\psi\in H^1(\cX)$ we have
\[
\int_{\cX}\big(M^{-1}\nabla\Phi^\star\big)^\top\nabla\psi\,d\mu
=
-\int_{\cX}\psi\,\div_\mu\!\big(M^{-1}\nabla\Phi^\star\big)\,d\mu,
\qquad
\int_{\cX}F^\top\nabla\psi\,d\mu
=
-\int_{\cX}\psi\,\div_\mu F\,d\mu.
\]
Substituting these identities into the weak orthogonality equality gives
\[
-\int_{\cX}\psi\,\div_\mu\!\big(M^{-1}\nabla\Phi^\star\big)\,d\mu
=
-\int_{\cX}\psi\,\div_\mu F\,d\mu
\qquad\forall\,\psi\in H^1(\cX),
\]
hence
\[
\int_{\cX}\psi\Big(\div_\mu\!\big(M^{-1}\nabla\Phi^\star\big)-\div_\mu F\Big)\,d\mu=0
\qquad\forall\,\psi\in H^1(\cX).
\]
This is precisely the statement that
\[
\div_\mu\!\big(M^{-1}\nabla\Phi^\star\big)=\div_\mu F
\quad\text{in }H^{-1}(\cX),
\]
i.e.\ $\cL_{M,\mu}\Phi^\star=\div_\mu F$ in the weak sense, and the projected field is
$\Pi_{\mathrm{grad}}^M F=\nabla_M\Phi^\star=M^{-1}\nabla\Phi^\star$ by Definition~\ref{def:proj}.

For uniqueness up to an additive constant, let $\Phi_1^\star$ and $\Phi_2^\star$ be two minimizers and set
$w:=\Phi_1^\star-\Phi_2^\star\in H^1(\cX)$. Subtracting the two weak equations yields
\[
\div_\mu\!\big(M^{-1}\nabla w\big)=0\quad\text{in }H^{-1}(\cX).
\]
Testing this equality with $\psi=w\in H^1(\cX)$ and applying Lemma~\ref{lem:adjoint} gives
\[
0=\int_{\cX} w\,\div_\mu\!\big(M^{-1}\nabla w\big)\,d\mu
=-\int_{\cX}\big(M^{-1}\nabla w\big)^\top\nabla w\,d\mu
=-\int_{\cX}\nabla w^\top M^{-1}\nabla w\,d\mu.
\]
Therefore
\[
\int_{\cX}\nabla w^\top M^{-1}\nabla w\,d\mu=0.
\]
Since $M^{-1}\succ0$, $\nabla w^\top M^{-1}\nabla w\ge \lambda_{\min}(M^{-1})\|\nabla w\|_2^2$ pointwise, hence
the integral can vanish only if $\nabla w=0$ $\mu$-a.e. on $\cX$. Consequently,
\[
M^{-1}\nabla\Phi_1^\star=M^{-1}\nabla\Phi_2^\star\quad\mu\text{-a.e.},
\]
so the projected field $\nabla_M\Phi^\star$ is unique $\mu$-a.e., while $\Phi^\star$ is determined only up to an
additive constant (more precisely, up to constants on each connected component). One may fix a gauge such as
$\E_\mu[\Phi^\star]=0$ to select a representative.

Finally, the stated boundary condition arises as the natural boundary condition of the variational problem when one
does not impose boundary constraints on $\Phi$. Writing $U:=M^{-1}\nabla\Phi^\star-F$ and assuming sufficient trace
regularity so that classical integration by parts applies, one has
\[
\int_{\cX} U^\top\nabla\psi\,d\mu
=
-\int_{\cX}\psi\,\div_\mu U\,d\mu
+\int_{\partial\cX}\psi\,\rho\,U\cdot n\,dS.
\]
At optimality the first variation gives $\int_{\cX}U^\top\nabla\psi\,d\mu=0$ for all $\psi$, while the interior Euler--Lagrange
equation gives $\div_\mu U=0$ in $\cX$; therefore the boundary term must vanish for arbitrary boundary traces of $\psi$,
forcing
\[
\rho\,U\cdot n=\rho\,(M^{-1}\nabla\Phi^\star-F)\cdot n=0\quad\text{on }\partial\cX,
\]
which is exactly the zero-flux residual condition.
\end{proof}

\subsection{Proof of Lemma~\ref{lem:orth}}
\begin{proof}
By Definition~\ref{def:proj}, pick a minimizer $\Phi^\star\in H^1(\cX)$ and set
\[
G^\star:=\Pi_{\mathrm{grad}}^{M}F=\nabla_M\Phi^\star,
\qquad
R:=F-G^\star=F-\nabla_M\Phi^\star.
\]
Consider the least-squares functional on potentials
\[
J(\Psi):=\E_{x\sim\mu}\big[\|F(x)-\nabla_M\Psi(x)\|_M^2\big],
\qquad \Psi\in H^1(\cX).
\]
Fix an arbitrary test potential $\Phi\in H^1(\cX)$ and form the one-dimensional perturbation
\[
\Psi_\varepsilon := \Phi^\star+\varepsilon\Phi,\qquad \varepsilon\in\R.
\]
Using linearity of $\nabla_M$, we have
\[
\nabla_M\Psi_\varepsilon
=
\nabla_M(\Phi^\star+\varepsilon\Phi)
=
\nabla_M\Phi^\star+\varepsilon\nabla_M\Phi
=
G^\star+\varepsilon\nabla_M\Phi.
\]
Therefore the residual under this perturbation is
\[
F-\nabla_M\Psi_\varepsilon
=
F-(G^\star+\varepsilon\nabla_M\Phi)
=
(F-G^\star)-\varepsilon\nabla_M\Phi
=
R-\varepsilon\nabla_M\Phi.
\]
Now expand the squared $M$-norm using $\|v\|_M^2=\langle v,v\rangle_M$ and bilinearity of the $M$-inner product:
\begin{align*}
\|R-\varepsilon\nabla_M\Phi\|_M^2
&=\langle R-\varepsilon\nabla_M\Phi,\;R-\varepsilon\nabla_M\Phi\rangle_M \\
&=\langle R,R\rangle_M
-\varepsilon\langle R,\nabla_M\Phi\rangle_M
-\varepsilon\langle \nabla_M\Phi,R\rangle_M
+\varepsilon^2\langle \nabla_M\Phi,\nabla_M\Phi\rangle_M.
\end{align*}
Because $\langle\cdot,\cdot\rangle_M$ is symmetric (since $M\succ0$), we have
$\langle R,\nabla_M\Phi\rangle_M=\langle \nabla_M\Phi,R\rangle_M$, hence
\[
\|R-\varepsilon\nabla_M\Phi\|_M^2
=
\|R\|_M^2
-2\varepsilon\,\langle R,\nabla_M\Phi\rangle_M
+\varepsilon^2\,\|\nabla_M\Phi\|_M^2.
\]
Taking expectation with respect to $\mu$ gives
\[
J(\Psi_\varepsilon)
=
\E_\mu[\|R\|_M^2]
-2\varepsilon\,\E_\mu[\langle R,\nabla_M\Phi\rangle_M]
+\varepsilon^2\,\E_\mu[\|\nabla_M\Phi\|_M^2].
\]
Differentiate this scalar quadratic function of $\varepsilon$:
\[
\frac{d}{d\varepsilon}J(\Psi_\varepsilon)
=
-2\,\E_\mu[\langle R,\nabla_M\Phi\rangle_M]
+2\varepsilon\,\E_\mu[\|\nabla_M\Phi\|_M^2].
\]
Since $\Phi^\star$ is a minimizer of $J$ over the whole (affine) space $H^1(\cX)$, the first-order optimality
condition for every direction $\Phi\in H^1(\cX)$ is
\[
\frac{d}{d\varepsilon}J(\Psi_\varepsilon)\Big|_{\varepsilon=0}=0.
\]
Evaluating the derivative at $\varepsilon=0$ yields
\[
-2\,\E_\mu[\langle R,\nabla_M\Phi\rangle_M]=0,
\]
and therefore
\[
\E_{x\sim\mu}\big[\langle R(x),\nabla_M\Phi(x)\rangle_M\big]=0
\qquad\forall\,\Phi\in H^1(\cX).
\]
This is exactly the stated $L^2(M,\mu)$-orthogonality of the residual $R$ to every metric-gradient field
$\nabla_M\Phi$.
\end{proof}
% \begin{proof}
% This is the normal-equation condition for an $L^2(M,\mu)$ orthogonal projection:
% the residual is orthogonal to the subspace $\cG_{\mathrm{grad}}^M$.
% \end{proof}

\subsection{Proof of Corollary~\ref{cor:pythag}}
\begin{proof}
Recall the decomposition $F=G^\star+R$, where $G^\star=\Pi_{\mathrm{grad}}^{M}F=\nabla_M\Phi^\star$ for some minimizer
$\Phi^\star\in H^1(\cX)$ and $R:=F-G^\star$. For each $x\in\cX$, expand the squared $M$-norm using
$\|v\|_M^2=\langle v,v\rangle_M$ and bilinearity of $\langle\cdot,\cdot\rangle_M$:
\begin{align*}
\|F(x)\|_M^2
&=\langle F(x),F(x)\rangle_M \\
&=\langle G^\star(x)+R(x),\,G^\star(x)+R(x)\rangle_M \\
&=\langle G^\star(x),G^\star(x)\rangle_M
+\langle G^\star(x),R(x)\rangle_M
+\langle R(x),G^\star(x)\rangle_M
+\langle R(x),R(x)\rangle_M.
\end{align*}
Because $\langle\cdot,\cdot\rangle_M$ is symmetric, $\langle G^\star,R\rangle_M=\langle R,G^\star\rangle_M$, so
\[
\|F(x)\|_M^2
=
\|G^\star(x)\|_M^2
+2\,\langle R(x),G^\star(x)\rangle_M
+\|R(x)\|_M^2.
\]
Taking expectation with respect to $x\sim\mu$ yields
\[
\E_\mu[\|F(x)\|_M^2]
=
\E_\mu[\|G^\star(x)\|_M^2]
+2\,\E_\mu[\langle R(x),G^\star(x)\rangle_M]
+\E_\mu[\|R(x)\|_M^2].
\]
It remains to show the cross term vanishes. By Lemma~\ref{lem:orth}, for every test potential $\Phi\in H^1(\cX)$,
\[
\E_\mu[\langle R(x),\nabla_M\Phi(x)\rangle_M]=0.
\]
Choose $\Phi=\Phi^\star$ (a minimizer inducing $G^\star=\nabla_M\Phi^\star$). Then
\[
\E_\mu[\langle R(x),G^\star(x)\rangle_M]
=
\E_\mu[\langle R(x),\nabla_M\Phi^\star(x)\rangle_M]
=
0.
\]
Substituting this into the previous expansion gives
\[
\E_{x\sim\mu}\big[\|F(x)\|_M^2\big]
=
\E_{x\sim\mu}\big[\|G^\star(x)\|_M^2\big]
+
\E_{x\sim\mu}\big[\|R(x)\|_M^2\big],
\]
which is the claimed Pythagorean identity.
\end{proof}

\subsection{Proof of Proposition~\ref{prop:circulation_residual}}
\begin{proof}
Because $G^\star=\nabla_M\Phi^\star$, by definition of the metric-gradient we have
\[
\nabla_M\Phi^\star(x)=M^{-1}\nabla\Phi^\star(x),
\qquad\text{so in particular}\qquad
M\,G^\star(x)=\nabla\Phi^\star(x)\ \ \text{for all }x\in\cX.
\]
By the definition of circulation and the $M$-inner product $\langle u,v\rangle_M=u^\top M v$, for any piecewise $C^1$ closed curve $\gamma:[0,1]\to\cX$ we can write
\[
\Circ_{G^\star}(\gamma)
=
\int_0^1 \langle G^\star(\gamma(t)),\dot\gamma(t)\rangle_M\,dt
=
\int_0^1 G^\star(\gamma(t))^\top M\,\dot\gamma(t)\,dt.
\]
Substituting $G^\star(\gamma(t))=M^{-1}\nabla\Phi^\star(\gamma(t))$ into the integrand yields
\[
G^\star(\gamma(t))^\top M\,\dot\gamma(t)
=
\big(M^{-1}\nabla\Phi^\star(\gamma(t))\big)^\top M\,\dot\gamma(t)
=
\nabla\Phi^\star(\gamma(t))^\top \dot\gamma(t),
\]
where we used that $(M^{-1})^\top=M^{-1}$ and $M^{-1}M=I$ because $M$ is symmetric positive definite. Hence
\[
\Circ_{G^\star}(\gamma)
=
\int_0^1 \nabla\Phi^\star(\gamma(t))^\top \dot\gamma(t)\,dt.
\]
Since $\gamma$ is piecewise $C^1$ and $\Phi^\star$ is differentiable, the chain rule applies on each smooth sub-interval of $[0,1]$ and gives
\[
\frac{d}{dt}\Phi^\star(\gamma(t))
=
\nabla\Phi^\star(\gamma(t))^\top \dot\gamma(t)
\quad\text{for all }t\text{ where }\gamma\text{ is differentiable}.
\]
Therefore, integrating over $[0,1]$ (equivalently, summing over the finitely many smooth pieces) and applying the fundamental theorem of calculus yields
\[
\Circ_{G^\star}(\gamma)
=
\int_0^1 \frac{d}{dt}\Phi^\star(\gamma(t))\,dt
=
\Phi^\star(\gamma(1))-\Phi^\star(\gamma(0)).
\]
Because $\gamma$ is closed, $\gamma(1)=\gamma(0)$, so $\Phi^\star(\gamma(1))-\Phi^\star(\gamma(0))=0$ and thus $\Circ_{G^\star}(\gamma)=0$. Now using $F=G^\star+R$ and the linearity of both the inner product and the integral, we have
\[
\Circ_F(\gamma)
=
\int_0^1 \langle F(\gamma(t)),\dot\gamma(t)\rangle_M\,dt
=
\int_0^1 \langle G^\star(\gamma(t))+R(\gamma(t)),\dot\gamma(t)\rangle_M\,dt
=
\Circ_{G^\star}(\gamma)+\Circ_R(\gamma)
=
\Circ_R(\gamma),
\]
which proves $\Circ_F(\gamma)=\Circ_R(\gamma)$.

For the quantitative bound, start from the identity just shown and write
\[
|\Circ_F(\gamma)|
=
|\Circ_R(\gamma)|
=
\left|\int_0^1 \langle R(\gamma(t)),\dot\gamma(t)\rangle_M\,dt\right|
=
\left|\int_0^1 R(\gamma(t))^\top M\,\dot\gamma(t)\,dt\right|.
\]
Let $M^{1/2}$ denote the (symmetric) principal square root of $M$, so that $M^{1/2}M^{1/2}=M$ and $(M^{1/2})^\top=M^{1/2}$. Then for each $t$ where $\gamma$ is differentiable,
\[
R(\gamma(t))^\top M\,\dot\gamma(t)
=
R(\gamma(t))^\top M^{1/2}M^{1/2}\dot\gamma(t)
=
\big(M^{1/2}R(\gamma(t))\big)^\top \big(M^{1/2}\dot\gamma(t)\big).
\]
Applying the Euclidean Cauchy--Schwarz inequality to the last inner product gives the pointwise bound
\[
\big|R(\gamma(t))^\top M\,\dot\gamma(t)\big|
\le
\|M^{1/2}R(\gamma(t))\|_2\,\|M^{1/2}\dot\gamma(t)\|_2
=
\|R(\gamma(t))\|_M\,\|\dot\gamma(t)\|_M,
\]
where we used $\|x\|_M^2=x^\top M x=\|M^{1/2}x\|_2^2$. Using this bound inside the integral and then applying Cauchy--Schwarz in $L^2([0,1])$ yields
\[
|\Circ_F(\gamma)|
\le
\int_0^1 \|R(\gamma(t))\|_M\,\|\dot\gamma(t)\|_M\,dt
\le
\left(\int_0^1 \|R(\gamma(t))\|_M^2\,dt\right)^{1/2}
\left(\int_0^1 \|\dot\gamma(t)\|_M^2\,dt\right)^{1/2}.
\]
Finally, because $t\sim\mathrm{Unif}[0,1]$ satisfies $\E[g(t)]=\int_0^1 g(t)\,dt$ for any integrable $g$, we can rewrite the last display as
\[
|\Circ_F(\gamma)|
\le
\Big(\E_{t\sim \mathrm{Unif}[0,1]} \|R(\gamma(t))\|_M^2\Big)^{1/2}
\cdot
\Big(\E_{t\sim \mathrm{Unif}[0,1]} \|\dot\gamma(t)\|_M^2\Big)^{1/2}.
\]
In the Euclidean-metric case $M=I$ (so $M^{-1}=I$), the second factor satisfies $\|\dot\gamma(t)\|_M=\|\dot\gamma(t)\|_{M^{-1}}$, which matches the stated form.
\end{proof}

\subsection{Proof of Proposition~\ref{prop:residual_bellman_obstruction}}
\begin{proof}
Recall the HPML decomposition
\[
F \;=\; G^\star + R,
\qquad
G^\star := \Pi_{\mathrm{grad}}^{M}F,
\qquad
R := F - G^\star .
\]
By definition of the projection, there exists a scalar potential $\Phi^\star$ such that
\[
G^\star = \nabla_M \Phi^\star = M^{-1}\nabla \Phi^\star .
\]

Assume $R\equiv 0$ on $U$.
Then, by the decomposition above,
\[
F = G^\star = \Pi_{\mathrm{grad}}^{M}F = \nabla_M \Phi^\star
\qquad\text{on }U.
\]
Hence the joint field is integrable on $U$: all agents' first-order update directions are aligned with ascent
of the single scalar potential $\Phi^\star$.

In particular, when HPML is run with exact projected ascent on the feasible set $\cX$,
\[
x_{t+1}=\Proj_{\cX}\!\bigl(x_t+\eta \nabla \Phi^\star(x_t)\bigr),
\]
the Lyapunov-improvement result already established in Theorem~\ref{thm:lyapunov}
(or the corresponding projected-gradient theorem in the main text) applies directly.
Therefore $\Phi^\star$ increases monotonically along non-stationary iterates, and the projected dynamics
cannot sustain a nontrivial cycle.

Assume there exists a piecewise-$C^1$ closed curve $\gamma:[0,1]\to U$ with $\gamma(0)=\gamma(1)$ such that
\[
\oint_\gamma \langle M F(x), dx\rangle
\;=\;
\Circ_F(\gamma)
\;\neq\;0.
\]
We first show that no scalar potential $\Phi$ on $U$ can satisfy $F=\nabla_M\Phi$ on $U$.

Suppose, for contradiction, that such a $\Phi$ exists.
Then along the loop $\gamma$,
\[
\Circ_F(\gamma)
=
\int_0^1 \langle \nabla_M\Phi(\gamma(t)), \dot\gamma(t)\rangle_M\,dt
=
\int_0^1 \langle M\nabla_M\Phi(\gamma(t)), \dot\gamma(t)\rangle\,dt.
\]
Since $M\nabla_M\Phi=\nabla\Phi$, this becomes
\[
\Circ_F(\gamma)
=
\int_0^1 \langle \nabla\Phi(\gamma(t)), \dot\gamma(t)\rangle\,dt
=
\int_0^1 \frac{d}{dt}\Phi(\gamma(t))\,dt
=
\Phi(\gamma(1))-\Phi(\gamma(0))
=
0,
\]
because $\gamma$ is closed.
This contradicts the assumption $\Circ_F(\gamma)\neq 0$.
Hence no scalar potential on $U$ can generate $F$ as a metric-gradient field.

It remains to identify which component carries this circulation.
By Proposition~\ref{prop:circulation_residual}, for every closed curve,
\[
\Circ_{G^\star}(\gamma)=0,
\qquad\text{and hence}\qquad
\Circ_F(\gamma)=\Circ_R(\gamma).
\]
Therefore,
\[
\oint_\gamma \langle M F(x),dx\rangle
=
\oint_\gamma \langle M R(x),dx\rangle
\neq 0.
\]
So any nonzero closed-loop circulation of the original joint field is carried entirely by the residual $R$.

Consequently, $R$ is the obstruction to joint greedy integrability:
when such a loop exists, the simultaneous critic-driven actor updates encoded by $F$ cannot be interpreted as
ascent of a single Bellman-compatible scalar objective on the joint policy space.
Equivalently, the original joint improvement field retains an irreducible circulation component, which rules out a
global potential representation and permits recurrent convention/role-allocation loops.
\end{proof}

\subsection{Proof of Proposition~\ref{prop:residual_rotation_fp}}
\begin{proof}
The equation $R(u,v)=(0,0)$ implies $v=0$ and $u=0$, so $(0,0)$ is the unique fixed point.

For the continuous-time dynamics $(\dot u,\dot v)=(v,-u)$,
\[
\frac{d}{dt}\bigl(u(t)^2+v(t)^2\bigr)
=
2u(t)\dot u(t)+2v(t)\dot v(t)
=
2u(t)v(t)-2v(t)u(t)
=
0.
\]
Hence $u(t)^2+v(t)^2$ is constant along trajectories. Therefore every non-stationary solution stays on the circle determined by its initial radius and never reaches the origin. In particular, each circle
\[
\gamma_r(s)=r(\cos 2\pi s,\sin 2\pi s)
\]
is an invariant orbit.

For the explicit-Euler update,
\[
u_{t+1}=u_t+\eta v_t,
\qquad
v_{t+1}=v_t-\eta u_t.
\]
Thus
\[
u_{t+1}^2+v_{t+1}^2
=
(u_t+\eta v_t)^2+(v_t-\eta u_t)^2
=
(1+\eta^2)(u_t^2+v_t^2).
\]
Equivalently,
\[
\|z_{t+1}\|^2=(1+\eta^2)\|z_t\|^2.
\]
Since $1+\eta^2>1$ for every $\eta>0$, every nonzero iterate moves strictly away from the origin, and iteration yields
\[
\|z_t\|=(1+\eta^2)^{t/2}\|z_0\|.
\]
This proves both claims.
\end{proof}

\subsection{Proof of Lemma~\ref{lem:disc_orth}}
\begin{proof}
Endow $\R^m$ (edge-flow space) with the weighted inner product
$\langle a,b\rangle_W:=a^\top W b$ and weighted norm $\|a\|_W^2:=a^\top W a$ (here $W$ is symmetric positive definite,
typically diagonal with positive weights). Consider the weighted least-squares projection of the flow $\omega$
onto the discrete gradient subspace $\mathrm{Range}(B)$, i.e. define the objective
\[
J(\phi):=\frac12\|\omega-B\phi\|_W^2=\frac12(\omega-B\phi)^\top W(\omega-B\phi),\qquad \phi\in\R^N.
\]
Let $\phi^\star$ be any minimizer of $J$, and define the residual (cyclic component)
\[
\omega_{\mathrm{cyc}}:=\omega-B\phi^\star.
\]
Fix an arbitrary direction $\psi\in\R^N$ and consider the one-dimensional perturbation
$\phi_\varepsilon:=\phi^\star+\varepsilon\psi$. Then
\[
\omega-B\phi_\varepsilon=\omega-B(\phi^\star+\varepsilon\psi)=(\omega-B\phi^\star)-\varepsilon B\psi
=\omega_{\mathrm{cyc}}-\varepsilon B\psi.
\]
Substituting into $J$ and expanding the quadratic form gives
\begin{align*}
J(\phi_\varepsilon)
&=\frac12(\omega_{\mathrm{cyc}}-\varepsilon B\psi)^\top W(\omega_{\mathrm{cyc}}-\varepsilon B\psi) \\
&=\frac12\Big(\omega_{\mathrm{cyc}}^\top W\omega_{\mathrm{cyc}}
-\varepsilon\,\omega_{\mathrm{cyc}}^\top W B\psi
-\varepsilon\,(B\psi)^\top W\omega_{\mathrm{cyc}}
+\varepsilon^2 (B\psi)^\top W(B\psi)\Big).
\end{align*}
Because $W$ is symmetric, $\omega_{\mathrm{cyc}}^\top W B\psi=(B\psi)^\top W\omega_{\mathrm{cyc}}$, hence
\[
J(\phi_\varepsilon)
=\frac12\,\omega_{\mathrm{cyc}}^\top W\omega_{\mathrm{cyc}}
-\varepsilon\,(B\psi)^\top W\omega_{\mathrm{cyc}}
+\frac{\varepsilon^2}{2}\,(B\psi)^\top W(B\psi).
\]
Differentiate with respect to $\varepsilon$:
\[
\frac{d}{d\varepsilon}J(\phi_\varepsilon)
=-(B\psi)^\top W\omega_{\mathrm{cyc}}+\varepsilon\,(B\psi)^\top W(B\psi).
\]
Since $\phi^\star$ is a minimizer of $J$ over the whole space $\R^N$, the first-order optimality condition along
every direction $\psi$ is
\[
\frac{d}{d\varepsilon}J(\phi^\star+\varepsilon\psi)\Big|_{\varepsilon=0}=0.
\]
Evaluating the derivative at $\varepsilon=0$ yields
\[
-(B\psi)^\top W\omega_{\mathrm{cyc}}=0,
\]
i.e.
\[
(B\psi)^\top W\,\omega_{\mathrm{cyc}}=0,\qquad \forall\,\psi\in\R^N.
\]
This is exactly the stated orthogonality: the residual $\omega_{\mathrm{cyc}}=\omega-B\phi^\star$ is orthogonal
(in the weighted inner product $\langle\cdot,\cdot\rangle_W$) to every discrete gradient flow $B\psi$.
Equivalently, since $(B\psi)^\top W\omega_{\mathrm{cyc}}=\psi^\top(B^\top W\omega_{\mathrm{cyc}})$ for all $\psi$,
one may write the normal equation $B^\top W\omega_{\mathrm{cyc}}=0$.
\end{proof}
% \begin{proof}
% This is the normal-equation condition $B^\top W(\omega-B\phi^\star)=0$ implied by \eqref{eq:graph_poisson}.
% \end{proof}

\subsection{Proof of Corollary~\ref{cor:disc_pythag}}
\begin{proof}
Recall that the weighted inner product and norm are
$\langle a,b\rangle_W:=a^\top W b$ and $\|a\|_W^2:=a^\top W a$.
By definition,
\[
\omega_{\mathrm{pot}}=B\phi^\star,
\qquad
\omega_{\mathrm{cyc}}=\omega-\omega_{\mathrm{pot}},
\qquad\text{hence}\qquad
\omega=\omega_{\mathrm{pot}}+\omega_{\mathrm{cyc}}.
\]
Expand the weighted energy using bilinearity and symmetry of $\langle\cdot,\cdot\rangle_W$ (since $W=W^\top$):
\begin{align*}
\|\omega\|_W^2
&=\langle \omega,\omega\rangle_W \\
&=\langle \omega_{\mathrm{pot}}+\omega_{\mathrm{cyc}},\,\omega_{\mathrm{pot}}+\omega_{\mathrm{cyc}}\rangle_W \\
&=\langle \omega_{\mathrm{pot}},\omega_{\mathrm{pot}}\rangle_W
+\langle \omega_{\mathrm{pot}},\omega_{\mathrm{cyc}}\rangle_W
+\langle \omega_{\mathrm{cyc}},\omega_{\mathrm{pot}}\rangle_W
+\langle \omega_{\mathrm{cyc}},\omega_{\mathrm{cyc}}\rangle_W \\
&=\|\omega_{\mathrm{pot}}\|_W^2
+2\,\langle \omega_{\mathrm{pot}},\omega_{\mathrm{cyc}}\rangle_W
+\|\omega_{\mathrm{cyc}}\|_W^2.
\end{align*}
It remains to show the cross term vanishes. By Lemma~\ref{lem:disc_orth}, for every $\psi\in\R^N$,
\[
(B\psi)^\top W\,\omega_{\mathrm{cyc}}=0.
\]
Choose $\psi=\phi^\star$. Then
\[
(B\phi^\star)^\top W\,\omega_{\mathrm{cyc}}=0.
\]
Since $B\phi^\star=\omega_{\mathrm{pot}}$, this is exactly
\[
\langle \omega_{\mathrm{pot}},\omega_{\mathrm{cyc}}\rangle_W
=
\omega_{\mathrm{pot}}^\top W\,\omega_{\mathrm{cyc}}=0.
\]
Substituting $\langle \omega_{\mathrm{pot}},\omega_{\mathrm{cyc}}\rangle_W=0$ into the expansion above yields
\[
\|\omega\|_W^2=\|\omega_{\mathrm{pot}}\|_W^2+\|\omega_{\mathrm{cyc}}\|_W^2,
\]
which is the claimed discrete energy decomposition.
\end{proof}

\subsection{Proof of Lemma~\ref{lem:nonpot_props}}
\begin{proof}
Recall the discrete non-potentiality proxy (Definition~4.5 in the paper):
for $\varepsilon>0$,
\[
\mathrm{NonPot}(\omega)
:=
\frac{\|\omega_{\mathrm{cyc}}\|_W^2}{\|\omega\|_W^2+\varepsilon},
\qquad
\|\eta\|_W^2 := \eta^\top W \eta,
\]
where $\omega_{\mathrm{pot}}=B\phi^\star$ is the weighted least-squares potential flow and
$\omega_{\mathrm{cyc}}:=\omega-\omega_{\mathrm{pot}}$ is the residual.

Since $W\succeq 0$ (in fact $W\succ 0$ in typical constructions), we have $\|\omega_{\mathrm{cyc}}\|_W^2\ge 0$.
Also $\|\omega\|_W^2\ge 0$ and $\varepsilon>0$, hence $\|\omega\|_W^2+\varepsilon>0$.
Therefore
\[
\mathrm{NonPot}(\omega)=\frac{\|\omega_{\mathrm{cyc}}\|_W^2}{\|\omega\|_W^2+\varepsilon}\ge 0.
\]
To show $\mathrm{NonPot}(\omega)\le 1$, we use the discrete energy decomposition
(Corollary~\ref{cor:disc_pythag}):
\[
\|\omega\|_W^2=\|\omega_{\mathrm{pot}}\|_W^2+\|\omega_{\mathrm{cyc}}\|_W^2.
\]
In particular, $\|\omega\|_W^2\ge \|\omega_{\mathrm{cyc}}\|_W^2$, and hence
\[
\|\omega\|_W^2+\varepsilon \;\ge\; \|\omega_{\mathrm{cyc}}\|_W^2.
\]
Dividing both sides by the positive quantity $\|\omega\|_W^2+\varepsilon$ yields
\[
\frac{\|\omega_{\mathrm{cyc}}\|_W^2}{\|\omega\|_W^2+\varepsilon}\le 1,
\]
i.e.\ $\mathrm{NonPot}(\omega)\le 1$.

Next, $\mathrm{NonPot}(\omega)=0$ if and only if $\|\omega_{\mathrm{cyc}}\|_W^2=0$ (because the denominator is
strictly positive). If $W\succ0$, then $\|\omega_{\mathrm{cyc}}\|_W^2=0$ implies $\omega_{\mathrm{cyc}}=0$, hence
\[
\omega=\omega_{\mathrm{pot}}=B\phi^\star\in \mathrm{Range}(B),
\]
so $\omega$ is exactly a potential (gradient/cut-space) flow on the graph. Conversely, if $\omega$ is a potential
flow, i.e.\ $\omega=B\phi$ for some $\phi$, then the best-fit potential flow is $\omega_{\mathrm{pot}}=\omega$ and
$\omega_{\mathrm{cyc}}=0$, hence $\mathrm{NonPot}(\omega)=0$.

The phrase ``$\mathrm{NonPot}(\omega)\approx 0$'' can be made precise as a relative-distance statement.
Since $\omega_{\mathrm{pot}}$ is the weighted least-squares projection onto $\mathrm{Range}(B)$, one has
\[
\mathrm{dist}_W(\omega,\mathrm{Range}(B)):=\inf_{\phi\in\R^N}\|\omega-B\phi\|_W=\|\omega-\omega_{\mathrm{pot}}\|_W
=\|\omega_{\mathrm{cyc}}\|_W.
\]
Thus
\[
\mathrm{NonPot}(\omega)=\frac{\mathrm{dist}_W(\omega,\mathrm{Range}(B))^2}{\|\omega\|_W^2+\varepsilon},
\]
so $\mathrm{NonPot}(\omega)$ is small exactly when the residual distance to the potential-flow subspace is small
relative to the total energy scale $\|\omega\|_W^2+\varepsilon$.

Finally, to formalize ``$\mathrm{NonPot}(\omega)$ increases with the energy of circulation components supported by
cycles'', note that on a graph (a 1-dimensional complex) the residual $\omega_{\mathrm{cyc}}$ is precisely the
cycle-space (circulation) component of $\omega$, and its circulation energy is $\|\omega_{\mathrm{cyc}}\|_W^2$.
Using the decomposition $\|\omega\|_W^2=\|\omega_{\mathrm{pot}}\|_W^2+\|\omega_{\mathrm{cyc}}\|_W^2$, define
\[
p:=\|\omega_{\mathrm{pot}}\|_W^2\ge 0,
\qquad
s:=\|\omega_{\mathrm{cyc}}\|_W^2\ge 0.
\]
Then
\[
\mathrm{NonPot}(\omega)=\frac{s}{p+s+\varepsilon}.
\]
For fixed $p$ and $\varepsilon>0$, the function $f(s):=\frac{s}{p+s+\varepsilon}$ is strictly increasing in $s$:
for $s\ge 0$,
\[
f'(s)=\frac{p+\varepsilon}{(p+s+\varepsilon)^2}>0.
\]
Hence, when the potential component is fixed, increasing the circulation (cycle-space) energy $s=\|\omega_{\mathrm{cyc}}\|_W^2$
strictly increases $\mathrm{NonPot}(\omega)$, matching the intended interpretation.
\end{proof}

\subsection{Proof of Proposition~\ref{prop:metric_discrete}}
\begin{proof}
Fix $x\in\cX$ and a displacement $\Delta x$ such that the segment $x+t\Delta x\in\cX$ for all $t\in[0,1]$.
Define the one-dimensional function
\[
g(t):=\Phi(x+t\Delta x),\qquad t\in[0,1].
\]
Since $\Phi\in C^2(\cX)$, we have $g\in C^2([0,1])$, and by the chain rule,
\[
g'(t)=\langle \nabla\Phi(x+t\Delta x),\,\Delta x\rangle,
\qquad
g''(t)=\Delta x^\top \nabla^2\Phi(x+t\Delta x)\,\Delta x.
\]
Using the second-order Taylor formula with integral remainder for $g$ around $t=0$ evaluated at $t=1$,
\[
g(1)=g(0)+g'(0)\cdot 1+\int_0^1(1-t)\,g''(t)\,dt,
\]
and substituting the expressions for $g(0),g(1),g'(0),g''(t)$ gives
\[
\Phi(x+\Delta x)-\Phi(x)
=
\langle \nabla\Phi(x),\,\Delta x\rangle
+
\int_0^1(1-t)\,\Delta x^\top \nabla^2\Phi(x+t\Delta x)\,\Delta x\,dt.
\]
The remainder term is quadratic in $\Delta x$: indeed, letting
\[
H_*:=\sup_{t\in[0,1]}\big\|\nabla^2\Phi(x+t\Delta x)\big\|_{\mathrm{op}}<\infty
\]
(which is finite by continuity of $\nabla^2\Phi$ on the compact segment), we have for every $t\in[0,1]$,
\[
\big|\Delta x^\top \nabla^2\Phi(x+t\Delta x)\,\Delta x\big|
\le
\big\|\nabla^2\Phi(x+t\Delta x)\big\|_{\mathrm{op}}\;\|\Delta x\|^2
\le
H_*\;\|\Delta x\|^2,
\]
hence
\[
\left|\int_0^1(1-t)\,\Delta x^\top \nabla^2\Phi(x+t\Delta x)\,\Delta x\,dt\right|
\le
\int_0^1(1-t)\,H_*\,\|\Delta x\|^2\,dt
=
\frac{H_*}{2}\,\|\Delta x\|^2
=
O(\|\Delta x\|^2).
\]
Therefore
\[
\Phi(x+\Delta x)-\Phi(x)=\langle \nabla\Phi(x),\,\Delta x\rangle + O(\|\Delta x\|^2).
\]
Since $M\succ0$ is constant and $\nabla_M\Phi:=M^{-1}\nabla\Phi$, we have the identity
\[
M\nabla_M\Phi(x)=M(M^{-1}\nabla\Phi(x))=\nabla\Phi(x),
\]
and thus
\[
\langle \nabla\Phi(x),\,\Delta x\rangle
=
\langle M\nabla_M\Phi(x),\,\Delta x\rangle,
\]
which gives the second displayed equality in the proposition.

For the graph statement, let nodes have positions $\{x^i\}$ and write $\Delta_{ij}:=x^j-x^i$.
Define the edge $1$-form
\[
\omega_{ij}:=\Big\langle M\,\frac{G(x^i)+G(x^j)}{2},\,\Delta_{ij}\Big\rangle.
\]
If $G=\nabla_M\Phi$, then $M G=\nabla\Phi$, so
\[
\omega_{ij}
=
\Big\langle \frac{\nabla\Phi(x^i)+\nabla\Phi(x^j)}{2},\,\Delta_{ij}\Big\rangle.
\]
On the other hand, by the fundamental theorem of calculus along the straight segment
$\gamma(t):=x^i+t\Delta_{ij}$, we have
\[
\Phi(x^j)-\Phi(x^i)=\int_0^1 \frac{d}{dt}\Phi(\gamma(t))\,dt
=\int_0^1 \langle \nabla\Phi(\gamma(t)),\,\gamma'(t)\rangle\,dt
=\int_0^1 \langle \nabla\Phi(x^i+t\Delta_{ij}),\,\Delta_{ij}\rangle\,dt.
\]
To compare this integral to the endpoint average, use the integral representation of the gradient.
For each $t\in[0,1]$, the map $s\mapsto \nabla\Phi(x^i+s\Delta_{ij})$ is $C^1$, and
\[
\nabla\Phi(x^i+t\Delta_{ij})
=
\nabla\Phi(x^i)+\int_0^t \nabla^2\Phi(x^i+s\Delta_{ij})\,\Delta_{ij}\,ds.
\]
Substituting into the line integral yields
\begin{align*}
\Phi(x^j)-\Phi(x^i)
&=\int_0^1 \Big\langle \nabla\Phi(x^i)+\int_0^t \nabla^2\Phi(x^i+s\Delta_{ij})\,\Delta_{ij}\,ds,\ \Delta_{ij}\Big\rangle\,dt \\
&=\int_0^1 \langle \nabla\Phi(x^i),\Delta_{ij}\rangle\,dt
+\int_0^1\int_0^t \Delta_{ij}^\top \nabla^2\Phi(x^i+s\Delta_{ij})\,\Delta_{ij}\,ds\,dt \\
&=\langle \nabla\Phi(x^i),\Delta_{ij}\rangle
+\int_0^1(1-s)\,\Delta_{ij}^\top \nabla^2\Phi(x^i+s\Delta_{ij})\,\Delta_{ij}\,ds,
\end{align*}
where the last line uses Fubini's theorem to swap the order of integration:
$\int_0^1\int_0^t(\cdot)\,ds\,dt=\int_0^1\int_s^1(\cdot)\,dt\,ds=\int_0^1(1-s)(\cdot)\,ds$.
Similarly, using the same identity at $t=1$ gives
\[
\nabla\Phi(x^j)=\nabla\Phi(x^i+\Delta_{ij})
=
\nabla\Phi(x^i)+\int_0^1 \nabla^2\Phi(x^i+s\Delta_{ij})\,\Delta_{ij}\,ds,
\]
hence
\[
\Big\langle \frac{\nabla\Phi(x^i)+\nabla\Phi(x^j)}{2},\,\Delta_{ij}\Big\rangle
=
\langle \nabla\Phi(x^i),\Delta_{ij}\rangle
+\frac12\int_0^1 \Delta_{ij}^\top \nabla^2\Phi(x^i+s\Delta_{ij})\,\Delta_{ij}\,ds.
\]
Subtracting the two expressions yields
\begin{align*}
\omega_{ij}-\big(\Phi(x^j)-\Phi(x^i)\big)
&=
\int_0^1\Big(\frac12-(1-s)\Big)\,\Delta_{ij}^\top \nabla^2\Phi(x^i+s\Delta_{ij})\,\Delta_{ij}\,ds \\
&=
\int_0^1\Big(s-\frac12\Big)\,\Delta_{ij}^\top \nabla^2\Phi(x^i+s\Delta_{ij})\,\Delta_{ij}\,ds.
\end{align*}
Let $H_{ij}:=\sup_{s\in[0,1]}\|\nabla^2\Phi(x^i+s\Delta_{ij})\|_{\mathrm{op}}<\infty$. Then
\[
\big|\omega_{ij}-\big(\Phi(x^j)-\Phi(x^i)\big)\big|
\le
\int_0^1\Big|s-\frac12\Big|\,H_{ij}\,\|\Delta_{ij}\|^2\,ds
=
H_{ij}\,\|\Delta_{ij}\|^2\int_0^1\Big|s-\frac12\Big|\,ds
=
\frac{H_{ij}}{4}\,\|\Delta_{ij}\|^2.
\]
Thus
\[
\omega_{ij}=\Phi(x^j)-\Phi(x^i)+O(\|x^j-x^i\|^2),
\]
which is the claimed second-order (metric-consistent) approximation of potential differences.
\end{proof}

\subsection{Proof of Lemma~\ref{lem:proj_vi}}
\begin{proof}
Fix $y\in\R^d$ and write $p:=P(y)\in\cX$. By definition of the Euclidean projection onto a nonempty closed convex set,
$p$ is the unique minimizer of the strongly convex problem
\[
\min_{x\in\cX}\; f(x):=\frac12\|x-y\|^2.
\]
Let $z\in\cX$ be arbitrary and consider the line segment
\[
x(t):=p+t(z-p),\qquad t\in[0,1].
\]
Because $\cX$ is convex, $x(t)\in\cX$ for all $t\in[0,1]$. Since $p$ minimizes $f$ over $\cX$, the one-variable function
$t\mapsto f(x(t))$ attains its minimum at $t=0$ on $[0,1]$, hence its right derivative at $t=0$ is nonnegative:
\[
\frac{d}{dt} f(x(t))\Big|_{t=0^+}\ge 0.
\]
We compute this derivative explicitly. First,
\[
f(x(t))=\frac12\|x(t)-y\|^2=\frac12\|p+t(z-p)-y\|^2.
\]
Differentiate using $\frac{d}{dt}\frac12\|u(t)\|^2=\langle u(t),u'(t)\rangle$ with
$u(t)=p+t(z-p)-y$ and $u'(t)=z-p$:
\[
\frac{d}{dt} f(x(t))
=
\big\langle p+t(z-p)-y,\;z-p\big\rangle.
\]
Evaluating at $t=0$ gives
\[
\frac{d}{dt} f(x(t))\Big|_{t=0}
=
\langle p-y,\;z-p\rangle.
\]
Combining with the necessary optimality condition yields
\[
\langle p-y,\;z-p\rangle\ge 0.
\]
Multiplying by $-1$ and recalling $p=P(y)$ gives the projection variational inequality
\[
\langle y-P(y),\;z-P(y)\rangle\le 0,\qquad \forall z\in\cX.
\]

For firm nonexpansiveness, take arbitrary $y,y'\in\R^d$ and denote
\[
p:=P(y),\qquad q:=P(y').
\]
Apply the projection variational inequality twice. First, with $(y,z)=(y,q)$ (note $q\in\cX$), we obtain
\[
\langle y-p,\;q-p\rangle\le 0.
\]
Second, with $(y',z)=(y',p)$ (note $p\in\cX$), we obtain
\[
\langle y'-q,\;p-q\rangle\le 0.
\]
Rewrite the second inequality by factoring out a minus sign from $(p-q)=-(q-p)$:
\[
\langle y'-q,\;-(q-p)\rangle\le 0
\qquad\Longleftrightarrow\qquad
\langle y'-q,\;q-p\rangle\ge 0.
\]
Now add this to the first inequality:
\[
\langle y-p,\;q-p\rangle + \langle y'-q,\;q-p\rangle \le 0.
\]
Combine the inner products:
\[
\langle (y-p)+(y'-q),\;q-p\rangle\le 0.
\]
Group terms as $(y-y')-(p-q)$ by noting $(y-p)+(y'-q)=(y-y')-(p-q)$ and $q-p=-(p-q)$:
\[
(y-p)+(y'-q)=y+y'-(p+q)=(y-y')-(p-q),\qquad q-p=-(p-q).
\]
Substitute these identities:
\[
\langle (y-y')-(p-q),\;-(p-q)\rangle\le 0.
\]
Use bilinearity to expand the left-hand side:
\[
\langle (y-y')-(p-q),\;-(p-q)\rangle
=
-\langle y-y',\;p-q\rangle + \langle p-q,\;p-q\rangle.
\]
Therefore the inequality becomes
\[
-\langle y-y',\;p-q\rangle + \|p-q\|^2 \le 0,
\]
which rearranges to the firm nonexpansiveness inequality
\[
\|P(y)-P(y')\|^2=\|p-q\|^2 \le \langle p-q,\;y-y'\rangle
=\langle P(y)-P(y'),\;y-y'\rangle.
\]
Finally, by Cauchy--Schwarz,
\[
\|p-q\|^2 \le \langle p-q,\;y-y'\rangle \le \|p-q\|\,\|y-y'\|.
\]
If $p=q$ then $\|p-q\|=0$ and the desired nonexpansiveness $\|p-q\|\le\|y-y'\|$ is trivial.
If $p\neq q$, divide both sides by the positive quantity $\|p-q\|$ to obtain
\[
\|P(y)-P(y')\|=\|p-q\| \le \|y-y'\|.
\]
\end{proof}

\subsection{Proof of Theorem~\ref{thm:lyapunov}}
\begin{proof}
Let $g_t:=\nabla\Phi(x_t)$ and denote the step $\Delta_t:=x_{t+1}-x_t$.
We will combine (a) the $L$-smooth lower quadratic bound for $\Phi$ and (b) the projection variational inequality.

\begin{lemma}[Two-sided smoothness (quadratic lower bound)]
\label{lem:smooth_lower}
If $\Phi$ has $L$-Lipschitz gradient on $\cX$, then for all $x,y\in\cX$,
\[
\Phi(y)\ge \Phi(x)+\langle \nabla\Phi(x),y-x\rangle-\frac{L}{2}\|y-x\|^2.
\]
\end{lemma}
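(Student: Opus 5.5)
The quadratic lower bound follows from the fundamental theorem of calculus along the segment joining $x$ and $y$, which lies in $\cX$ by convexity (Assumption~\ref{ass:smooth}). Define $g(t):=\Phi(x+t(y-x))$ for $t\in[0,1]$. Since $\Phi$ is differentiable with Lipschitz (hence continuous) gradient, $g$ is $C^1$ with $g'(t)=\langle\nabla\Phi(x+t(y-x)),y-x\rangle$. Therefore
\[
\Phi(y)-\Phi(x)=\int_0^1\langle\nabla\Phi(x+t(y-x)),y-x\rangle\,dt .
\]

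Next subtract the linear term $\langle\nabla\Phi(x),y-x\rangle=\int_0^1\langle\nabla\Phi(x),y-x\rangle\,dt$ to isolate the remainder
\[
\Phi(y)-\Phi(x)-\langle\nabla\Phi(x),y-x\rangle=\int_0^1\langle\nabla\Phi(x+t(y-x))-\nabla\Phi(x),\,y-x\rangle\,dt .
\]
By Cauchy--Schwarz and the $L$-Lipschitz property of $\nabla\Phi$, the integrand is bounded below by $-\|\nabla\Phi(x+t(y-x))-\nabla\Phi(x)\|\,\|y-x\|\ge -Lt\|y-x\|^2$. Integrating $-Lt$ over $[0,1]$ gives $-\tfrac{L}{2}$, so the remainder is at least $-\tfrac{L}{2}\|y-x\|^2$. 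This is exactly the claimed inequality; the symmetric argument also gives the matching upper bound, which is why the lemma is called two-sided.

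There is no real obstacle here. The only points to check are that the segment stays in $\cX$, so that the Lipschitz bound is available along it, and that $g$ is $C^1$, so the fundamental theorem of calculus applies. Convexity handles the first and continuity of $\nabla\Phi$ handles the second.
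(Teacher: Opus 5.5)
Your proof is correct and follows essentially the same route as the paper: the fundamental theorem of calculus along the segment $x+t(y-x)$, adding and subtracting $\nabla\Phi(x)$, then Cauchy--Schwarz with the Lipschitz bound $Lt\|y-x\|^2$, integrated to give $\tfrac{L}{2}\|y-x\|^2$. Your explicit checks that convexity keeps the segment in $\cX$ and that continuity of $\nabla\Phi$ makes $g$ continuously differentiable are justifications the paper leaves implicit.
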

\begin{proof}
Fix $x,y$ and define $h(t):=\Phi(x+t(y-x))$ for $t\in[0,1]$. Then $h$ is differentiable and
\[
h'(t)=\langle \nabla\Phi(x+t(y-x)),\,y-x\rangle.
\]
By the fundamental theorem of calculus,
\[
\Phi(y)-\Phi(x)=h(1)-h(0)=\int_0^1 h'(t)\,dt
=\int_0^1 \langle \nabla\Phi(x+t(y-x)),\,y-x\rangle\,dt.
\]
Add and subtract $\nabla\Phi(x)$ inside the integrand:
\[
\Phi(y)-\Phi(x)
=\int_0^1 \langle \nabla\Phi(x),\,y-x\rangle\,dt
+\int_0^1 \langle \nabla\Phi(x+t(y-x))-\nabla\Phi(x),\,y-x\rangle\,dt.
\]
The first integral equals $\langle \nabla\Phi(x),y-x\rangle$. For the second, apply Cauchy--Schwarz and
$L$-Lipschitzness of the gradient:
\begin{align*}
\left|\left\langle \nabla\Phi(x+t(y-x))-\nabla\Phi(x),\,y-x\right\rangle\right|
&\le \|\nabla\Phi(x+t(y-x))-\nabla\Phi(x)\|\,\|y-x\| \\
&\le L\,\|t(y-x)\|\,\|y-x\| \\
&= L\,t\,\|y-x\|^2.
\end{align*}
Therefore the second integral is bounded below by the negative of its absolute value:
\[
\int_0^1 \langle \nabla\Phi(x+t(y-x))-\nabla\Phi(x),\,y-x\rangle\,dt
\ge -\int_0^1 L\,t\,\|y-x\|^2\,dt
= -\frac{L}{2}\|y-x\|^2.
\]
Combining the identities yields
\[
\Phi(y)-\Phi(x)\ge \langle \nabla\Phi(x),y-x\rangle-\frac{L}{2}\|y-x\|^2,
\]
which is the claimed inequality.
\end{proof}

Apply Lemma~\ref{lem:smooth_lower} with $x=x_t$ and $y=x_{t+1}$:
\[
\Phi(x_{t+1})
\ge
\Phi(x_t)+\langle g_t,\Delta_t\rangle-\frac{L}{2}\|\Delta_t\|^2.
\]
It remains to lower bound $\langle g_t,\Delta_t\rangle$ in terms of $\|\Delta_t\|^2$ using the projection step.
Let
\[
y_t:=x_t+\eta g_t,
\qquad
x_{t+1}=\Proj_{\cX}(y_t).
\]
By Lemma~\ref{lem:proj_vi} (projection variational inequality), for any $z\in\cX$,
\[
\langle y_t-x_{t+1},\,z-x_{t+1}\rangle\le 0.
\]
Choose $z=x_t\in\cX$. Then
\[
\langle y_t-x_{t+1},\,x_t-x_{t+1}\rangle\le 0.
\]
Substitute $y_t=x_t+\eta g_t$ and $\Delta_t=x_{t+1}-x_t$:
\[
y_t-x_{t+1}=(x_t+\eta g_t)-x_{t+1}=-(x_{t+1}-x_t)+\eta g_t=-\Delta_t+\eta g_t,
\qquad
x_t-x_{t+1}=-(x_{t+1}-x_t)=-\Delta_t.
\]
Hence
\[
\langle -\Delta_t+\eta g_t,\,-\Delta_t\rangle\le 0.
\]
Expand the inner product using bilinearity:
\[
\langle -\Delta_t,\,-\Delta_t\rangle+\eta\langle g_t,\,-\Delta_t\rangle\le 0
\quad\Longleftrightarrow\quad
\|\Delta_t\|^2-\eta\langle g_t,\Delta_t\rangle\le 0.
\]
Rearranging yields the key inequality
\[
\langle g_t,\Delta_t\rangle\ge \frac{1}{\eta}\|\Delta_t\|^2.
\]
Substitute this into the smoothness lower bound:
\[
\Phi(x_{t+1})
\ge
\Phi(x_t)+\frac{1}{\eta}\|\Delta_t\|^2-\frac{L}{2}\|\Delta_t\|^2
=
\Phi(x_t)+\Big(\frac{1}{\eta}-\frac{L}{2}\Big)\|\Delta_t\|^2,
\]
which proves the first inequality in the theorem.

Finally, if $\eta\le 1/L$, then $L\le 1/\eta$, hence
\[
\frac{1}{\eta}-\frac{L}{2}\ge \frac{1}{\eta}-\frac{1}{2\eta}=\frac{1}{2\eta}.
\]
Therefore
\[
\Phi(x_{t+1})
\ge
\Phi(x_t)+\frac{1}{2\eta}\,\|x_{t+1}-x_t\|^2,
\]
which is the second inequality.
\end{proof}

\subsection{Proof of Lemma~\ref{lem:gap_residual}}
\begin{proof}
Fix any $x\in\cX$. By definition,
\[
\Gap(x)=\max_{u\in\cX}\,\langle -\nabla\Phi(x)+R(x),\,x-u\rangle.
\]
For an arbitrary $u\in\cX$, expand the inner product by bilinearity:
\[
\langle -\nabla\Phi(x)+R(x),\,x-u\rangle
=
\langle -\nabla\Phi(x),\,x-u\rangle+\langle R(x),\,x-u\rangle.
\]
We upper bound the residual term using Cauchy--Schwarz:
\[
\langle R(x),\,x-u\rangle
\le
\|R(x)\|\,\|x-u\|.
\]
Since $\cX$ has diameter $D:=\max_{a,b\in\cX}\|a-b\|$, in particular for this fixed $x\in\cX$ and any $u\in\cX$,
\[
\|x-u\|\le D.
\]
Therefore, for every $u\in\cX$,
\[
\langle R(x),\,x-u\rangle\le \|R(x)\|\,D.
\]
Substitute this bound back into the expansion:
\[
\langle -\nabla\Phi(x)+R(x),\,x-u\rangle
\le
\langle -\nabla\Phi(x),\,x-u\rangle + D\,\|R(x)\|.
\]
Now take the maximum over $u\in\cX$ on both sides. The right-hand side is a maximum of a function plus a constant
(independent of $u$), so
\begin{align*}
\Gap(x)
=\max_{u\in\cX}\langle -\nabla\Phi(x)+R(x),\,x-u\rangle
&\le
\max_{u\in\cX}\Big(\langle -\nabla\Phi(x),\,x-u\rangle + D\,\|R(x)\|\Big) \\
&=
\max_{u\in\cX}\langle -\nabla\Phi(x),\,x-u\rangle + D\,\|R(x)\| \\
&= \Gap_\Phi(x)+D\,\|R(x)\|.
\end{align*}
Finally, the alternative expression for $\Gap_\Phi(x)$ follows from pulling out the minus sign:
for every $u\in\cX$,
\[
\langle -\nabla\Phi(x),\,x-u\rangle=\langle \nabla\Phi(x),\,u-x\rangle,
\]
hence
\[
\Gap_\Phi(x)
=\max_{u\in\cX}\langle -\nabla\Phi(x),\,x-u\rangle
=\max_{u\in\cX}\langle \nabla\Phi(x),\,u-x\rangle.
\]
\end{proof}

\subsection{Proof of Lemma~\ref{lem:gap_mapping}}

\begin{proof}
Fix $x\in\cX$ and define
\[
y:=x+\eta\nabla\Phi(x),\qquad x^+:=\Proj_{\cX}(y),\qquad \cG_\eta(x):=\frac{1}{\eta}(x^+-x).
\]
Take an arbitrary $u\in\cX$. By the projection variational inequality in Lemma~\ref{lem:proj_vi},
\[
\langle y-x^+,\,u-x^+\rangle\le 0.
\]
Substitute $y=x+\eta\nabla\Phi(x)$:
\[
\langle x+\eta\nabla\Phi(x)-x^+,\,u-x^+\rangle\le 0.
\]
Split the inner product into two terms:
\[
\langle x-x^+,\,u-x^+\rangle+\eta\langle \nabla\Phi(x),\,u-x^+\rangle\le 0,
\]
hence
\[
\eta\langle \nabla\Phi(x),\,u-x^+\rangle \le \langle x^+-x,\,u-x^+\rangle.
\]
Now express $u-x^+$ as $(u-x)+(x-x^+)$:
\[
u-x^+=(u-x)+(x-x^+)= (u-x) - (x^+-x).
\]
Therefore
\[
\langle \nabla\Phi(x),\,u-x^+\rangle
=
\langle \nabla\Phi(x),\,u-x\rangle - \langle \nabla\Phi(x),\,x^+-x\rangle,
\]
and the inequality becomes
\[
\eta\langle \nabla\Phi(x),\,u-x\rangle
\le
\langle x^+-x,\,u-x^+\rangle + \eta\langle \nabla\Phi(x),\,x^+-x\rangle.
\]
We upper bound the last term on the right by using again the projection VI with the choice $z=x\in\cX$.
Lemma~\ref{lem:proj_vi} with $z=x$ gives
\[
\langle y-x^+,\,x-x^+\rangle\le 0.
\]
Substituting $y=x+\eta\nabla\Phi(x)$ and writing $x-x^+=-(x^+-x)$ yields
\[
\langle x+\eta\nabla\Phi(x)-x^+,\,x-x^+\rangle\le 0
\quad\Longleftrightarrow\quad
\langle - (x^+-x)+\eta\nabla\Phi(x),\, -(x^+-x)\rangle\le 0.
\]
Expanding gives
\[
\|x^+-x\|^2-\eta\langle \nabla\Phi(x),\,x^+-x\rangle\le 0,
\]
hence
\[
\eta\langle \nabla\Phi(x),\,x^+-x\rangle \ge \|x^+-x\|^2.
\]
Substitute this lower bound into the previous inequality (note it appears with a $+$ sign on the right):
\[
\eta\langle \nabla\Phi(x),\,u-x\rangle
\le
\langle x^+-x,\,u-x^+\rangle + \eta\langle \nabla\Phi(x),\,x^+-x\rangle
\le
\langle x^+-x,\,u-x^+\rangle + \|x^+-x\|^2.
\]
Now bound the remaining inner product by Cauchy--Schwarz:
\[
\langle x^+-x,\,u-x^+\rangle
\le
\|x^+-x\|\,\|u-x^+\|.
\]
Because both $u$ and $x^+$ lie in $\cX$ and $\cX$ has diameter $D$, we have $\|u-x^+\|\le D$, hence
\[
\langle x^+-x,\,u-x^+\rangle\le D\,\|x^+-x\|.
\]
Combining yields, for every $u\in\cX$,
\[
\eta\langle \nabla\Phi(x),\,u-x\rangle
\le
D\,\|x^+-x\|+\|x^+-x\|^2.
\]
Divide by $\eta>0$ and take the maximum over $u\in\cX$:
\begin{align*}
\Gap_\Phi(x)
=\max_{u\in\cX}\langle \nabla\Phi(x),\,u-x\rangle
&\le
\frac{D}{\eta}\|x^+-x\|+\frac{1}{\eta}\|x^+-x\|^2.
\end{align*}
Rewrite in terms of $\cG_\eta(x)=\frac{1}{\eta}(x^+-x)$:
\[
\frac{D}{\eta}\|x^+-x\|=D\,\|\cG_\eta(x)\|,
\qquad
\frac{1}{\eta}\|x^+-x\|^2=\eta\,\|\cG_\eta(x)\|^2.
\]
It remains to upper bound $\eta\,\|\cG_\eta(x)\|^2$ by $\eta G\,\|\cG_\eta(x)\|$.
By nonexpansiveness of the projection (Lemma~\ref{lem:proj_vi}), for any $a,b\in\R^d$,
$\|\Proj_{\cX}(a)-\Proj_{\cX}(b)\|\le \|a-b\|$. Apply this with $a=x+\eta\nabla\Phi(x)$ and $b=x$:
\[
\|x^+-x\|=\|\Proj_{\cX}(x+\eta\nabla\Phi(x))-\Proj_{\cX}(x)\|
\le
\|(x+\eta\nabla\Phi(x))-x\|
=
\eta\|\nabla\Phi(x)\|
\le
\eta G.
\]
Divide by $\eta$ to get $\|\cG_\eta(x)\|=\frac{1}{\eta}\|x^+-x\|\le G$ and thus
\[
\eta\,\|\cG_\eta(x)\|^2 \le \eta\,G\,\|\cG_\eta(x)\|.
\]
Substituting into the previous bound yields
\[
\Gap_\Phi(x)\le D\,\|\cG_\eta(x)\|+\eta G\,\|\cG_\eta(x)\|=(D+\eta G)\,\|\cG_\eta(x)\|,
\]
which is the claim.
\end{proof}

\subsection{Proof of Theorem~\ref{thm:gap}}
\begin{proof}
Fix any $x\in\cX$. By Assumption~\ref{ass:residual}, the VI operator decomposes as
$F(x)=-\nabla\Phi(x)+R(x)$ with $\|R(x)\|\le\varepsilon$. Applying Lemma~\ref{lem:gap_residual} gives
\[
\Gap(x)
=\max_{u\in\cX}\langle -\nabla\Phi(x)+R(x),\,x-u\rangle
\le
\Gap_\Phi(x)+D\|R(x)\|
\le
\Gap_\Phi(x)+D\varepsilon.
\]
Now apply Lemma~\ref{lem:gap_mapping} to bound $\Gap_\Phi$ in terms of the projected-step mapping
$\cG_\eta(x)=\frac{1}{\eta}(x^+-x)$ with $x^+=\Proj_{\cX}(x+\eta\nabla\Phi(x))$:
\[
\Gap_\Phi(x)\le (D+\eta G)\,\|\cG_\eta(x)\|.
\]
Combining the two inequalities yields, for every $x\in\cX$,
\[
\Gap(x)\le (D+\eta G)\,\|\cG_\eta(x)\|+D\varepsilon.
\]
We now specialize to the iterates $x_{t+1}=\Proj_{\cX}(x_t+\eta\nabla\Phi(x_t))$ with $\eta\le 1/L$.
For each $t$, define $x_t^+:=x_{t+1}$ and $\Delta_t:=x_{t+1}-x_t$, so that
\[
\cG_\eta(x_t)=\frac{1}{\eta}(x_t^+-x_t)=\frac{1}{\eta}\Delta_t,
\qquad\text{and hence}\qquad
\|\Delta_t\|^2=\eta^2\|\cG_\eta(x_t)\|^2.
\]
By Theorem~\ref{thm:lyapunov}, for every $t$,
\[
\Phi(x_{t+1})
\ge
\Phi(x_t)+\frac{1}{2\eta}\,\|x_{t+1}-x_t\|^2
=
\Phi(x_t)+\frac{1}{2\eta}\,\|\Delta_t\|^2
=
\Phi(x_t)+\frac{\eta}{2}\,\|\cG_\eta(x_t)\|^2.
\]
Rearranging gives
\[
\frac{\eta}{2}\,\|\cG_\eta(x_t)\|^2
\le
\Phi(x_{t+1})-\Phi(x_t).
\]
Sum this inequality from $t=0$ to $T-1$ and use telescoping:
\[
\frac{\eta}{2}\sum_{t=0}^{T-1}\|\cG_\eta(x_t)\|^2
\le
\sum_{t=0}^{T-1}\big(\Phi(x_{t+1})-\Phi(x_t)\big)
=
\Phi(x_T)-\Phi(x_0).
\]
By definition of $\Phi_{\max}$ and $\Phi_{\min}$ and the fact that all iterates lie in $\cX$ (since $\Proj_{\cX}(\cdot)\in\cX$),
\[
\Phi(x_T)\le \Phi_{\max},
\qquad
\Phi(x_0)\ge \Phi_{\min},
\qquad\text{so}\qquad
\Phi(x_T)-\Phi(x_0)\le \Phi_{\max}-\Phi_{\min}.
\]
Therefore
\[
\frac{\eta}{2}\sum_{t=0}^{T-1}\|\cG_\eta(x_t)\|^2
\le
\Phi_{\max}-\Phi_{\min},
\qquad\text{equivalently}\qquad
\sum_{t=0}^{T-1}\|\cG_\eta(x_t)\|^2
\le
\frac{2(\Phi_{\max}-\Phi_{\min})}{\eta}.
\]
Divide both sides by $T$:
\[
\frac{1}{T}\sum_{t=0}^{T-1}\|\cG_\eta(x_t)\|^2
\le
\frac{2(\Phi_{\max}-\Phi_{\min})}{T\,\eta}.
\]
Since $\min_{0\le t\le T-1}\|\cG_\eta(x_t)\|^2 \le \frac{1}{T}\sum_{t=0}^{T-1}\|\cG_\eta(x_t)\|^2$, we obtain
\[
\min_{0\le t\le T-1}\|\cG_\eta(x_t)\|^2
\le
\frac{2(\Phi_{\max}-\Phi_{\min})}{T\,\eta},
\]
and taking square roots (all terms are nonnegative) yields
\[
\min_{0\le t\le T-1}\|\cG_\eta(x_t)\|
\le
\sqrt{\frac{2(\Phi_{\max}-\Phi_{\min})}{T\,\eta}}.
\]
Let $\hat t\in\arg\min_{0\le t\le T-1}\|\cG_\eta(x_t)\|$ and $\hat x_T:=x_{\hat t}$. Then
\[
\|\cG_\eta(\hat x_T)\|
=
\|\cG_\eta(x_{\hat t})\|
=
\min_{0\le t\le T-1}\|\cG_\eta(x_t)\|
\le
\sqrt{\frac{2(\Phi_{\max}-\Phi_{\min})}{T\,\eta}}.
\]
Apply the pointwise gap bound $\Gap(x)\le (D+\eta G)\|\cG_\eta(x)\|+D\varepsilon$ at $x=\hat x_T$ to get
\[
\Gap(\hat x_T)
\le
(D+\eta G)\,\|\cG_\eta(\hat x_T)\|+D\varepsilon
\le
(D+\eta G)\sqrt{\frac{2(\Phi_{\max}-\Phi_{\min})}{T\,\eta}}+D\varepsilon,
\]
which is the claimed best-iterate guarantee.
\end{proof}

\subsection{Proof of Theorem~\ref{thm:gap_L2}}
\begin{proof}
Fix any deterministic $x\in\cX$. Using the decomposition $F(x)=-\nabla\Phi(x)+R(x)$ and applying
Lemma~\ref{lem:gap_residual}, we have
\[
\Gap(x)
=\max_{u\in\cX}\langle -\nabla\Phi(x)+R(x),\,x-u\rangle
\le
\Gap_\Phi(x)+D\|R(x)\|.
\]
By Lemma~\ref{lem:gap_mapping}, with $x^+=\Proj_{\cX}(x+\eta\nabla\Phi(x))$ and
$\cG_\eta(x)=\frac{1}{\eta}(x^+-x)$,
\[
\Gap_\Phi(x)\le (D+\eta G)\,\|\cG_\eta(x)\|.
\]
Combining the two bounds yields the pointwise inequality valid for every $x\in\cX$:
\[
\Gap(x)\le (D+\eta G)\,\|\cG_\eta(x)\|+D\,\|R(x)\|.
\]
Now specialize to the iterates generated by the algorithm and the random output $\tilde x_T:=x_{\tilde t}$,
where $\tilde t$ is uniform on $\{0,1,\dots,T-1\}$ and independent of the iterates. Taking expectations gives
\[
\mathbb{E}[\Gap(\tilde x_T)]
\le
(D+\eta G)\,\mathbb{E}\big[\|\cG_\eta(\tilde x_T)\|\big]
+
D\,\mathbb{E}\big[\|R(\tilde x_T)\|\big].
\]
We bound the residual term by Cauchy--Schwarz:
\[
\mathbb{E}\big[\|R(\tilde x_T)\|\big]
\le
\Big(\mathbb{E}\big[\|R(\tilde x_T)\|^2\big]\Big)^{1/2}
=
E_T.
\]
Next bound $\mathbb{E}\big[\|\cG_\eta(\tilde x_T)\|\big]$ using Jensen/Cauchy--Schwarz:
\[
\mathbb{E}\big[\|\cG_\eta(\tilde x_T)\|\big]
\le
\Big(\mathbb{E}\big[\|\cG_\eta(\tilde x_T)\|^2\big]\Big)^{1/2}.
\]
Because $\tilde t$ is uniform and independent, conditioning on the iterates yields
\[
\mathbb{E}\big[\|\cG_\eta(\tilde x_T)\|^2\ \big|\ x_0,\dots,x_{T-1}\big]
=
\frac{1}{T}\sum_{t=0}^{T-1}\|\cG_\eta(x_t)\|^2,
\]
and taking expectation over the iterates gives
\[
\mathbb{E}\big[\|\cG_\eta(\tilde x_T)\|^2\big]
=
\mathbb{E}\left[\frac{1}{T}\sum_{t=0}^{T-1}\|\cG_\eta(x_t)\|^2\right]
=
\frac{1}{T}\sum_{t=0}^{T-1}\mathbb{E}\big[\|\cG_\eta(x_t)\|^2\big].
\]
For the present (exact, deterministic) projected ascent dynamics, the iterates are deterministic given $x_0$,
so the expectations may be dropped; the same bound below holds verbatim even if randomness is present.

By Theorem~\ref{thm:lyapunov}, for each $t$,
\[
\Phi(x_{t+1})\ge \Phi(x_t)+\frac{\eta}{2}\|\cG_\eta(x_t)\|^2,
\]
equivalently,
\[
\frac{\eta}{2}\|\cG_\eta(x_t)\|^2\le \Phi(x_{t+1})-\Phi(x_t).
\]
Summing from $t=0$ to $T-1$ telescopes:
\[
\frac{\eta}{2}\sum_{t=0}^{T-1}\|\cG_\eta(x_t)\|^2
\le
\sum_{t=0}^{T-1}\big(\Phi(x_{t+1})-\Phi(x_t)\big)
=
\Phi(x_T)-\Phi(x_0).
\]
Since all iterates lie in $\cX$, we have $\Phi(x_T)\le \Phi_{\max}$ and $\Phi(x_0)\ge \Phi_{\min}$, hence
\[
\Phi(x_T)-\Phi(x_0)\le \Phi_{\max}-\Phi_{\min}.
\]
Therefore
\[
\sum_{t=0}^{T-1}\|\cG_\eta(x_t)\|^2
\le
\frac{2(\Phi_{\max}-\Phi_{\min})}{\eta},
\qquad\text{and thus}\qquad
\frac{1}{T}\sum_{t=0}^{T-1}\|\cG_\eta(x_t)\|^2
\le
\frac{2(\Phi_{\max}-\Phi_{\min})}{T\,\eta}.
\]
Using $\mathbb{E}\big[\|\cG_\eta(\tilde x_T)\|^2\big]=\frac{1}{T}\sum_{t=0}^{T-1}\|\cG_\eta(x_t)\|^2$, we obtain
\[
\mathbb{E}\big[\|\cG_\eta(\tilde x_T)\|^2\big]
\le
\frac{2(\Phi_{\max}-\Phi_{\min})}{T\,\eta},
\]
and hence
\[
\mathbb{E}\big[\|\cG_\eta(\tilde x_T)\|\big]
\le
\Big(\mathbb{E}\big[\|\cG_\eta(\tilde x_T)\|^2\big]\Big)^{1/2}
\le
\sqrt{\frac{2(\Phi_{\max}-\Phi_{\min})}{T\,\eta}}.
\]
Substituting the two expectation bounds back into the gap inequality yields
\begin{align*}
\mathbb{E}[\Gap(\tilde x_T)]
&\le
(D+\eta G)\,\mathbb{E}\big[\|\cG_\eta(\tilde x_T)\|\big]
+
D\,\mathbb{E}\big[\|R(\tilde x_T)\|\big] \\
&\le
(D+\eta G)\sqrt{\frac{2(\Phi_{\max}-\Phi_{\min})}{T\,\eta}}
+
D\,E_T,
\end{align*}
which is the desired bound.
\end{proof}

\subsection{Proof of Theorem~\ref{thm:inexact}}

\begin{lemma}[Potential-gap control by an inexact projected-step mapping]
\label{lem:gap_mapping_inexact_proof}
Fix any $x\in\cX$ and any direction $g\in\R^d$. Define
\[
x^+ := \Proj_{\cX}(x+\eta g),
\qquad
\widetilde{\cG}_\eta(x) := \frac{1}{\eta}(x^+-x).
\]
Then, for every $u\in\cX$,
\[
\langle g,\,u-x\rangle
\;\le\;
(D+\eta\|g\|)\,\|\widetilde{\cG}_\eta(x)\|,
\]
and consequently
\[
\Gap_\Phi(x)
=\max_{u\in\cX}\langle \nabla\Phi(x),\,u-x\rangle
\;\le\;
(D+\eta\|g\|)\,\|\widetilde{\cG}_\eta(x)\|
+
D\,\|\nabla\Phi(x)-g\|.
\]
\end{lemma}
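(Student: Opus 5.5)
The plan is to replay the argument of Lemma~\ref{lem:gap_mapping} with the exact direction $\nabla\Phi(x)$ replaced by the arbitrary direction $g$. After that, a single Cauchy--Schwarz step moves from $g$ back to $\nabla\Phi(x)$. Set $y:=x+\eta g$ and $x^+=\Proj_{\cX}(y)$.

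First, fix $u\in\cX$ and apply the projection variational inequality of Lemma~\ref{lem:proj_vi} with $z=u$. This gives $\langle x+\eta g-x^+,\,u-x^+\rangle\le 0$, which rearranges to
\[
\eta\langle g,u-x^+\rangle\le\langle x^+-x,\,u-x^+\rangle .
\]
Next, split $u-x^+=(u-x)-(x^+-x)$ to obtain
\[
\eta\langle g,u-x\rangle\le \langle x^+-x,\,u-x^+\rangle+\eta\langle g,x^+-x\rangle .
\]
I would not reuse the lower bound $\eta\langle g,x^+-x\rangle\ge\|x^+-x\|^2$ from the exact case; it points the wrong way for an upper bound. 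Instead I bound the last term directly by Cauchy--Schwarz: $\eta\langle g,x^+-x\rangle\le\eta\|g\|\,\|x^+-x\|$. The first term is at most $D\|x^+-x\|$, since $u,x^+\in\cX$. Dividing by $\eta$ gives
\[
\langle g,u-x\rangle\le (D+\eta\|g\|)\,\|\widetilde{\cG}_\eta(x)\|,
\]
which is the first claim. This argument does not even need the nonexpansiveness step used in Lemma~\ref{lem:gap_mapping}, since $\|g\|$ is carried explicitly in the constant.

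For the consequence, write $\langle\nabla\Phi(x),u-x\rangle=\langle g,u-x\rangle+\langle\nabla\Phi(x)-g,u-x\rangle$. Bound the second term by $\|\nabla\Phi(x)-g\|\,\|u-x\|\le D\|\nabla\Phi(x)-g\|$, using Cauchy--Schwarz and the diameter from Assumption~\ref{ass:smooth}. Adding the first claim and taking the maximum over $u\in\cX$ then yields the stated bound on $\Gap_\Phi(x)$.

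The only delicate point is the sign of the cross term $\eta\langle g,x^+-x\rangle$. In the exact lemma it was absorbed through the identity $\eta\langle\nabla\Phi,x^+-x\rangle\ge\|x^+-x\|^2$ together with nonexpansiveness. Here I would deliberately replace that with the cruder Cauchy--Schwarz bound, which is precisely what produces the factor $(D+\eta\|g\|)$. Everything else is a mechanical rearrangement.
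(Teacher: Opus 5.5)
Your proof is correct and is essentially the paper's argument: the projection inequality with $z=u$ gives the $D\,\|\widetilde{\cG}_\eta(x)\|$ term, a direct Cauchy--Schwarz bound on $\langle g,x^+-x\rangle$ gives the $\eta\|g\|\,\|\widetilde{\cG}_\eta(x)\|$ term, and a final Cauchy--Schwarz step with the diameter passes from $g$ to $\nabla\Phi(x)$; the paper splits $\langle g,u-x\rangle$ before applying the projection inequality rather than after, which makes no difference. Your caution about the cross term is well founded, since the lower bound $\eta\langle g,x^+-x\rangle\ge\|x^+-x\|^2$ cannot be substituted into an upper bound, and the same objection applies to the written proof of Lemma~\ref{lem:gap_mapping}, whose conclusion follows cleanly from your argument with $g=\nabla\Phi(x)$ and $\|\nabla\Phi(x)\|\le G$.
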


\begin{proof}
Fix $u\in\cX$ and write
\[
\langle g,\,u-x\rangle
=
\langle g,\,u-x^+\rangle + \langle g,\,x^+-x\rangle.
\]
We bound the two terms separately.

Because $x^+=\Proj_{\cX}(x+\eta g)$, Lemma~\ref{lem:proj_vi} (the VI inequality)
with $y=x+\eta g$ and $z=u\in\cX$ gives
\[
\langle (x+\eta g)-x^+,\,u-x^+\rangle \le 0.
\]
Expanding and rearranging,
\[
\eta\,\langle g,\,u-x^+\rangle
\le
\langle x^+-x,\,u-x^+\rangle.
\]
Apply Cauchy--Schwarz and the diameter bound $\|u-x^+\|\le D$:
\[
\langle g,\,u-x^+\rangle
\le
\frac{1}{\eta}\|x^+-x\|\,\|u-x^+\|
\le
\frac{D}{\eta}\|x^+-x\|
=
D\,\|\widetilde{\cG}_\eta(x)\|.
\]
For the second term,
\[
\langle g,\,x^+-x\rangle
\le
\|g\|\,\|x^+-x\|
=
\eta\|g\|\,\|\widetilde{\cG}_\eta(x)\|.
\]
Adding the two bounds yields
\[
\langle g,\,u-x\rangle
\le
\big(D+\eta\|g\|\big)\,\|\widetilde{\cG}_\eta(x)\|.
\]
Taking $\max_{u\in\cX}$ on the left gives
\[
\max_{u\in\cX}\langle g,\,u-x\rangle
\le
(D+\eta\|g\|)\,\|\widetilde{\cG}_\eta(x)\|.
\]
Now decompose $\nabla\Phi(x)=g+\big(\nabla\Phi(x)-g\big)$ and use $\|u-x\|\le D$:
\[
\langle \nabla\Phi(x),\,u-x\rangle
=
\langle g,\,u-x\rangle + \langle \nabla\Phi(x)-g,\,u-x\rangle
\le
\langle g,\,u-x\rangle + \|\nabla\Phi(x)-g\|\,\|u-x\|
\le
\langle g,\,u-x\rangle + D\|\nabla\Phi(x)-g\|.
\]
Taking $\max_{u\in\cX}$ and substituting the previous bound yields the claimed inequality for $\Gap_\Phi(x)$.
\end{proof}

\begin{lemma}[One-step Lyapunov lower bound under direction error]
\label{lem:inexact_lyap_step}
Under Assumption~\ref{ass:smooth}, let $x^+=\Proj_{\cX}(x+\eta g)$ with $\eta\le 1/L$.
Assume $\|g-\nabla\Phi(x)\|\le \delta$.
Then
\[
\Phi(x^+)
\;\ge\;
\Phi(x)
+
\frac{1}{4\eta}\,\|x^+-x\|^2
-
\eta\,\delta^2.
\]
Equivalently, with $\widetilde{\cG}_\eta(x)=\frac{1}{\eta}(x^+-x)$,
\[
\Phi(x^+)
\;\ge\;
\Phi(x)
+
\frac{\eta}{4}\,\|\widetilde{\cG}_\eta(x)\|^2
-
\eta\,\delta^2.
\]
\end{lemma}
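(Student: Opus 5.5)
The plan is to copy the proof of Theorem~\ref{thm:lyapunov}, keeping track of the direction error $e:=g-\nabla\Phi(x)$, $\|e\|\le\delta$. Write $\Delta:=x^+-x$.

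First, apply the quadratic lower bound of Lemma~\ref{lem:smooth_lower} with $y=x^+$:
\[
\Phi(x^+)\ge \Phi(x)+\langle\nabla\Phi(x),\Delta\rangle-\tfrac{L}{2}\|\Delta\|^2 .
\]
Next, apply the projection inequality of Lemma~\ref{lem:proj_vi} with $y=x+\eta g$ and $z=x$. Exactly as in the proof of Theorem~\ref{thm:lyapunov}, this gives
\[
\langle g,\Delta\rangle\ge \tfrac1\eta\|\Delta\|^2 .
\]
Substituting $\nabla\Phi(x)=g-e$ then yields
\[
\langle\nabla\Phi(x),\Delta\rangle\ge \tfrac1\eta\|\Delta\|^2-\langle e,\Delta\rangle .
\]

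The only new step is controlling the cross term $\langle e,\Delta\rangle$. I will use Cauchy--Schwarz followed by Young's inequality with weight $\eta$:
\[
\langle e,\Delta\rangle\le \|e\|\,\|\Delta\|\le \eta\|e\|^2+\tfrac{1}{4\eta}\|\Delta\|^2\le \eta\delta^2+\tfrac{1}{4\eta}\|\Delta\|^2 .
\]
Combining the three displays gives
\[
\Phi(x^+)\ge\Phi(x)+\Bigl(\tfrac1\eta-\tfrac L2-\tfrac1{4\eta}\Bigr)\|\Delta\|^2-\eta\delta^2 .
\]
Since $\eta\le 1/L$, we have $L/2\le 1/(2\eta)$, so the coefficient is at least $\tfrac1\eta-\tfrac1{2\eta}-\tfrac1{4\eta}=\tfrac1{4\eta}$. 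This is the first claimed inequality.

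The equivalent form follows by substituting $\|\Delta\|^2=\eta^2\|\widetilde{\cG}_\eta(x)\|^2$, which turns $\tfrac{1}{4\eta}\|\Delta\|^2$ into $\tfrac{\eta}{4}\|\widetilde{\cG}_\eta(x)\|^2$.

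The main obstacle is choosing the Young weight so the constants come out as stated. The split $ab\le \eta a^2+b^2/(4\eta)$ is exactly what produces the pair $\tfrac1{4\eta}$ and $\eta\delta^2$. Any other weighting still yields a bound of the same shape, but with different constants.
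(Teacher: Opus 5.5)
Your proposal is correct and follows the paper's proof essentially step for step. The paper uses the same steps: the smoothness lower bound, the projection inequality with $z=x$ giving $\langle g,\Delta\rangle\ge \frac{1}{\eta}\|\Delta\|^2$, Cauchy--Schwarz plus Young's inequality with the same weighting $\delta\|\Delta\|\le \frac{1}{4\eta}\|\Delta\|^2+\eta\delta^2$, and finally $\eta\le 1/L$ to bound the coefficient below by $\frac{1}{4\eta}$.
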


\begin{proof}
Let $\Delta:=x^+-x$.
$L$-smoothness of $\Phi$ implies the standard lower bound
\[
\Phi(x^+)\ge \Phi(x)+\langle \nabla\Phi(x),\,\Delta\rangle-\frac{L}{2}\|\Delta\|^2.
\]
Write $\nabla\Phi(x)=g-(g-\nabla\Phi(x))$ and denote the direction error $e:=g-\nabla\Phi(x)$,
so $\|e\|\le\delta$ and
\[
\langle \nabla\Phi(x),\,\Delta\rangle
=
\langle g,\,\Delta\rangle-\langle e,\,\Delta\rangle.
\]
Because $x^+=\Proj_{\cX}(x+\eta g)$, Lemma~\ref{lem:proj_vi} with $y=x+\eta g$ and $z=x\in\cX$ gives
\[
\langle (x+\eta g)-x^+,\,x-x^+\rangle \le 0.
\]
Since $x-x^+=-\Delta$, this becomes
\[
\langle \eta g-\Delta,\,-\Delta\rangle\le 0
\qquad\Longleftrightarrow\qquad
\eta\,\langle g,\,\Delta\rangle \ge \|\Delta\|^2
\qquad\Longleftrightarrow\qquad
\langle g,\,\Delta\rangle \ge \frac{1}{\eta}\|\Delta\|^2.
\]
For the error term, Cauchy--Schwarz gives $|\langle e,\Delta\rangle|\le \|e\|\,\|\Delta\|\le \delta\|\Delta\|$.
Apply Young's inequality in the form
\[
\delta\|\Delta\|
\le
\frac{1}{4\eta}\|\Delta\|^2+\eta\,\delta^2,
\]
which holds because $2ab\le \alpha a^2+\alpha^{-1}b^2$ with
$a=\|\Delta\|$, $b=\delta$, and $\alpha=\frac{1}{2\eta}$.
Therefore,
\[
-\langle e,\,\Delta\rangle \ge -\delta\|\Delta\|
\ge
-\frac{1}{4\eta}\|\Delta\|^2-\eta\,\delta^2.
\]
Combine the bounds:
\[
\langle \nabla\Phi(x),\,\Delta\rangle
=
\langle g,\,\Delta\rangle-\langle e,\,\Delta\rangle
\ge
\frac{1}{\eta}\|\Delta\|^2-\frac{1}{4\eta}\|\Delta\|^2-\eta\,\delta^2
=
\frac{3}{4\eta}\|\Delta\|^2-\eta\,\delta^2.
\]
Substitute into the smoothness lower bound:
\[
\Phi(x^+)
\ge
\Phi(x)+\Big(\frac{3}{4\eta}-\frac{L}{2}\Big)\|\Delta\|^2-\eta\,\delta^2.
\]
Using $\eta\le 1/L$ implies $\frac{3}{4\eta}-\frac{L}{2}\ge \frac{3}{4\eta}-\frac{1}{2\eta}=\frac{1}{4\eta}$,
hence
\[
\Phi(x^+)\ge \Phi(x)+\frac{1}{4\eta}\|\Delta\|^2-\eta\,\delta^2,
\]
which is the claim. Replacing $\Delta=\eta\,\widetilde{\cG}_\eta(x)$ yields the equivalent form.
\end{proof}

\begin{proof}[Proof of Theorem~\ref{thm:inexact}]
Let the iterates satisfy $x_{t+1}=\Proj_{\cX}(x_t+\eta g_t)$ and define
\[
\widetilde{\cG}_\eta(x_t):=\frac{1}{\eta}(x_{t+1}-x_t).
\]
Fix any $t$ and apply Lemma~\ref{lem:gap_residual}:
\[
\Gap(x_t)\le \Gap_\Phi(x_t)+D\|R(x_t)\|.
\]
By Assumption~\ref{ass:residual}, $\|R(x_t)\|\le \varepsilon$, hence
\[
\Gap(x_t)\le \Gap_\Phi(x_t)+D\varepsilon.
\]
Now apply Lemma~\ref{lem:gap_mapping_inexact} with $x=x_t$ and $g=g_t$:
\[
\Gap_\Phi(x_t)
\le
\big(D+\eta\|g_t\|\big)\,\|\widetilde{\cG}_\eta(x_t)\|
+
D\,\|\nabla\Phi(x_t)-g_t\|.
\]
By Assumption~\ref{ass:proj-acc}, $\|\nabla\Phi(x_t)-g_t\|\le \delta_t\le \bar\delta$.
Also $\|g_t\|\le \|\nabla\Phi(x_t)\|+\|g_t-\nabla\Phi(x_t)\|\le G+\bar\delta$ by Assumption~\ref{ass:smooth}.
Therefore,
\[
\Gap_\Phi(x_t)
\le
\big(D+\eta(G+\bar\delta)\big)\,\|\widetilde{\cG}_\eta(x_t)\|
+
D\,\bar\delta,
\]
and hence
\[
\Gap(x_t)
\le
\big(D+\eta(G+\bar\delta)\big)\,\|\widetilde{\cG}_\eta(x_t)\|
+
D(\varepsilon+\bar\delta).
\]
Now choose $\hat t\in\arg\min_{0\le t\le T-1}\|\widetilde{\cG}_\eta(x_t)\|$ and set $\hat x_T=x_{\hat t}$.
Then
\[
\Gap(\hat x_T)
\le
\big(D+\eta(G+\bar\delta)\big)\,\min_{0\le t\le T-1}\|\widetilde{\cG}_\eta(x_t)\|
+
D(\varepsilon+\bar\delta).
\]
It remains to bound the best-iterate stationarity measure.
Apply Lemma~\ref{lem:inexact_lyap_step} to each step with $x=x_t$, $x^+=x_{t+1}$, and $\delta=\delta_t$:
\[
\Phi(x_{t+1})
\ge
\Phi(x_t)
+
\frac{\eta}{4}\,\|\widetilde{\cG}_\eta(x_t)\|^2
-
\eta\,\delta_t^2.
\]
Rearrange and sum from $t=0$ to $T-1$:
\[
\frac{\eta}{4}\sum_{t=0}^{T-1}\|\widetilde{\cG}_\eta(x_t)\|^2
\le
\Phi(x_T)-\Phi(x_0)
+
\eta\sum_{t=0}^{T-1}\delta_t^2.
\]
Use $\Phi(x_T)-\Phi(x_0)\le \Phi_{\max}-\Phi_{\min}$ and $\delta_t\le \bar\delta$:
\[
\frac{\eta}{4}\sum_{t=0}^{T-1}\|\widetilde{\cG}_\eta(x_t)\|^2
\le
(\Phi_{\max}-\Phi_{\min})
+
\eta\,T\,\bar\delta^2.
\]
Divide by $T$ and use $\min_t a_t^2\le \frac{1}{T}\sum_t a_t^2$:
\[
\min_{0\le t\le T-1}\|\widetilde{\cG}_\eta(x_t)\|^2
\le
\frac{1}{T}\sum_{t=0}^{T-1}\|\widetilde{\cG}_\eta(x_t)\|^2
\le
\frac{4(\Phi_{\max}-\Phi_{\min})}{T\,\eta}
+
4\,\bar\delta^2.
\]
Taking square-roots and substituting into the earlier gap bound yields the explicit estimate
\[
\Gap(\hat x_T)
\le
\big(D+\eta(G+\bar\delta)\big)\,
\sqrt{\frac{4(\Phi_{\max}-\Phi_{\min})}{T\,\eta}+4\bar\delta^2}
+
D(\varepsilon+\bar\delta).
\]

\medskip
\noindent\textbf{Note on matching the $\eta\bar\delta^2$ form.}
If one strengthens Assumption~\ref{ass:proj-acc} to a stepsize-scaled accuracy
$\|g_t-\nabla\Phi(x_t)\|\le \sqrt{\eta}\,\bar\delta$ (or, equivalently, controls the projection error
to tolerance proportional to $\sqrt{\eta}$), then the same derivation replaces $4\bar\delta^2$
by $4\eta\bar\delta^2$, recovering the commonly stated $\sqrt{\frac{1}{T\eta}+\eta\bar\delta^2}$ scaling
(up to universal constants).
\end{proof}

\newpage

\end{document}